\newtheorem{theorem}{Theorem}
\newtheorem{lemma}{Lemma}
\newtheorem{corollary}{Corollary}
\newtheorem{property}{Property}
\newtheorem{assumption}{Assumption}
\title{Guaranteed Sufficient Decrease for Variance Reduced Stochastic Gradient Descent}
\author{
Fanhua Shang$^{\dag}$, \,Yuanyuan Liu$^{\dag}$, \,James Cheng$^{\dag}$, \,Kelvin K.W. Ng$^{\dag}$, \,Yuichi Yoshida$^{\ddag}$\\
$^{\dag}\;\!$Department of Computer Science and Engineering, The Chinese University of Hong Kong\\
$^{\ddag}\;\!$National Institute of Informatics and Preferred Infrastructure, Inc., Tokyo, Japan}
\begin{document}

\maketitle

\begin{abstract}
In this paper, we propose a novel sufficient decrease technique for variance reduced stochastic gradient descent methods such as SAG, SVRG and SAGA. In order to make sufficient decrease for stochastic optimization, we design a new sufficient decrease criterion, which yields sufficient decrease versions of variance reduction algorithms such as SVRG-SD and SAGA-SD as a byproduct. We introduce a coefficient to scale current iterate and satisfy the sufficient decrease property, which takes the decisions to shrink, expand or move in the opposite direction, and then give two specific update rules of the coefficient for Lasso and ridge regression. Moreover, we analyze the convergence properties of our algorithms for strongly convex problems, which show that both of our algorithms attain linear convergence rates. We also provide the convergence guarantees of our algorithms for non-strongly convex problems. Our experimental results further verify that our algorithms achieve significantly better performance than their counterparts.
\end{abstract}

\section{Introduction}
Stochastic gradient descent (SGD) has been successfully applied to many large scale machine learning problems~\cite{krizhevsky:deep,zhang:sgd}, by virtue of its low per-iteration cost. However, standard SGD estimates the gradient from only one or a few samples, and thus the variance of the stochastic gradient estimator may be large~\cite{johnson:svrg,zhao:prox-smd}, which leads to slow convergence and poor performance. In particular, even under the strongly convex (SC) condition, the convergence rate of standard SGD is only sub-linear. Recently, the convergence rate of SGD has been improved by various variance reduction methods, such as SAG~\cite{roux:sag}, SDCA~\cite{shalev-Shwartz:sdca}, SVRG~\cite{johnson:svrg}, SAGA~\cite{defazio:saga}, Finito~\cite{defazio:Finito}, MISO~\cite{mairal:miso}, and their proximal variants, such as~\cite{schmidt:sag}, \cite{shalev-Shwartz:prox-sdca} and \cite{xiao:prox-svrg}. Under the SC condition, these variance reduced SGD (VR-SGD) algorithms achieve linear convergence rates.

Very recently, many techniques were proposed to further speed up the VR-SGD methods mentioned above. These techniques include importance sampling~\cite{zhao:prox-smd}, exploiting neighborhood structure in the training data to share and re-use information about past stochastic gradients~\cite{hofmann:vrsg}, incorporating Nesterov's acceleration techniques~\cite{lin:vrsg,nitanda:svrg} or momentum acceleration tricks~\cite{zhu:Katyusha}, reducing the number of gradient computations in the early iterations~\cite{zhu:univr,babanezhad:vrsg,zhang:svrg}, and the projection-free property of the conditional gradient method~\cite{hazan:svrf}. \cite{zhu:vrnc} and \cite{reddi:saga} proved that SVRG and SAGA with minor modifications can asymptotically converge to a stationary point for non-convex problems.

So far the two most popular stochastic gradient estimators are the SVRG estimator independently introduced by~\cite{johnson:svrg,zhang:svrg} and the SAGA estimator~\cite{defazio:saga}. All these estimators may be very different from their full gradient counterparts, thus moving in the direction may not decrease the objective function anymore, as stated in~\cite{zhu:Katyusha}. To address this problem, inspired by the success of sufficient decrease methods for deterministic optimization such as~\cite{li:apg,wolfe:sdg}, we propose a novel sufficient decrease technique for a class of VR-SGD methods, including the widely-used SVRG and SAGA methods. Notably, our method with partial sufficient decrease achieves average time complexity per-iteration as low as the original SVRG and SAGA methods. We summarize our main contributions below.

\begin{itemize}
  \item For making sufficient decrease for stochastic optimization, we design a sufficient decrease strategy to further reduce the cost function, in which we also introduce a coefficient to take the decisions to shrink, expand or move in the opposite direction.
  \item We incorporate our sufficient decrease technique, together with momentum acceleration, into two representative SVRG and SAGA algorithms, which lead to SVRG-SD and SAGA-SD. Moreover, we give two specific update rules of the coefficient for Lasso and ridge regression problems as notable examples.
  \item  Moreover, we analyze the convergence properties of SVRG-SD and SAGA-SD, which show that SVRG-SD and SAGA-SD converge linearly for SC objective functions. Unlike most of the VR-SGD methods, we also provide the convergence guarantees of SVRG-SD and SAGA-SD for non-strongly convex (NSC) problems.
  \item Finally, we show by experiments that SVRG-SD and SAGA-SD achieve significantly better performance than SVRG~\cite{johnson:svrg} and SAGA~\cite{defazio:saga}. Compared with the best known stochastic method, Katyusha~\cite{zhu:Katyusha}, our methods also have much better performance in most cases.
\end{itemize}

\section{Preliminary and Related Work}
In this paper, we consider the following composite convex optimization problem:
\begin{equation}\label{equ1}
\min_{x\in\mathbb{R}^{d}} F(x)\stackrel{\rm{def}}{=}f(x)+r(x)=\frac{1}{n}\!\sum\nolimits_{i=1}^{n}\!f_{i}(x)+r(x),
\end{equation}
where $f_{i}(x)\!:\!\mathbb{R}^{d}\!\rightarrow\!\mathbb{R},\,i\!=\!1,\ldots,n$ are the smooth convex component functions, and $r(x)$ is a relatively simple convex (but possibly non-differentiable) function. Recently, many VR-SGD methods~\cite{johnson:svrg,roux:sag,xiao:prox-svrg,zhang:svrg} have been proposed for special cases of \eqref{equ1}. Under smoothness and SC assumptions, and $r(x)\!\equiv\!0$, SAG~\cite{roux:sag} achieves a linear convergence rate. A recent line of work, such as~\cite{johnson:svrg,xiao:prox-svrg}, has been proposed with similar convergence rates to SAG but without the memory requirements for all gradients. SVRG~\cite{johnson:svrg} begins with an initial estimate $\widetilde{x}$, sets $x_{0}\!=\!\widetilde{x}$ and then generates a sequence of $x_{k}$ ($k=1,2,\ldots,m$, where $m$ is usually set to $2n$) using
\begin{eqnarray}\label{equ2}
x_{k}=x_{k-\!1}\!-\eta\left[\nabla\! f_{i_{k}}\!(x_{k-\!1})-\nabla\! f_{i_{k}}\!(\widetilde{x})+\widetilde{\mu}\right],
\end{eqnarray}
where $\eta\!>\!0$ is the step size, $\widetilde{\mu}\!:=\frac{1}{n}\!\sum^{n}_{i=1}\!\nabla\! f_{i}(\widetilde{x})$ is the full gradient at $\widetilde{x}$, and $i_{k}$ is chosen uniformly at random from $\{1,2,\ldots,n\}$. After every $m$ stochastic iterations, we set $\widetilde{x}\!=\!x_{m}$, and reset $k\!=\!1$ and $x_{0}\!=\!\widetilde{x}$. Unfortunately, most of the VR-SGD methods~\cite{defazio:Finito,shalev-Shwartz:sdca,xiao:prox-svrg}, including SVRG, only have convergence guarantee for smooth and SC problems. However, $F(\cdot)$ may be NSC in many machine learning applications, such as Lasso. \cite{defazio:saga} proposed SAGA, a fast incremental gradient method in the spirit of SAG and SVRG, which works for both SC and NSC objective functions, as well as in proximal settings. Its main update rule is formulated as follows:
\vspace{-1mm}
\begin{eqnarray}\label{equ3}
x_{k}=\textrm{prox}^{r}_{\eta}(x_{k-\!1}-\eta\;\![g^{k}_{i_{k}}-g^{k-1}_{i_{k}}+\frac{1}{n}\!\sum^{n}_{j=1}g^{k-1}_{j}]),
\end{eqnarray}
where $g^{k}_{j}$ is updated for all $j\!=\!1,\ldots,n$ as follows: $g^{k}_{j}\!=\!\nabla\!f_{i_{k}}\!(x^{k-\!1})$ if $i_{k}\!=\!j$, and $g^{k}_{j}\!=\!g^{k-\!1}_{j}$ otherwise, and the proximal operator is defined as: $\textrm{prox}^{r}_{\eta}(y)=\arg\min_{x}({1}/{2\eta})\!\cdot\!\|x\!-\!y\|^{2}+r(x)$.

The technique of sufficient decrease (e.g., the well-known line search technique~\cite{more:ls}) has been studied for deterministic optimization~\cite{li:apg,wolfe:sdg}. For example, \cite{li:apg} proposed the following sufficient decrease condition for deterministic optimization:
\begin{equation}\label{equ4}
F(x_{k})\leq F(x_{k-1})-\delta\|y_{k}-x_{k-1}\|^{2},
\end{equation}
where $\delta\!>\!0$ is a small constant, and $y_{k}\!=\!\textrm{prox}^{r}_{\eta_{k}}\!(x_{k-\!1}\!-\!\eta_{k}\nabla\! f(x_{k-\!1}))$.
Similar to the strategy for deterministic optimization, in this paper we design a novel sufficient decrease technique for stochastic optimization, which is used to further reduce the cost function and speed up its convergence.

\section{Variance Reduced SGD with Sufficient Decrease}
In this section, we propose a novel sufficient decrease technique for VR-SGD methods, which include the widely-used SVRG and SAGA methods. To make sufficient decrease for stochastic optimization, we design a sufficient decrease strategy to further reduce the cost function. Then a coefficient $\theta$ is introduced to satisfy the sufficient decrease condition, and takes the decisions to shrink, expand or move in the opposite direction. Moreover, we present two sufficient decrease VR-SGD algorithms with momentum acceleration: SVRG-SD and SAGA-SD. We also give two specific schemes to compute $\theta$ for Lasso and ridge regression.

\subsection{Our Sufficient Decrease Technique}
\label{sec31}
Suppose $x^{s}_{k}\!=\!\textrm{prox}^{r}_{\eta}(x^{s}_{k-\!1}\!-\!\eta[\nabla\! f_{i^{s}_{k}}\!(x^{s}_{k-\!1})\!-\!\nabla\! f_{i^{s}_{k}}\!(\widetilde{x}^{s-\!1})\!+\!\widetilde{\mu}^{s-\!1}])$ for the $s$-th outer-iteration and the $k$-th inner-iteration. Unlike the full gradient method, the stochastic gradient estimator is somewhat inaccurate (i.e., it may be very different from $\nabla\! f(x^{s}_{k-\!1})$), then further moving in the updating direction may not decrease the objective value anymore~\cite{zhu:Katyusha}. That is, $F(x^{s}_{k})$ may be larger than $F(x^{s}_{k-\!1})$ even for very small step length $\eta\!>\!0$. Motivated by this observation, we design a factor $\theta$ to scale the current iterate $x^{s}_{k-\!1}$ for the decrease of the objective function. For SVRG-SD, the cost function with respect to $\theta$ is formulated as follows:
\vspace{-1mm}
\begin{equation}\label{equ5}
\min_{\theta\in\mathbb{R}} F(\theta x^{s}_{k-\!1})\!+\!\frac{\zeta(1\!-\!\theta)^2}{2}\!\|\nabla\!f_{i^{s}_{k}}\!(x^{s}_{k-\!1})-\!\nabla\! f_{i^{s}_{k}}\!(\widetilde{x}^{s-\!1})\|^{2},
\end{equation}
where $\zeta\!=\!\frac{\delta\eta}{1-L\eta}$ is a trade-off parameter between the two terms, $\delta$ is a small constant and set to 0.1. The second term in~\eqref{equ5} involves the norm of the residual of stochastic gradients, and plays the same role as the second term of the right-hand side of~\eqref{equ4}. Different from existing sufficient decrease techniques including \eqref{equ4}, a varying factor $\theta$ instead of a constant is introduced to scale $x^{s}_{k-\!1}$ and the coefficient of the second term of~\eqref{equ5}, and $\theta$ plays a similar role as the step-size parameter optimized via a line-search for deterministic optimization. However, line search techniques have a high computational cost in general, which limits their applicability to stochastic optimization~\cite{mahsereci:sgd}.

For SAGA-SD, the cost function with respect to $\theta$ can be revised by simply replacing $\nabla\! f_{i^{s}_{k}}\!(\widetilde{x}^{s-\!1})$ with $\nabla\! f_{i^{s}_{k}}\!(\phi^{k-\!1}_{i^{s}_{k}})$ defined below. Note that $\theta$ is a scalar and takes the decisions to shrink, expand $x^{s}_{k-\!1}$ or move in the opposite direction of $x^{s}_{k-\!1}$. The detailed schemes to calculate $\theta$ for Lasso and ridge regression are given in Section~\ref{subsec33}. We first present the following sufficient decrease condition in the statistical sense for stochastic optimization.

\begin{property}
\label{prop11}
For given $x^{s}_{k-\!1}$ and the solution $\theta_{k}$ of \eqref{equ5}, then the following inequality holds
\vspace{-1mm}
\begin{equation}\label{equ6}
F(\theta_{k}x^{s}_{k-\!1})\leq F(x^{s}_{k-\!1})-\frac{\zeta(1\!-\!\theta_{k})^2}{2}\|\widetilde{p}_{i^{s}_{k}}\|^{2},
\end{equation}
where $\widetilde{p}_{i^{s}_{k}}\!=\!\nabla\!f_{i^{s}_{k}}\!(x^{s}_{k-\!1})\!-\!\nabla\! f_{i^{s}_{k}}\!(\widetilde{x}^{s-\!1})$ for SVRG-SD.
\end{property}

It is not hard to verify that $F(\cdot)$ can be further decreased via our sufficient decrease technique, when the current iterate $x^{s}_{k-\!1}$ is scaled by the coefficient $\theta_{k}$. Indeed, for the special case when $\theta_{k}\!=\!1$ for some $k$, the inequality in (\ref{equ6}) can be still satisfied. Moreover, Property~\ref{prop11} can be extended for SAGA-SD by setting $\widetilde{p}_{i^{s}_{k}}\!=\!\nabla\!f_{i^{s}_{k}}\!(x^{s}_{k-\!1})\!-\!\nabla\! f_{i^{s}_{k}}\!(\phi^{k-\!1}_{i^{s}_{k}})$, as well as for other VR-SGD algorithms such as SAG and SDCA. Unlike the sufficient decrease condition for deterministic optimization~\cite{li:apg,wolfe:sdg}, $\theta_{k}$ may be a negative number, which means to move in the opposite direction of $x^{s}_{k-\!1}$.

\subsection{Momentum Acceleration}
\label{subsec32}
In this part, we first design the update rule for the key variable $x^{s}_{k}$ with the coefficient $\theta_{k}$ as follows:
\begin{equation}\label{equ7}
x^{s}_{k}=y^{s}_{k}+(1\!-\!\sigma)(\widehat{x}^{s}_{k}-\widehat{x}^{s}_{k-1}),
\end{equation}
where $\widehat{x}^{s}_{k}\!=\!\theta_{k}x^{s}_{k-\!1}$, $\sigma\!\in\![0,1]$ is a constant and can be set to $\sigma\!=\!1/2$ which also works well in practice. In fact, the second term of the right-hand side of~\eqref{equ7} plays a momentum acceleration role as in batch and stochastic optimization~\cite{zhu:Katyusha,nesterov:co,nitanda:svrg}. That is, by introducing this term, we can utilize the previous information of gradients to update $x^{s}_{k}$. In addition, the update rule of $y^{s}_{k}$ is given by
\begin{equation}\label{equ8}
y^{s}_{k}=\textrm{prox}^{r}_{\eta}\!\left(x^{s}_{k-1}-\eta\widetilde{\nabla} f_{i^{s}_{k}}(x^{s}_{k-1})\right),
\end{equation}
where $\eta\!=\!1/(L\alpha)$, $L\!>\!0$ is a Lipschitz constant (see Assumption~\ref{assum1} below), $\alpha\!\geq\! 1$ denotes a constant, and $\widetilde{\nabla}\! f_{i^{s}_{k}}\!(x^{s}_{k-\!1})$ can be the two most popular choices for stochastic gradient estimators: the SVRG estimator~\cite{johnson:svrg,zhang:svrg} for SVRG-SD and the SAGA estimator~\cite{defazio:saga} for SAGA-SD defined as follows:
\begin{eqnarray}\label{equ9}
\widetilde{\nabla}\! f_{i^{s}_{k}}\!(x^{s}_{k-\!1})\!=\!\nabla\! f_{i^{s}_{k}}\!(x^{s}_{k-\!1})\!-\!\nabla\! f_{i^{s}_{k}}\!(\widetilde{x}^{s-\!1})\!+\!\widetilde{\mu}^{s-\!1}\;\textrm{and}\;
\widetilde{\nabla}\! f_{i^{s}_{k}}\!(x^{s}_{k-\!1})\!=\!\nabla\! f_{i^{s}_{k}}\!(x^{s}_{k-\!1})\!-\!\nabla\! f_{i^{s}_{k}}\!(\phi^{k-\!1}_{i^{s}_{k}})\!+\!\overline{\mu}^{s-\!1}\!,
\end{eqnarray}
respectively, where $\widetilde{\mu}^{s-\!1}\!:=\!\nabla\! f(\widetilde{x}^{s-\!1})$. For SAGA-SD, we need to set $\phi^{k}_{i^{s}_{k}}\!=\!x_{k-\!1}$, and store $\nabla\! f_{i^{s}_{k}}\!(\phi^{k}_{i^{s}_{k}})$ in the table similar to~\cite{defazio:saga}. All the other entries in the table remain unchanged, and $\overline{\mu}^{s-\!1}\!:=\frac{1}{n}\!\sum^{n}_{j=1}\!\!\nabla\! f_{j}(\phi^{k-\!1}_{j})$ is the table average. From~\eqref{equ8}, it is clear that our algorithms can tackle  non-smooth problems directly as in~\cite{defazio:saga}.

\begin{algorithm}[t]
\caption{SVRG-SD}
\label{alg1}
\renewcommand{\algorithmicrequire}{\textbf{Input:}}
\renewcommand{\algorithmicensure}{\textbf{Initialize:}}
\renewcommand{\algorithmicoutput}{\textbf{Output:}}
\begin{algorithmic}[1]
\REQUIRE the number of epochs $S$, the number of iterations $m$ per epoch, and step size $\eta$.\\
\ENSURE $\widetilde{x}^{0}$ for Case of SC or $\widetilde{x}^{0}\!=\widetilde{y}^{0}$ for Case of NSC.
\FOR{$s=1,2,\ldots, S$}
\STATE {Case of SC: $\,x^{s}_{0}\!=\widehat{x}^{s}_{0}\!=\widetilde{x}^{s-1}$, \;or\; \textrm{Case of NSC:} $\,x^{s}_{0}\!=\widehat{x}^{s}_{0}\!=\widetilde{y}^{s-1}$;}
\STATE {$\widetilde{\mu}^{s-\!1}=\frac{1}{n}\!\sum^{n}_{i=1}\!\nabla\!f_{i}(\widetilde{x}^{s-\!1})$;}
\FOR{$k=1,2,\ldots,m$}
\STATE {Pick $i^{s}_{k}$ uniformly at random from $\{1,2,\ldots,n\}$;}
\STATE {Compute $\widetilde{\nabla}\!f_{i^{s}_{k}}\!(x^{s}_{k-\!1})$ and $y^{s}_{k}$ by \eqref{equ9} and \eqref{equ8};}
\STATE {Update $\theta_{k}$ and $x^{s}_{k}$ by \eqref{equ5} and \eqref{equ7};}
\ENDFOR
\STATE {$\widetilde{x}^{s}=\frac{1}{m}\!\sum^{m}_{k=1}\!\widehat{x}^{s}_{k}$;\quad \textrm{Case of NSC:} $\widetilde{y}^{s}=[x^{s}_{m}\!-\!(1\!-\!\sigma)\widehat{x}^{s}_{m}]/\sigma$;}
\ENDFOR
\OUTPUT $\overline{x}\!=\!\widetilde{x}^{S}$ (SC) \,or \,$\overline{x}\!=\!\widetilde{x}^{S}$ if $F(\widetilde{x}^{S})\leq F(\frac{1}{S}\!\sum^{S}_{s=1}\!\widetilde{x}^{s})$ and $\overline{x}\!=\!\frac{1}{S}\!\sum^{S}_{s=1}\!\widetilde{x}^{s}$ otherwise (NSC)
\end{algorithmic}
\end{algorithm}

In summary, we propose a novel variant of SVRG with sufficient decrease (SVRG-SD) to solve both SC and NSC problems, as outlined in \textbf{Algorithm} \ref{alg1}. For the case of SC, $x^{s}_{0}\!=\!\widehat{x}^{s}_{0}\!=\!\widetilde{x}^{s-\!1}$, while $x^{s}_{0}\!=\!\widehat{x}^{s}_{0}\!=\!\widetilde{y}^{s-\!1}$ and $\widetilde{y}^{s}\!=\![x^{s}_{m}\!-\!(1\!-\!\sigma)\widehat{x}^{s}_{m}]/\sigma$ for the case of NSC. Similarly, we also present a novel variant of SAGA with sufficient decrease (SAGA-SD), as shown in the Supplementary Material. The main differences between them are the stochastic gradient estimators in \eqref{equ9}, and the update rule of the sufficient decrease coefficient in \eqref{equ5}.

Note that when $\theta_{k}\!\equiv\!1$ and $\sigma\!=\!1$, the proposed SVRG-SD and SAGA-SD degenerate to the original SVRG or its proximal variant (Prox-SVRG~\cite{xiao:prox-svrg}) and SAGA~\cite{defazio:saga}, respectively. In this sense, SVRG, Prox-SVRG and SAGA can be seen as the special cases of the proposed algorithms. Like SVRG and SVRG-SD, SAGA-SD is also a multi-stage algorithm, whereas SAGA is a single-stage algorithm.

\subsection{Coefficients for Lasso and Ridge Regression}
\label{subsec33}
In this part, we give the closed-form solutions of the coefficient $\theta$ for Lasso and ridge regression problems. For Lasso problems and given $x^{s}_{k-\!1}$, we have $F(\theta x^{s}_{k-\!1})\!=\!\frac{1}{2n}\!\sum^{n}_{i=1}\!(\theta a^{T}_{i}\!x^{s}_{k-\!1}\!-\!b_{i})^{2}\!+\!\lambda\|\theta x^{s}_{k-\!1}\|_{1}$. The closed-form solution of \eqref{equ5} for SVRG-SD can be obtained as follows:
\begin{equation}\label{equ10}
\theta_{k}=\mathcal{S}_{\tau}\!\left(\frac{\frac{1}{n}b^{T}\!Ax^{s}_{k-\!1}+\zeta\|\widetilde{p}_{i^{s}_{k}}\|^{2}}{\|Ax^{s}_{k-\!1}\|^{2}/n+\zeta\|\widetilde{p}_{i^{s}_{k}}\|^{2}}\right),
\end{equation}
where $A\!=\![a_{1},\ldots,a_{n}]^{T}\!$ is the data matrix containing $n$ data samples, $b\!=\![b_{1},\ldots,b_{n}]^{T}\!$, and $\mathcal{S}_{\tau}$ is the so-called soft thresholding operator~\cite{donoho:st} with the following threshold,
\begin{equation*}
\tau\!=\!\frac{\lambda\|x^{s}_{k-\!1}\|_{1}}{\|Ax^{s}_{k-\!1}\|^{2}/n+\zeta\|\widetilde{p}_{i^{s}_{k}}\|^{2}}.
\end{equation*}

For ridge regression problems, and $F(\theta x^{s}_{k-\!1})\!=\!\frac{1}{2n}\!\sum^{n}_{i=1}\!(\theta a^{T}_{i}\!x^{s}_{k-\!1}\!-\!b_{i})^{2}\!+\!\frac{\lambda}{2}\|\theta x^{s}_{k-\!1}\|^{2}$, the closed-form solution of \eqref{equ5} for SVRG-SD is given by
\begin{equation}\label{equ11}
\theta_{k}=\frac{\frac{1}{n}b^{T}\!Ax^{s}_{k-\!1}+\zeta\|\widetilde{p}_{i^{s}_{k}}\|^{2}}{\|Ax^{s}_{k-\!1}\|^{2}/n+\zeta\|\widetilde{p}_{i^{s}_{k}}\|^{2}+\lambda\|x^{s}_{k-\!1}\|^{2}}.
\end{equation}

In the same ways as in~\eqref{equ10} and~\eqref{equ11}, we can compute the coefficient $\theta_{k}$ of SAGA-SD for Lasso and ridge regression problems, and revise the update rules in~\eqref{equ10} and~\eqref{equ11} by simply replacing $\widetilde{p}_{i^{s}_{k}}\!=\!\nabla\!f_{i^{s}_{k}}\!(x^{s}_{k-\!1})\!-\!\nabla\! f_{i^{s}_{k}}\!(\widetilde{x}^{s-\!1})$ with $\widetilde{p}_{i^{s}_{k}}\!=\!\nabla\!f_{i^{s}_{k}}\!(x^{s}_{k-\!1})\!-\!\nabla\! f_{i^{s}_{k}}\!(\phi^{k-\!1}_{i^{s}_{k}})$. We can also derive the update rule of the coefficient for other loss functions using their approximations, e.g., \cite{bach:sgd} for logistic regression.

\subsection{Efficient Implementation}
\label{subsec35}
Both \eqref{equ10} and \eqref{equ11} require the calculation of $b^{T}\!A$, thus we need to precompute and save $b^{T}\!A$ in the initial stage. To further reduce the computational complexity of $\|Ax^{s}_{k-\!1}\|^{2}$ in \eqref{equ10} and \eqref{equ11}, we use the fast partial singular value decomposition to obtain the best rank-$r$ approximation $U_{r}S_{r}V^{T}_{r}$ to $A$ and save $S_{r}V^{T}_{r}$. Then $\|Ax^{s}_{k-\!1}\|\!\approx\!\|S_{r}V^{T}_{r}x^{s}_{k-\!1}\|$. In practice, e.g., in our experiments, $r$ can be set to a small number to capture 99.5\% of the spectral energy of the data matrix $A$, e.g., $r\!=\!10$ for the Covtype data set, similar to inexact line search methods~\cite{more:ls} for deterministic optimization.

The time complexity of each inter-iteration in the proposed SVRG-SD and SAGA-SD with full sufficient decrease is $O(rd)$, which is a little higher than SVRG and SAGA. In fact, we can just randomly select only a small fraction (e.g., $1/10^3$) of stochastic gradient iterations in each epoch to update with sufficient decrease, while the remainder of iterations without sufficient decrease, i.e., $\widehat{x}^{s}_{k}\!=\!x^{s}_{k-\!1}$. Let $m_{1}$ be the number of iterations with our sufficient decrease technique in each epoch. By fixing $m_{1}\!=\!\lfloor m/10^3\rfloor$ and thus without increasing parameters tuning difficulties, SVRG-SD and SAGA-SD\footnote{Note that SVRG-SD and SAGA-SD with partial sufficient decrease possess the similar convergence properties as SVRG-SD and SAGA-SD with full sufficient decrease because Property~\ref{prop11} still holds when $\theta_{k}\!=\!1$.} can always converge much faster than their counterparts: SVRG and SAGA, as shown in Figure~\ref{fig1}. It is easy to see that our algorithms are very robust with respect to the choice of $m_{1}$, and achieve average time complexity per-iteration as low as the original SVRG and SAGA. Thus, we mainly consider SVRG-SD and SAGA-SD with partial sufficient decrease.

\begin{figure}[t]
\centering
\includegraphics[width=0.245\columnwidth]{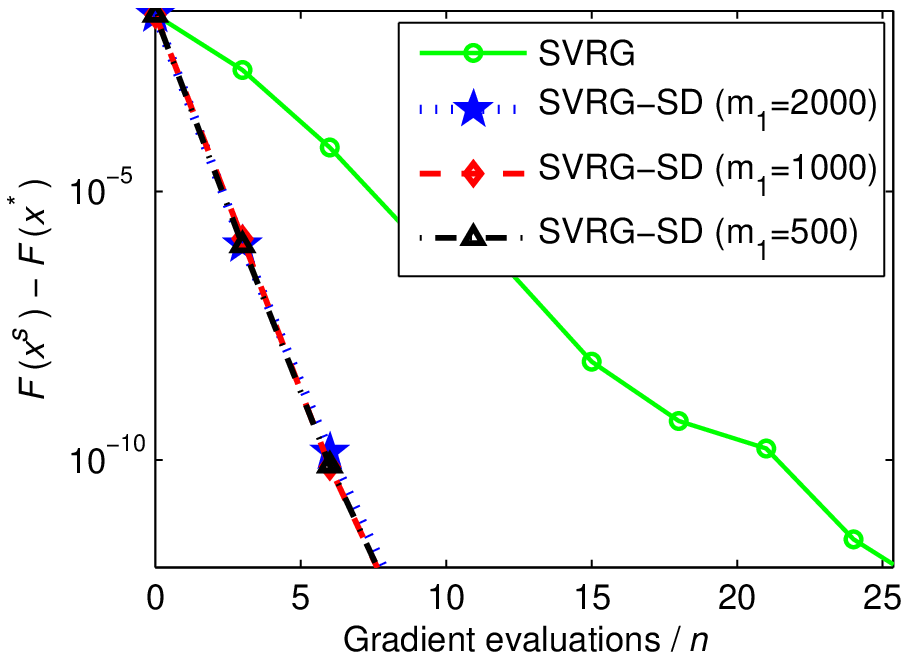}
\includegraphics[width=0.245\columnwidth]{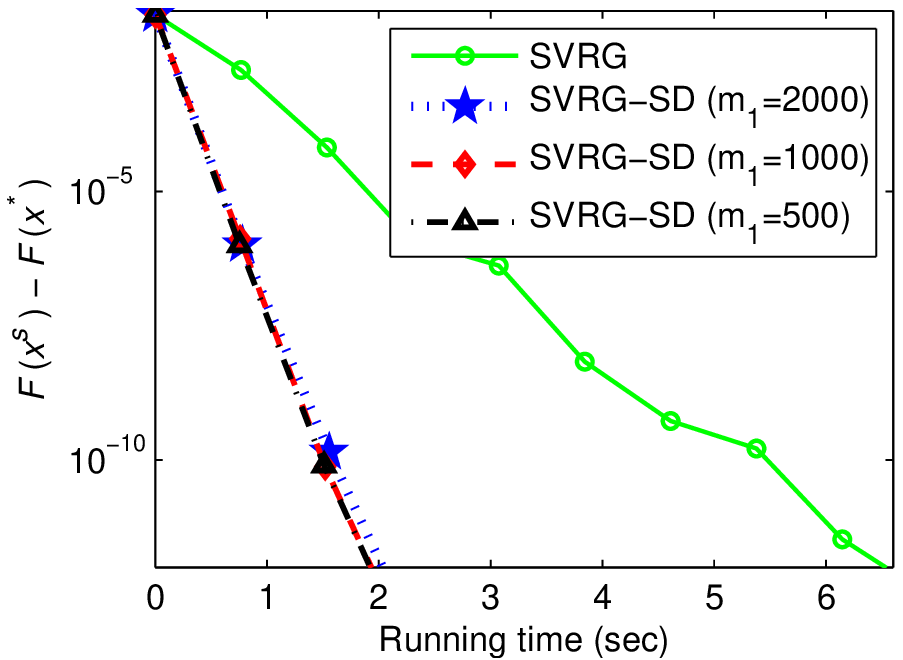}
\includegraphics[width=0.245\columnwidth]{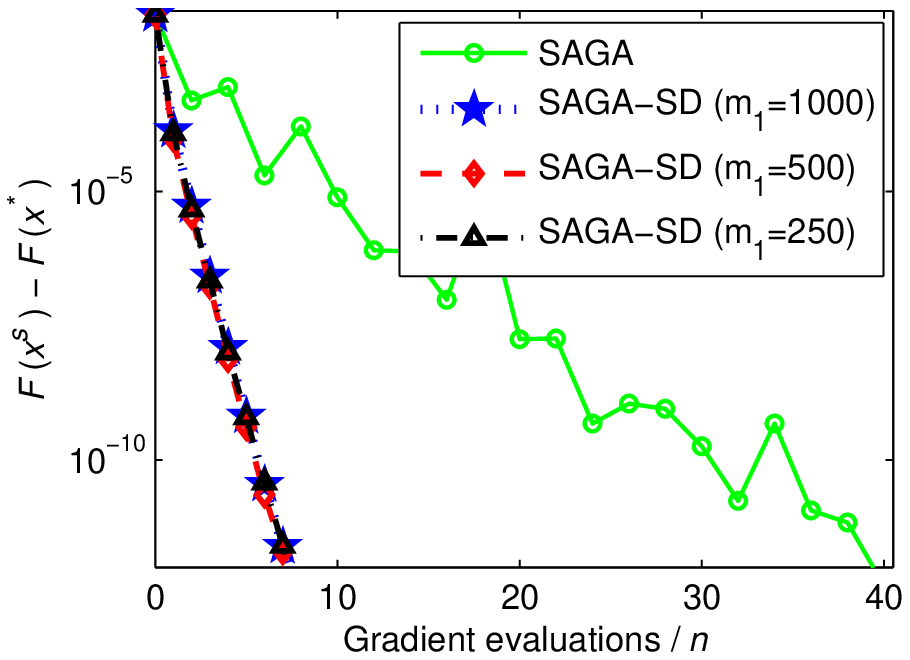}
\includegraphics[width=0.245\columnwidth]{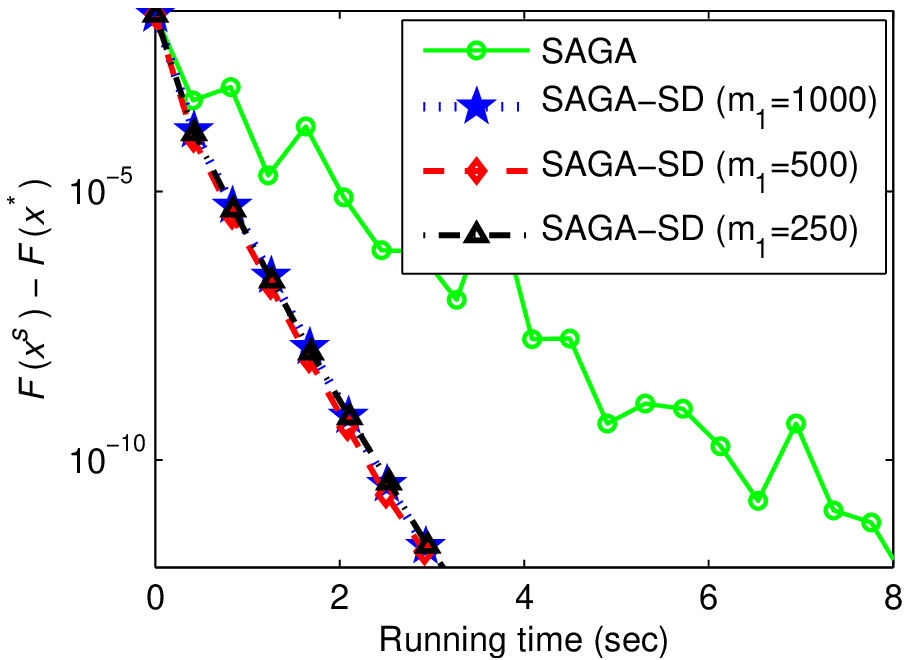}
\caption{Comparison of SVRG-SD and SAGA-SD with different values of $m_{1}$, and their counterparts for ridge regression on the Covtype data set.}
\label{fig1}
\end{figure}

\section{Convergence Guarantees}
In this section, we provide the convergence analysis of SVRG-SD and SAGA-SD for both SC and NSC cases. In this paper, we consider the problem \eqref{equ1} under the following standard assumptions.

\begin{assumption}
\label{assum1}
Each convex function $f_{i}(\cdot)$ is $L$-smooth, iff there exists a constant $L\!>\!0$ such that for any $x,y\!\in\! \mathbb{R}^{d}$, $\|\nabla f_{i}(x)-\nabla f_{i}(y)\|\leq L\|x-y\|$.
\end{assumption}

\begin{assumption}
\label{assum2}
$F(\cdot)$ is $\mu$-strongly convex, iff there exists a constant $\mu\!>\!0$ such that for any $x,y\!\in\!\mathbb{R}^{d}$,
\vspace{-1mm}
\begin{equation}\label{equ15}
F(y)\geq F(x)\!+\!\vartheta^{T}(y\!-\!x)\!+\!\frac{\mu}{2}\|y\!-\!x\|^{2},\;\;\forall\vartheta\in\partial F(x),
\end{equation}
where $\partial F(x)$ is the subdifferential of $F(\cdot)$ at $x$. If $F(\cdot)$ is smooth, we can revise the inequality~\eqref{equ15} by simply replacing the sub-gradient $\vartheta\in\partial F(x)$ with $\nabla\! F(x)$.
\end{assumption}

\subsection{Convergence Analysis of SVRG-SD}
In this part, we analyze the convergence property of SVRG-SD for both SC and NSC cases. The first main result is the following theorem, which provides the convergence rate of SVRG-SD.

\begin{theorem}
\label{theo1}
Suppose Assumption~\ref{assum1} holds. Let $x^{*}$ be the optimal solution of Problem \eqref{equ1}, and $\{(x^{s}_{k},y^{s}_{k},\theta^{s}_{k})\}$ be the sequence produced by SVRG-SD, $\eta\!=\!1/(L\alpha)$, and $\frac{2}{\alpha-1}\!<\!\sigma$, then
\vspace{-1mm}
\begin{equation*}
\begin{split}
\mathbb{E}\!\left[F(\widetilde{x}^{s})\!-\!F(x^{*})\right]\!\leq&\left(\!\frac{\frac{1-\sigma}{m}\!+\!\frac{2}{\alpha-1}}{\sigma\!-\!\frac{2}{\alpha-1}\!+\!\widehat{\beta}}\right)\!\mathbb{E}\!\left[F(\widetilde{x}^{s-\!1})\!-\!F(x^{*})\right]\\
&\quad+\frac{L\alpha\sigma^{2}}{2m\!\left(\sigma\!-\!\frac{2}{\alpha-1}\!+\!\widehat{\beta}\right)}\mathbb{E}\!\left[\|x^{*}\!-\!z^{s}_{0}\|^{2}\!-\!\|x^{*}\!-\!z^{s}_{m}\|^{2}\right]\!,
\end{split}
\end{equation*}
where $z^{s}_{0}\!=\!\left[x^{s}_{0}\!-\!(1\!-\!\sigma)\widehat{x}^{s}_{0}\right]/\sigma$, $z^{s}_{m}\!=\!\left[x^{s}_{m}\!-\!(1\!-\!\sigma)\widehat{x}^{s}_{m-\!1}\right]/\sigma$, $\widehat{\beta}\!=\!\min_{s=1,\ldots,S}\widehat{\beta}^{s}\!\geq\!0$, and $\widehat{\beta}^{s}\!=\!\mathbb{E}[\sum^{m}_{k=1}\!\!\frac{2c_{k}\beta_{k}}{\alpha-1}(F(\widehat{x}^{s}_{k})\!-\!F(x^{*}))]/\mathbb{E}[\sum^{m}_{k=1}\!(F(\widehat{x}^{s}_{k})\!-\!F(x^{*}))]$.
\end{theorem}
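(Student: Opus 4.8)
The plan is to follow a Katyusha-style variance-reduced argument and fold in the extra negative term supplied by the sufficient-decrease Property~\ref{prop11}; note that only smoothness and convexity (Assumption~\ref{assum1}) are used here, not strong convexity, so this recursion is the common engine for both the SC and NSC cases. First I would establish a one-iteration inequality for the proximal update $y^{s}_{k}$ of \eqref{equ8}. Combining the $L$-smoothness of each $f_i$ with the optimality (three-point) property of $\textrm{prox}^{r}_{\eta}$, and then taking expectation over $i^{s}_{k}$, where unbiasedness $\mathbb{E}[\widetilde{\nabla}f_{i^{s}_{k}}(x^{s}_{k-1})]=\nabla f(x^{s}_{k-1})$ lets me replace the stochastic direction by the true gradient up to a residual cross term, I obtain for any reference point $u$ a bound in which $\mathbb{E}[F(y^{s}_{k})-F(u)]$ is controlled by the convexity term $\langle\nabla f(x^{s}_{k-1}),x^{s}_{k-1}-u\rangle$, a distance difference $\tfrac{1}{2\eta}(\|x^{s}_{k-1}-u\|^{2}-\|y^{s}_{k}-u\|^{2})$, and a negative curvature term $-\tfrac{L(\alpha-1)}{2}\|y^{s}_{k}-x^{s}_{k-1}\|^{2}$, whose coefficient equals $\tfrac{1}{2\eta}-\tfrac{L}{2}$ because $\eta=1/(L\alpha)$. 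This negative quadratic is what will later absorb the residual cross term.

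Second, I would control the stochastic error. The SVRG estimator in \eqref{equ9} is unbiased and obeys the standard variance inequality $\mathbb{E}\|\widetilde{\nabla}f_{i^{s}_{k}}(x^{s}_{k-1})-\nabla f(x^{s}_{k-1})\|^{2}\le 2L\,\mathbb{E}[F(\widetilde{x}^{s-1})-F(x^{*})+F(x^{s}_{k-1})-F(x^{*})]$; dividing this contribution by the curvature gain $L(\alpha-1)=\tfrac{1}{\eta}-L$ is exactly what produces the $\tfrac{2}{\alpha-1}$ factors in both the numerator and the denominator of the statement. The residual $\widetilde{p}_{i^{s}_{k}}$ appearing in Property~\ref{prop11} is the same object that drives this variance bound, so the nonnegative sufficient-decrease slack $\tfrac{\zeta(1-\theta_k)^2}{2}\|\widetilde{p}_{i^{s}_{k}}\|^{2}$ can be rewritten, through the auxiliary constants $c_k,\beta_k$, as $\tfrac{2c_k\beta_k}{\alpha-1}(F(\widehat{x}^{s}_{k})-F(x^{*}))$; summing these and normalizing by $\mathbb{E}[\sum_k(F(\widehat{x}^{s}_{k})-F(x^{*}))]$ produces precisely the nonnegative quantity $\widehat{\beta}^{s}$, which is added to the denominator and therefore only sharpens the contraction.

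Third comes the momentum telescoping and the link to the snapshot. Substituting the momentum update \eqref{equ7} and setting $z^{s}_{k}=[x^{s}_{k}-(1-\sigma)\widehat{x}^{s}_{k}]/\sigma$, the reference choice $u=\sigma x^{*}+(1-\sigma)\widehat{x}^{s}_{k-1}$ converts the proximal distance terms into $\tfrac{\sigma^2}{2\eta}(\|z^{s}_{k-1}-x^{*}\|^{2}-\|z^{s}_{k}-x^{*}\|^{2})$, whose coefficient $\tfrac{\sigma^2}{2\eta}=\tfrac{L\alpha\sigma^2}{2}$ matches the distance factor in the theorem and which telescopes over $k=1,\dots,m$ down to the endpoints $z^{s}_{0}$ and $z^{s}_{m}$. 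Convexity of $f$ turns $\langle\nabla f(x^{s}_{k-1}),x^{s}_{k-1}-u\rangle$ into gap terms anchored at $x^{s}_{k-1}$, and Property~\ref{prop11} then trades $F(x^{s}_{k-1})$ for the snapshot-building value $F(\widehat{x}^{s}_{k})$ plus the slack above. Finally, summing over $k$, invoking Jensen through $\widetilde{x}^{s}=\tfrac1m\sum_{k}\widehat{x}^{s}_{k}$ so that $mF(\widetilde{x}^{s})\le\sum_{k}F(\widehat{x}^{s}_{k})$, and dividing by the net per-iterate coefficient $\sigma-\tfrac{2}{\alpha-1}+\widehat{\beta}$ yields the stated recursion, with $\tfrac{1-\sigma}{m}$ arising from the momentum coefficient combined with the per-epoch averaging.

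The hardest part will be the precise bookkeeping that makes the sufficient-decrease slack line up with the variance term: one must verify that $\widehat{\beta}^{s}$ defined through the stated ratio of expectations is genuinely nonnegative and that the $c_k,\beta_k$ extracted from $\tfrac{\zeta(1-\theta_k)^2}{2}\|\widetilde{p}_{i^{s}_{k}}\|^{2}$ are consistent across all $k$. A second delicate point is the asymmetric momentum boundary (note that $z^{s}_{m}$ is built from $\widehat{x}^{s}_{m-1}$ rather than $\widehat{x}^{s}_{m}$), which must be arranged so that the telescope closes exactly; the hypothesis $\tfrac{2}{\alpha-1}<\sigma$ is precisely what keeps the net coefficient $\sigma-\tfrac{2}{\alpha-1}+\widehat{\beta}$ positive and hence the final division legitimate.
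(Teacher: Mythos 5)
Your plan is correct and follows essentially the same route as the paper's proof: a one-step bound from the descent lemma with the Lipschitz constant split as $\frac{L\alpha}{2}-\frac{L(\alpha-1)}{2}$, Young's inequality to absorb the stochastic residual into that curvature surplus, the three-point prox lemma at the reference point $\sigma x^{*}+(1-\sigma)\widehat{x}^{s}_{k-1}$ giving the $\frac{L\alpha\sigma^{2}}{2}$ telescoping distances in $z^{s}_{k}$ (your definition $z^{s}_{k}=[x^{s}_{k}-(1-\sigma)\widehat{x}^{s}_{k}]/\sigma$ coincides with the paper's $[y^{s}_{k}-(1-\sigma)\widehat{x}^{s}_{k-1}]/\sigma$ exactly by the momentum update \eqref{equ7}, which is how the telescope you worried about closes), Property~\ref{prop11} traded into the gain $\widehat{\beta}^{s}$ via the constants $c_{k},\beta_{k}$, and the Jensen/averaging finish. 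The only slip is the constant in the variance bound: the paper's lemma (as in Corollary 3.5 of \cite{xiao:prox-svrg}) carries $4L$, not $2L$, and it is $4L$ divided by the Young denominator $2L(\alpha-1)$ that produces the $\frac{2}{\alpha-1}$ factors you correctly predict.
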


The proof of Theorem~\ref{theo1} and the definitions of $c_{k}$ and $\beta_{k}$ are given in the Supplementary Material. The linear convergence of SVRG-SD follows immediately.

\begin{corollary}[SC]
\label{coro1}
Suppose each $f_{i}(\cdot)$ is $L$-smooth, and $F(\cdot)$ is $\mu$-strongly convex. Setting $\alpha\!=\!19$, $\sigma\!=\!1/2$, and $m$ sufficiently large so that
\vspace{-1mm}
\begin{equation*}
\rho=\frac{9}{(7\!+\!18\widehat{\beta})m}+\frac{2}{7\!+\!18\widehat{\beta}}+\frac{171L}{(14\!+\!36\widehat{\beta})m\mu}<1,
\end{equation*}
then SVRG-SD has the geometric convergence in expectation:
\begin{equation*}
\mathbb{E}\!\left[F(\overline{x})-F(x^{*})\right]\leq\rho^{S}\!\left[F(\widetilde{x}^{0})-F(x^{*})\right]\!.
\end{equation*}
\end{corollary}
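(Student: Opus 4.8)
The plan is to read Corollary~\ref{coro1} as nothing more than Theorem~\ref{theo1} specialized to $\alpha=19$, $\sigma=1/2$, with the quadratic distance term converted into a function-value gap by strong convexity and then iterated over the $S$ outer loops. First I would substitute the chosen constants into the per-epoch recursion of Theorem~\ref{theo1}. Since $\frac{2}{\alpha-1}=\frac19$ and $\sigma-\frac{2}{\alpha-1}=\frac{7}{18}$, the denominator $\sigma-\frac{2}{\alpha-1}+\widehat{\beta}$ becomes $\frac{7+18\widehat{\beta}}{18}$; clearing the factor of $18$ turns the leading multiplier $\bigl(\frac{(1-\sigma)/m+2/(\alpha-1)}{\sigma-2/(\alpha-1)+\widehat{\beta}}\bigr)$ into exactly $\frac{9}{(7+18\widehat{\beta})m}+\frac{2}{7+18\widehat{\beta}}$, the first two terms of $\rho$. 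Note also that $\frac{2}{\alpha-1}=\frac19<\frac12=\sigma$, so the hypothesis $\frac{2}{\alpha-1}<\sigma$ of Theorem~\ref{theo1} is satisfied.

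The second step is to handle the distance term. I would first drop the nonpositive summand $-\mathbb{E}\|x^*-z^s_m\|^2$, keeping only $\mathbb{E}\|x^*-z^s_0\|^2$. The key observation is that in the SC branch of Algorithm~\ref{alg1} we initialize $x^s_0=\widehat{x}^s_0=\widetilde{x}^{s-1}$, so $z^s_0=[x^s_0-(1-\sigma)\widehat{x}^s_0]/\sigma=\widetilde{x}^{s-1}$. Applying $\mu$-strong convexity at the minimizer $x^*$ (where $0\in\partial F(x^*)$) gives $\frac{\mu}{2}\|\widetilde{x}^{s-1}-x^*\|^2\le F(\widetilde{x}^{s-1})-F(x^*)$, hence $\mathbb{E}\|x^*-z^s_0\|^2\le\frac{2}{\mu}\mathbb{E}[F(\widetilde{x}^{s-1})-F(x^*)]$. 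Substituting $\alpha=19$, $\sigma=1/2$ into the prefactor $\frac{L\alpha\sigma^2}{2m(\sigma-2/(\alpha-1)+\widehat{\beta})}$ yields $\frac{171L}{4m(7+18\widehat{\beta})}$; multiplying by $\frac{2}{\mu}$ and using $14+36\widehat{\beta}=2(7+18\widehat{\beta})$ produces precisely $\frac{171L}{(14+36\widehat{\beta})m\mu}$, the third term of $\rho$.

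Combining the two steps collapses the recursion of Theorem~\ref{theo1} into the one-step contraction $\mathbb{E}[F(\widetilde{x}^s)-F(x^*)]\le\rho\,\mathbb{E}[F(\widetilde{x}^{s-1})-F(x^*)]$ with $\rho$ exactly as defined in the corollary, and $\rho<1$ is precisely the stated largeness condition on $m$. Finally I would unroll this inequality over $s=1,\dots,S$ (noting that $\widetilde{x}^0$ is deterministic, so no expectation is needed at the base) to obtain $\mathbb{E}[F(\widetilde{x}^S)-F(x^*)]\le\rho^S[F(\widetilde{x}^0)-F(x^*)]$, and invoke $\overline{x}=\widetilde{x}^S$ in the SC case to conclude. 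The derivation is essentially bookkeeping, so I expect no serious obstacle; the one point that requires care is the identity $z^s_0=\widetilde{x}^{s-1}$, which relies specifically on the SC initialization (it would fail for the NSC branch, where $z^s_0$ is not a single iterate and the distance term must instead be telescoped across epochs), together with tracking the constants so that the strong-convexity factor $2/\mu$ converts the prefactor exactly into the stated third term of $\rho$.
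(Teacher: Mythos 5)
Your proposal is correct and follows essentially the same route as the paper's own proof: it specializes Theorem~\ref{theo1} to $\alpha=19$, $\sigma=1/2$, uses the SC initialization to get $z^{s}_{0}=\widetilde{x}^{s-1}$, converts $\mathbb{E}\|x^{*}-z^{s}_{0}\|^{2}$ into $\frac{2}{\mu}\,\mathbb{E}[F(\widetilde{x}^{s-1})-F(x^{*})]$ by strong convexity after dropping the nonpositive term, and your constant-tracking (e.g., $\frac{171L}{4m(7+18\widehat{\beta})}\cdot\frac{2}{\mu}=\frac{171L}{(14+36\widehat{\beta})m\mu}$) reproduces the paper's $\rho$ exactly. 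The only additions beyond the paper's write-up are the explicit check $\frac{2}{\alpha-1}<\sigma$ and the explicit unrolling over $s=1,\dots,S$, both of which the paper leaves implicit.
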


The proof of Corollary~\ref{coro1} is given in the Supplementary Material. From Corollary~\ref{coro1}, one can see that SVRG-SD has a linear convergence rate for SC problems. As discussed in~\cite{xiao:prox-svrg}, $\rho\!\approx\!\frac{L/\mu}{\nu(1-4\nu)m}\!+\!\frac{4\nu}{1-4\nu}$ for the proximal variant of SVRG~\cite{xiao:prox-svrg}, where $\nu\!=\!1/\alpha$. For a reasonable comparison, we use the same parameter settings for SVRG and SVRG-SD, e.g., $\alpha\!=\!19$ and $m\!=\!57L/\mu$. Then one can see that $\rho_{\textrm{SVRG}}\!\approx\!31/45$ for SVRG and $\rho_{\textrm{SVRG-SD}}\!\approx\!{7}/{(14\!+\!36\widehat{\beta})}\!<\!{1}/2$ for SVRG-SD, that is, $\rho_{\textrm{SVRG-SD}}$ is smaller than $\rho_{\textrm{SVRG}}$. Thus, SVRG-SD can significantly improve the convergence rate of SVRG in practice, which will be confirmed by the experimental results below.

Unlike most of VR-SGD methods~\cite{johnson:svrg,xiao:prox-svrg}, including SVRG, the convergence result of SVRG-SD for the NSC case is also provided, as shown below.

\begin{corollary}[NSC]
\label{coro2}
Suppose each $f_{i}(\cdot)$ is $L$-smooth. Setting $\alpha\!=\!19$, $\sigma\!=\!1/2$, and $m$ sufficiently large, then
\vspace{-1mm}
\begin{equation*}
\mathbb{E}[F(\overline{x})-F(x^{*})]\leq\frac{171L}{(16\!+\!40\widehat{\beta})mS}\|x^{*}\!-\!\widetilde{x}^{0}\|^{2}+\!\left(\frac{9}{(4\!+\!8\widehat{\beta})mS}\!+\!\frac{1}{(2\!+\!4\widehat{\beta})S}\right)\!\left[F(\widetilde{x}^{0})\!-\!F(x^{*})\right]\!.
\end{equation*}
\end{corollary}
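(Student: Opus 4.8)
The plan is to obtain Corollary~\ref{coro2} directly from the per-epoch recursion of Theorem~\ref{theo1}, using the same opening substitution as in the strongly convex case (Corollary~\ref{coro1}) but replacing the strong-convexity step by a telescoping-and-averaging argument over the $S$ epochs. The reason a different endgame is needed is that in the NSC case there is no modulus $\mu$ with which to convert the distance term $\|x^{*}-z^{s}_{0}\|^{2}$ into a function-value gap, so we cannot collapse the recursion geometrically as in Corollary~\ref{coro1}.

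First I would set $\alpha=19$ and $\sigma=1/2$ in Theorem~\ref{theo1} (note $\tfrac{2}{\alpha-1}=\tfrac19<\tfrac12=\sigma$, so the hypothesis holds), giving $\sigma-\tfrac{2}{\alpha-1}+\widehat\beta=\tfrac{7+18\widehat\beta}{18}$ and reducing the theorem to
\[
a_s \;\le\; A\,a_{s-1} + B\,\mathbb{E}\big[\|x^{*}-z^{s}_{0}\|^{2}-\|x^{*}-z^{s}_{m}\|^{2}\big],
\]
where $a_s:=\mathbb{E}[F(\widetilde x^{s})-F(x^{*})]$, $A=\tfrac{9}{(7+18\widehat\beta)m}+\tfrac{2}{7+18\widehat\beta}$ is exactly the first two terms of $\rho$ in Corollary~\ref{coro1}, and $B=\tfrac{171L}{4m(7+18\widehat\beta)}$ is the coefficient whose product with $2/\mu$ reproduces the third term of that $\rho$. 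For $m$ sufficiently large we have $A<1$.

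Next I would use the NSC-specific restart. Since Algorithm~\ref{alg1} sets $x^{s}_{0}=\widehat x^{s}_{0}=\widetilde y^{s-1}$ in the NSC case, the definition of $z^{s}_{0}$ collapses to $z^{s}_{0}=\widetilde y^{s-1}$, and the reset $\widetilde y^{s}=[x^{s}_{m}-(1-\sigma)\widehat x^{s}_{m}]/\sigma$ is precisely what identifies the start of epoch $s+1$ with the end of epoch $s$, namely $z^{s+1}_{0}=\widetilde y^{s}$. Summing the recursion over $s=1,\dots,S$ and telescoping the distance differences then yields $\sum_{s=1}^{S}\mathbb{E}\big[\|x^{*}-z^{s}_{0}\|^{2}-\|x^{*}-z^{s}_{m}\|^{2}\big]\le\|x^{*}-\widetilde x^{0}\|^{2}$, using $\widetilde y^{0}=\widetilde x^{0}$ and discarding the final nonnegative term. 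Rearranging the summed function-value terms gives
\[
(1-A)\sum_{s=1}^{S} a_s \;\le\; A\,a_0 + B\,\|x^{*}-\widetilde x^{0}\|^{2},
\]
and dividing by $(1-A)S$ bounds the running average $\tfrac1S\sum_{s} a_s$. Finally, Jensen's inequality together with the NSC output rule gives $F(\overline x)\le F\big(\tfrac1S\sum_{s}\widetilde x^{s}\big)\le\tfrac1S\sum_{s}F(\widetilde x^{s})$, so the average of the $a_s$ upper-bounds $\mathbb{E}[F(\overline x)-F(x^{*})]$, producing the claimed $O(1/S)$ rate.

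The hard part will be the telescoping of the distance terms. In the strongly convex case one simply discards $-\|x^{*}-z^{s}_{m}\|^{2}$ and converts $\|x^{*}-z^{s}_{0}\|^{2}$ through $\mu$-strong convexity; here, by contrast, the negative endpoint terms must be retained and matched across consecutive epochs, which requires checking that the momentum reset really links $z^{s}_{m}$ to $z^{s+1}_{0}=\widetilde y^{s}$ despite the $\widehat x^{s}_{m}$ versus $\widehat x^{s}_{m-1}$ discrepancy in their definitions (the two differ by a momentum-type term $\tfrac{1-\sigma}{\sigma}(\widehat x^{s}_{m}-\widehat x^{s}_{m-1})$, which must be controlled or absorbed into the inner-loop bookkeeping that produced Theorem~\ref{theo1}). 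The remaining effort is routine: carefully propagating the explicit constants through the factor $1-A$ to arrive at the stated coefficients $\tfrac{171L}{(16+40\widehat\beta)mS}$, $\tfrac{9}{(4+8\widehat\beta)mS}$, and $\tfrac{1}{(2+4\widehat\beta)S}$.
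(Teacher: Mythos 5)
Your proposal follows the paper's own proof essentially step for step: specialize Theorem~\ref{theo1} to $\alpha=19$, $\sigma=1/2$, use the NSC restart $x^{s}_{0}=\widehat{x}^{s}_{0}=\widetilde{y}^{s-1}$ to get $z^{s}_{0}=z^{s-1}_{m}$ so the distance terms telescope to $\|x^{*}-\widetilde{x}^{0}\|^{2}$ (with $z^{1}_{0}=\widetilde{y}^{0}=\widetilde{x}^{0}$), solve the resulting recursion for $\sum_{s}\mathbb{E}[F(\widetilde{x}^{s})-F(x^{*})]$ by dividing through $1-A$, and finish with Jensen's inequality and the NSC output rule, exactly as the paper does. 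The one issue you flag as the hard part, the $\widehat{x}^{s}_{m}$ versus $\widehat{x}^{s}_{m-1}$ mismatch in $z^{s}_{m}$, dissolves on inspection of Theorem~\ref{theo1}'s proof: there $z^{s}_{k}=[y^{s}_{k}-(1-\sigma)\widehat{x}^{s}_{k-1}]/\sigma$, and substituting the update \eqref{equ7} gives $z^{s}_{m}=[x^{s}_{m}-(1-\sigma)\widehat{x}^{s}_{m}]/\sigma=\widetilde{y}^{s}=z^{s+1}_{0}$ exactly (the $\widehat{x}^{s}_{m-1}$ in the theorem statement is a typo), so the identification is exact with no leftover momentum term to absorb, which is precisely what the paper's proof of Corollary~\ref{coro2} uses.
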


The proof of Corollary \ref{coro2} is provided in the Supplementary Material. The constant $\widehat{\beta}\!\geq\!0$ is from the sufficient decrease strategy, which thus implies that the convergence bound in Corollary~\ref{coro2} can be further improved using our sufficient decrease strategy with an even larger $\widehat{\beta}$.

\subsection{Convergence Analysis of SAGA-SD}
In this part, we analyze the convergence property of SAGA-SD for both SC and NSC cases. The following lemma provides the upper bound on the expected variance of the gradient estimator in~\eqref{equ9} (i.e., the SAGA estimator~\cite{defazio:saga}), and its proof is given in the Supplementary Material.

\begin{lemma}
\label{lemm2}
Suppose Assumption~\ref{assum1} holds. Then the following inequality holds
\vspace{-1mm}
\begin{displaymath}
\begin{split}
&\mathbb{E}[\|\nabla\!f_{i^{s}_{k}}\!(x^{s}_{k-\!1})\!-\!\!\nabla\!f(x^{s}_{k-\!1})\!-\!\!\nabla\!f_{i^{s}_{k}}\!(\phi^{k-\!1}_{i^{s}_{k}})\!+\!\frac{1}{n}\!\sum^{n}_{j=1}\!\nabla\! f_{j}(\phi^{k-\!1}_{j})\|^{2}]\\
&\leq 4L[F(x^{s}_{k-\!1})-F(x^{*})+\frac{1}{n}\sum^{n}_{j=1}F_{j}(\phi^{k-\!1}_{j})-F(x^{*})].
\end{split}
\end{displaymath}
\end{lemma}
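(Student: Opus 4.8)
The plan is to read the left-hand side as the variance of the SAGA estimator about its conditional mean, and then bound that variance by a plain second moment. Set $v_i := \nabla f_i(x^{s}_{k-1}) - \nabla f_i(\phi^{k-1}_i)$, so that $\mathbb{E}_{i^{s}_{k}}[v_{i^{s}_{k}}] = \nabla f(x^{s}_{k-1}) - \frac{1}{n}\sum_{j=1}^{n}\nabla f_j(\phi^{k-1}_j)$ and the vector inside the norm on the left is exactly $v_{i^{s}_{k}} - \mathbb{E}_{i^{s}_{k}}[v_{i^{s}_{k}}]$. First I would apply the elementary identity $\mathbb{E}\|X - \mathbb{E}X\|^2 = \mathbb{E}\|X\|^2 - \|\mathbb{E}X\|^2 \leq \mathbb{E}\|X\|^2$, which discards the mean term and reduces the goal to bounding $\mathbb{E}_{i^{s}_{k}}\|\nabla f_{i^{s}_{k}}(x^{s}_{k-1}) - \nabla f_{i^{s}_{k}}(\phi^{k-1}_{i^{s}_{k}})\|^2$.

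Next I would anchor both gradients at the optimum. Using $\|a-b\|^2 \leq 2\|a\|^2 + 2\|b\|^2$ with the split point $\nabla f_i(x^*)$ gives
\[
\|\nabla f_i(x^{s}_{k-1}) - \nabla f_i(\phi^{k-1}_i)\|^2 \leq 2\|\nabla f_i(x^{s}_{k-1}) - \nabla f_i(x^*)\|^2 + 2\|\nabla f_i(\phi^{k-1}_i) - \nabla f_i(x^*)\|^2.
\]
The essential ingredient is the standard smoothness-plus-convexity bound $\|\nabla f_i(u) - \nabla f_i(x^*)\|^2 \leq 2L[f_i(u) - f_i(x^*) - \nabla f_i(x^*)^T(u - x^*)]$, which follows by applying the descent lemma to the shifted convex function $f_i(\cdot) - \nabla f_i(x^*)^T(\cdot)$ that is minimized at $x^*$. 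Applying it at $u = x^{s}_{k-1}$ and at $u = \phi^{k-1}_i$ converts each squared gradient gap into a Bregman divergence and accounts for the factor $2 \cdot 2L = 4L$ in the claim.

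Finally I would average over $i$. The current-iterate terms average to $4L[f(x^{s}_{k-1}) - f(x^*) - \nabla f(x^*)^T(x^{s}_{k-1} - x^*)]$, and the composite optimality condition $-\nabla f(x^*) \in \partial r(x^*)$ together with convexity of $r$ yields $-\nabla f(x^*)^T(x^{s}_{k-1} - x^*) \leq r(x^{s}_{k-1}) - r(x^*)$, which upgrades this piece to $4L[F(x^{s}_{k-1}) - F(x^*)]$. The hard part will be the stored-point group $\frac{1}{n}\sum_i[f_i(\phi^{k-1}_i) - f_i(x^*) - \nabla f_i(x^*)^T(\phi^{k-1}_i - x^*)]$: here each $\phi^{k-1}_i$ is tied to its own component gradient $\nabla f_i(x^*)$, so a single subgradient inequality for $r$ no longer removes the linear corrections as it did for the current iterate. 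This bookkeeping is exactly where the per-component quantities $F_j(\phi^{k-1}_j) - F(x^*)$ enter; I would verify that each such term dominates the corresponding nonnegative Bregman divergence, so that summing the two groups produces the stated right-hand side.
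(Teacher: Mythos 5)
Your reduction of the left-hand side to the second moment $\mathbb{E}\|\nabla f_{i^{s}_{k}}(x^{s}_{k-1})-\nabla f_{i^{s}_{k}}(\phi^{k-1}_{i^{s}_{k}})\|^{2}$ via $\mathbb{E}\|X-\mathbb{E}X\|^{2}=\mathbb{E}\|X\|^{2}-\|\mathbb{E}X\|^{2}$, the splitting at $\nabla f_{i^{s}_{k}}(x^{*})$, and the smoothness-plus-convexity bound are exactly the paper's steps, and your treatment of the current-iterate group (one subgradient inequality with $-\nabla f(x^{*})\in\partial r(x^{*})$, i.e.\ Lemma 3.4 of \cite{xiao:prox-svrg}) is also how the paper obtains $4L[F(x^{s}_{k-1})-F(x^{*})]$. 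The gap is in your last step. The pointwise domination you intend to verify,
\[
f_{j}(\phi^{k-1}_{j})-f_{j}(x^{*})-\bigl\langle \nabla f_{j}(x^{*}),\,\phi^{k-1}_{j}-x^{*}\bigr\rangle \;\leq\; F_{j}(\phi^{k-1}_{j})-F(x^{*}),
\]
is false in general, and in fact no pointwise (history-conditional) argument can close the proof. Take $r\equiv 0$, $n=2$, $f_{1}(x)=(x-1)^{2}$, $f_{2}(x)=(x+1)^{2}$, so $x^{*}=0$ and $F(x^{*})=1$. With $\phi^{k-1}_{1}=1$ the left side above equals $1$ while the right side equals $-1$. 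Worse, with $x^{s}_{k-1}=0$, $\phi^{k-1}_{1}=1$, $\phi^{k-1}_{2}=-1$, the right-hand side of the whole lemma equals $-4L<0$ while its left-hand side equals $4$: the quantities $F_{j}(\phi^{k-1}_{j})-F(x^{*})$ need not be nonnegative (the $\phi_{j}$ may sit at minimizers of individual components), so the lemma is not a conditional statement at all — it holds only in full expectation over the algorithm's random history, which rules out the configuration above on average.

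What the paper does instead is precisely the piece your plan is missing. It introduces $\xi^{*}\in\partial r(x^{*})$ with $\nabla f(x^{*})+\xi^{*}=0$ and rewrites the troublesome linear correction as $\frac{1}{n}\sum_{j}\langle \xi^{*},\phi^{k-1}_{j}-x^{*}\rangle-\frac{1}{n}\sum_{j}\langle \nabla f_{j}(x^{*})+\xi^{*},\phi^{k-1}_{j}-x^{*}\rangle$. The first sum is absorbed by convexity of $r$ (this is what produces the $r(\phi^{k-1}_{j})$ parts of $F_{j}(\phi^{k-1}_{j})$), while the second sum — exactly the term your per-component domination was supposed to handle — is shown to have zero expectation by exploiting how the memory table evolves: at each step $\phi^{k}_{j}$ is reset to the current iterate with probability $1/n$, so conditioned on the past the sum picks up $\frac{1}{n}\langle \partial F(x^{*}),x^{s}_{k-1}-x^{*}\rangle=0$ by optimality of $x^{*}$, and the recursion (Lemma~\ref{lemm13}, unrolled in \eqref{equ107}) telescopes back to $\phi^{0}_{j}=x^{s}_{0}$, where it vanishes. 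So the missing ingredient is not Bregman-divergence bookkeeping but this martingale-type identity for $\frac{1}{n}\sum_{j}\langle \partial F_{j}(x^{*}),\phi^{k-1}_{j}-x^{*}\rangle$; without it the stored-point group cannot be converted into $\frac{1}{n}\sum_{j}F_{j}(\phi^{k-1}_{j})-F(x^{*})$.
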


\begin{theorem}[SC]
\label{theo2}
Suppose $F(\cdot)$ is $\mu$-strongly convex and $f_{i}(\cdot)$ is $L$-smooth. With the same notation as in Theorem~\ref{theo1}, and by setting $\alpha\!=\!19$, $\sigma\!=\!1/2$, and $m$ sufficiently large such that
\begin{equation*}
\rho=\!\frac{n}{(7\!+\!9\widehat{\beta})m}+\frac{9}{(14\!+\!18\widehat{\beta})m}+\frac{171L}{(28\!+\!36\widehat{\beta})\mu m}<1,
\end{equation*}
then SAGA-SD has the geometric convergence in expectation:
\begin{equation*}
\mathbb{E}[F(\overline{x})-F(x^{*})]\leq \rho^{S}\!\left[F(\widetilde{x}^{0})-F(x^{*})\right]\!.
\end{equation*}
\end{theorem}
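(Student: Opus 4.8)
The plan is to mirror the proof of Theorem~\ref{theo1} almost line for line, replacing the SVRG variance control by the SAGA bound of Lemma~\ref{lemm2}, and then to manage the extra ``gradient table'' term that Lemma~\ref{lemm2} introduces through an auxiliary contraction argument. Concretely, I would first reproduce the single inner-iteration estimate underlying Theorem~\ref{theo1}: combining the update rules \eqref{equ7}--\eqref{equ9}, the sufficient-decrease Property~\ref{prop11} (now with the SAGA residual $\widetilde{p}_{i^{s}_{k}}\!=\!\nabla\! f_{i^{s}_{k}}(x^{s}_{k-1})\!-\!\nabla\! f_{i^{s}_{k}}(\phi^{k-1}_{i^{s}_{k}})$), smoothness of each $f_{i}$ from Assumption~\ref{assum1}, and the $\mu$-strong convexity of $F$ from Assumption~\ref{assum2}, yields a per-step inequality in which the variance of the stochastic gradient estimator appears. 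At exactly that point I would invoke Lemma~\ref{lemm2} in place of the SVRG variance bound used in Theorem~\ref{theo1}.

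The crucial structural difference is the second term on the right-hand side of Lemma~\ref{lemm2}, namely $G_{k-1}:=\frac{1}{n}\sum_{j=1}^{n}F_{j}(\phi^{k-1}_{j})-F(x^{*})$. In the SVRG analysis the analogous quantity is the fixed snapshot gap $F(\widetilde{x}^{s-1})-F(x^{*})$, which is constant over the inner loop and can simply be pulled out; here $G_{k-1}$ evolves with the table. I would therefore track $G_{k}$ explicitly: since $\phi^{k}_{i^{s}_{k}}\!=\!x^{s}_{k-1}$ is written into the entry $i^{s}_{k}$ (chosen uniformly, so each entry is refreshed with probability $1/n$), taking the conditional expectation gives the recursion $\mathbb{E}[G_{k}]=(1-1/n)\,G_{k-1}+\frac{1}{n}(F(x^{s}_{k-1})-F(x^{*}))$. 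Combining this with the per-step inequality, I would form a Lyapunov function of the shape $\|x^{*}-z^{s}_{k}\|^{2}$ augmented by a suitable multiple of $\eta\,G_{k}$, so that the unwanted $4L\eta\,G_{k-1}$ contribution from Lemma~\ref{lemm2} is absorbed and only telescoping and current-gap terms survive.

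With the Lyapunov function in hand, I would telescope over the inner iterations $k=1,\ldots,m$, use $\widehat{x}^{s}_{k}=\theta_{k}x^{s}_{k-1}$ and $\widetilde{x}^{s}=\frac{1}{m}\sum_{k}\widehat{x}^{s}_{k}$ together with convexity to pass from the summed per-step gaps to $\mathbb{E}[F(\widetilde{x}^{s})-F(x^{*})]$, and fold the sufficient-decrease surplus into the constant $\widehat{\beta}$ exactly as in the definition accompanying Theorem~\ref{theo1}. The only genuinely new feature in the resulting per-epoch bound is the table term: because $G_{k}$ decays by merely $1/n$ per step, summing its contribution over $m$ inner iterations produces a factor proportional to $n/m$, which is precisely the origin of the summand $\frac{n}{(7+9\widehat{\beta})m}$ in $\rho$. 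I would then specialize to $\alpha=19$ and $\sigma=1/2$ (so that $z^{s}_{0}=\widetilde{x}^{s-1}$ in the SC case), and apply $\mu$-strong convexity in the form $\frac{\mu}{2}\|x^{*}-z^{s}_{0}\|^{2}\le F(\widetilde{x}^{s-1})-F(x^{*})$ to convert the initial distance into a function gap, collecting the three stated terms of $\rho$. Iterating the per-epoch contraction over $s=1,\ldots,S$ and using the output $\overline{x}=\widetilde{x}^{S}$ then gives $\mathbb{E}[F(\overline{x})-F(x^{*})]\le\rho^{S}[F(\widetilde{x}^{0})-F(x^{*})]$.

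The main obstacle will be the Lyapunov step: choosing the weight on $\eta\,G_{k}$ so that the $(1-1/n)$ self-decay of the table, the $\frac{1}{n}$ injection of the current gap, and the $4L\eta\,G_{k-1}$ term from Lemma~\ref{lemm2} all balance against the negative telescoping term $-\|x^{*}-z^{s}_{m}\|^{2}$ without spoiling the coefficient bookkeeping that yields $\rho<1$. Everything else is a faithful transcription of the Theorem~\ref{theo1}/Corollary~\ref{coro1} computation with the SAGA estimator substituted for the SVRG one.
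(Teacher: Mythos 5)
Your proposal matches the paper's proof essentially step for step: the paper likewise reruns the Theorem~\ref{theo1} derivation with Lemma~\ref{lemm2} (via its Corollary on weighted variance differences) in place of the SVRG bound, and handles the table gap $\frac{1}{n}\sum_{j}F_{j}(\phi^{k-1}_{j})-F(x^{*})$ through exactly the recursion you state (borrowed from the SAGA paper), absorbing it with a Lyapunov weight $q$ chosen so that $q/n=2/(\alpha-1)$. Your accounting of where the $\frac{n}{(7+9\widehat{\beta})m}$ term originates is also consistent with the paper, where it enters through the initial table value $\phi^{0}_{j}=\widetilde{x}^{s-1}$ weighted by $q=n/9$ after setting $\alpha=19$, $\sigma=1/2$.
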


The proof of Theorem \ref{theo2} is provided in the Supplementary Material. Theorem \ref{theo2} shows that SAGA-SD also attains linear convergence similar to SVRG-SD. Like Corollary \ref{coro2}, we also provide the convergence guarantee of SAGA-SD for NSC problems, as shown below.

\begin{corollary}[NSC]
\label{coro3}
Suppose each $f_{i}(\cdot)$ is $L$-smooth. With the same notation as in Theorem~\ref{theo2} and by setting $\alpha\!=\!19$, $\sigma\!=\!1/2$, and $m\!=\!n$, then
\vspace{-1mm}
\begin{equation*}
\mathbb{E}[F(\overline{x})\!-\!F(x^{*})]\leq\!\frac{171L}{(49\!+\!56\widehat{\beta})nS}\|x^{*}\!-\!\widetilde{x}^{0}\|^{2}\!+\!\!\left(\frac{81}{(98\!+\!126\widehat{\beta})nS}\!+\!\frac{9}{(49\!+\!63\widehat{\beta})S}\right)\!\left[F(\widetilde{x}^{0})\!-\!F(x^{*})\right]\!.
\end{equation*}
\end{corollary}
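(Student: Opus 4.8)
The plan is to mirror the non-strongly convex analysis of Corollary~\ref{coro2}, replacing the SVRG ingredients by their SAGA counterparts, namely the variance bound of Lemma~\ref{lemm2} and the additional gradient-table term that the SAGA estimator carries. Theorem~\ref{theo2} is itself obtained from a one-epoch inequality which, \emph{before} strong convexity is invoked, has the same shape as the recursion of Theorem~\ref{theo1}: a contraction term in $\mathbb{E}[F(\widetilde{x}^{s-\!1})\!-\!F(x^{*})]$, a distance difference $\mathbb{E}[\|x^{*}\!-\!z^{s}_{0}\|^{2}\!-\!\|x^{*}\!-\!z^{s}_{m}\|^{2}]$, and, new for SAGA, a term carrying the table lag $\frac{1}{n}\sum_{j}[F_{j}(\phi^{k-\!1}_{j})\!-\!F(x^{*})]$ that is controlled via Lemma~\ref{lemm2}. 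The first step is to extract this epoch-level inequality from the proof of Theorem~\ref{theo2} \emph{without} substituting the strong-convexity conversion $\|z^{s}_{0}\!-\!x^{*}\|^{2}\!\leq\!\frac{2}{\mu}(F(\widetilde{x}^{s-\!1})\!-\!F(x^{*}))$; skipping exactly this step is what turns the geometric bound into a sublinear one.

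Next I would substitute $\alpha\!=\!19$, $\sigma\!=\!1/2$, $m\!=\!n$, so the epoch coefficients collapse to the multiples of $(7\!+\!8\widehat{\beta})$ and $(7\!+\!9\widehat{\beta})$ that factor the stated denominators, since $49\!+\!56\widehat{\beta}=7(7\!+\!8\widehat{\beta})$, $98\!+\!126\widehat{\beta}=14(7\!+\!9\widehat{\beta})$, and $49\!+\!63\widehat{\beta}=7(7\!+\!9\widehat{\beta})$. I would then sum the inequality over $s\!=\!1,\ldots,S$. The distance terms telescope: using the NSC resets $x^{s}_{0}\!=\!\widehat{x}^{s}_{0}\!=\!\widetilde{y}^{s-\!1}$ one gets $z^{s}_{0}\!=\!\widetilde{y}^{s-\!1}$ and hence $z^{s+1}_{0}\!=\!\widetilde{y}^{s}$, so consecutive epochs chain together and the sum collapses to $\|x^{*}\!-\!z^{1}_{0}\|^{2}\!=\!\|x^{*}\!-\!\widetilde{x}^{0}\|^{2}$ after dropping the nonnegative terminal term $-\|x^{*}\!-\!z^{S}_{m}\|^{2}$. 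The only care required is the index offset between $z^{s}_{m}$ (which carries $\widehat{x}^{s}_{m-\!1}$) and $\widetilde{y}^{s}$ (which carries $\widehat{x}^{s}_{m}$), which is absorbed by the usual momentum bookkeeping exactly as in the proof of Corollary~\ref{coro2}. Because each inner iteration refreshes one random table entry, over $m\!=\!n$ steps the expected table-lag term moves by an $O(1)$ fraction toward the current gap, so the choice $m\!=\!n$ balances the table contribution against the function-value terms and lets it be folded into the epoch recursion rather than accumulating.

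I would then divide by $S$ and invoke convexity of $F$. By Jensen, $F\!\left(\frac{1}{S}\sum_{s=1}^{S}\widetilde{x}^{s}\right)\leq\frac{1}{S}\sum_{s=1}^{S}F(\widetilde{x}^{s})$, and the NSC output rule of Algorithm~\ref{alg1} guarantees $F(\overline{x})\leq F\!\left(\frac{1}{S}\sum_{s}\widetilde{x}^{s}\right)$ in either branch, so $\mathbb{E}[F(\overline{x})\!-\!F(x^{*})]\leq\frac{1}{S}\sum_{s}\mathbb{E}[F(\widetilde{x}^{s})\!-\!F(x^{*})]$. Rearranging the summed recursion to isolate $\sum_{s}\mathbb{E}[F(\widetilde{x}^{s})\!-\!F(x^{*})]$ leaves the telescoped distance contribution $\frac{171L}{(49\!+\!56\widehat{\beta})nS}\|x^{*}\!-\!\widetilde{x}^{0}\|^{2}$ together with the two $\frac{1}{S}$-weighted multiples of $F(\widetilde{x}^{0})\!-\!F(x^{*})$; carrying out the routine arithmetic with the numerical constants produces the three stated coefficients (the distance coefficient inheriting the $(7\!+\!8\widehat{\beta})$ dependence and the function-value coefficients the $(7\!+\!9\widehat{\beta})$ dependence).

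The hard part will be the SAGA table term. Unlike SVRG-SD, whose control variate is frozen at the epoch snapshot $\widetilde{x}^{s-\!1}$, the SAGA estimator draws on the running table $\{\phi_{j}\}$, so Lemma~\ref{lemm2} couples $F(x^{s}_{k-\!1})\!-\!F(x^{*})$ with the table average $\frac{1}{n}\sum_{j}[F_{j}(\phi^{k-\!1}_{j})\!-\!F(x^{*})]$. Propagating this extra nonnegative Lyapunov term through a telescoping summation \emph{without} strong convexity, and verifying that $m\!=\!n$ makes its net per-epoch contribution absorbable rather than accumulating, is the step most likely to require a delicate, SAGA-specific argument; it is precisely why the NSC guarantee for SAGA-SD is stated with $m\!=\!n$ rather than for general $m$ as in Corollary~\ref{coro2}.
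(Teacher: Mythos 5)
Your proposal matches the paper's proof in all essential respects: the paper likewise extracts the epoch-level inequality from the proof of Theorem~\ref{theo2} \emph{before} the strong-convexity substitution, namely $\left(\frac{7}{9}+\widehat{\beta}\right)\mathbb{E}\!\left[F(\widetilde{x}^{s})-F(x^{*})\right]\leq\frac{19L}{8m}\,\mathbb{E}\!\left[\|x^{*}-z^{s}_{0}\|^{2}-\|x^{*}-z^{s}_{m}\|^{2}\right]+\left(\frac{1}{2m}+\frac{n}{9m}\right)\left[F(\widetilde{x}^{s-1})-F(x^{*})\right]$, then telescopes via $z^{s}_{0}=z^{s-1}_{m}$ and $z^{0}_{0}=\widetilde{x}^{0}$, sets $m=n$, and concludes with Jensen's inequality and the output rule, exactly as you outline.

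One correction to your anticipated ``hard part'': no delicate, SAGA-specific control of the table lag is needed, and $m=n$ plays no balancing role of the kind you describe. SAGA-SD, unlike plain SAGA, is a \emph{multi-stage} algorithm whose gradient table is re-initialized at the start of each epoch, $\phi^{0}_{j}=\widetilde{x}^{s-1}$ for all $j$; this is already exploited inside the proof of Theorem~\ref{theo2}, where the expected-table-update identity (the analogue of SAGA's per-iteration bookkeeping) folds $\frac{q}{n}\left[F(x^{s}_{k-1})-F(x^{*})\right]$ into the recursion, so that the epoch-level table contribution collapses \emph{exactly} to $q\left[F(\widetilde{x}^{s-1})-F(x^{*})\right]$ with $q=2n/(\alpha-1)=n/9$, after dropping the nonnegative terminal term $q\,\mathbb{E}\!\left[\frac{1}{n}\sum_{j}F_{j}(\phi^{m}_{j})-F(x^{*})\right]$. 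Hence nothing accumulates across epochs, and the only function of $m=n$ is to turn the coefficient $\frac{n}{9m}$ into the absolute constant $\frac{1}{9}$ (for $m\ll n$ this term would degrade the per-epoch factor), which is precisely what yields the stated denominators; your picture of the table ``moving by an $O(1)$ fraction'' over the epoch is replaced in the paper by an exact reset. Like your sketch, the paper glosses the residual NSC bookkeeping --- in epoch $s$ the initialization is $x^{s}_{0}=\widehat{x}^{s}_{0}=\widetilde{y}^{s-1}$ while the table reset and the $(1-\sigma)\left[F(\widehat{x}^{s}_{0})-F(x^{*})\right]$ term are written with $\widetilde{x}^{s-1}$, and the $z^{s}_{m}$ versus $\widetilde{y}^{s}$ index offset is absorbed silently --- so on those points your treatment is no less careful than the paper's.
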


The proof of Corollary \ref{coro3} is provided in the Supplementary Material. Due to $\widehat{\beta}\!\geq\!0$, Theorem \ref{theo2} and Corollary \ref{coro3} imply that SAGA-SD can significantly improve the convergence rate of SAGA~\cite{defazio:saga} for both SC and NSC cases, which will be confirmed by our experimental results.

As suggested in~\cite{frostig:sgd} and \cite{lin:vrsg}, one can add a proximal term into a non-strongly convex objective function $F(x)$ as follows: $F_{\tau}(x,y)\!=\!f(x)\!+\!\frac{\tau}{2}\|x\!-\!y\|^{2}\!+\!r(x)$, where $\tau\!\geq\!0$ is a constant that can be determined as in~\cite{frostig:sgd,lin:vrsg}, and $y\!\in\! \mathbb{R}^{d}$ is a proximal point. Then the condition number of this proximal function $F_{\tau}(x,y)$ can be much smaller than that of the original function $F(x)$, if $\tau$ is sufficiently large. However, adding the proximal term may degrade the performance of the involved algorithms both in theory and in practice~\cite{zhu:univr}. Therefore, we directly use SVRG-SD and SAGA-SD to solve non-strongly convex objectives.

\begin{figure}[t]
\centering
\includegraphics[width=0.326\columnwidth]{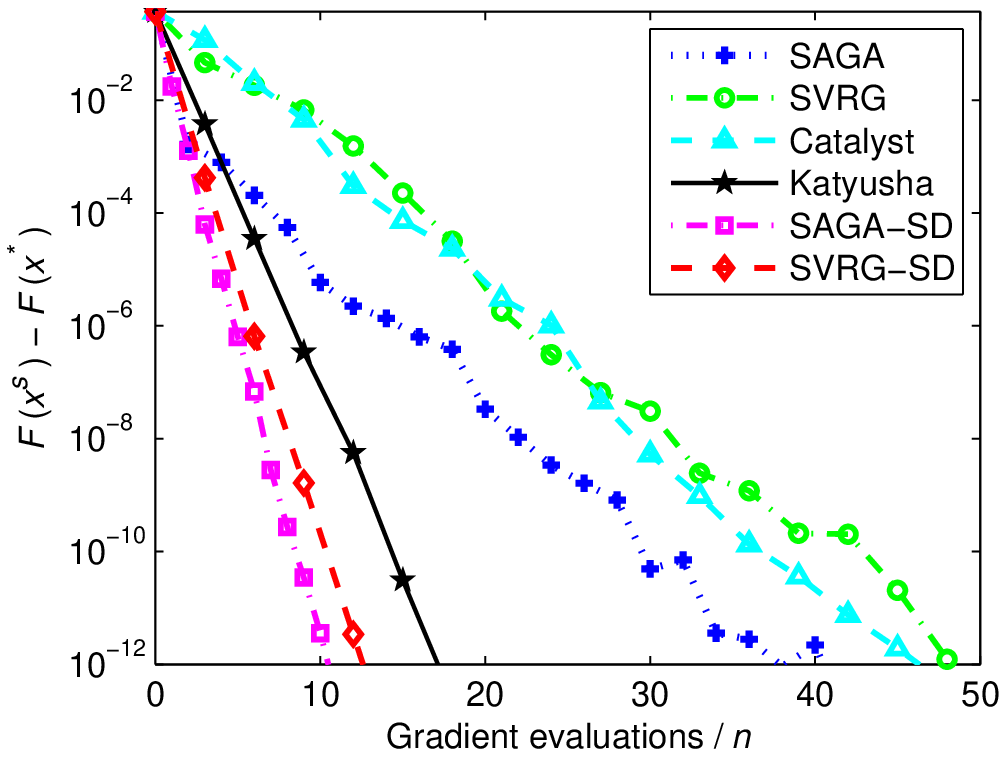}\,
\includegraphics[width=0.326\columnwidth]{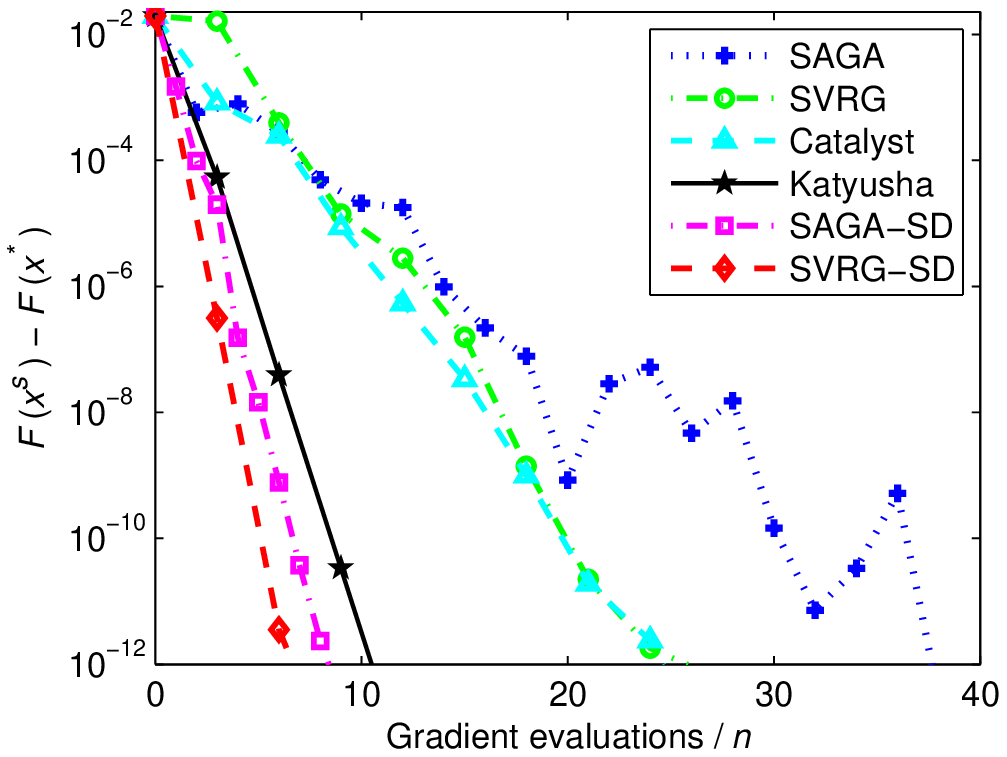}\,
\includegraphics[width=0.326\columnwidth]{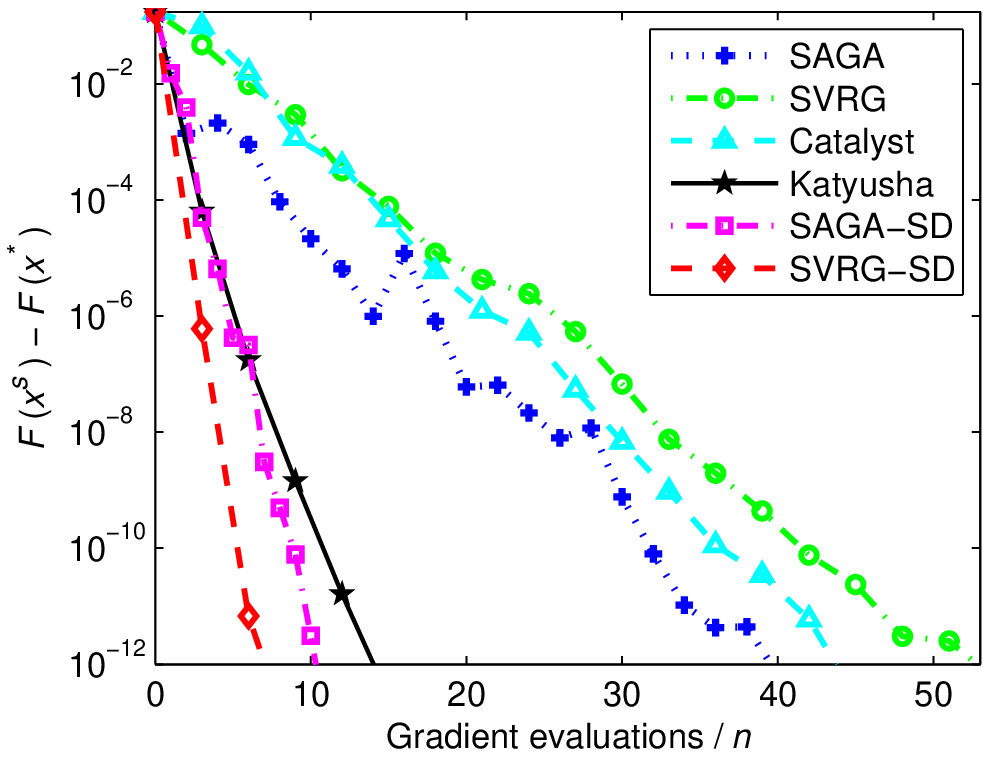}
\caption{Comparison of different VR-SGD methods for solving ridge regression problems ($\lambda\!=\!10^{-4}$) on Ijcnn1 (left), Covtype (center), and SUSY (right). The vertical axis is the objective value minus the minimum, and the horizontal axis denotes the number of effective passes over the data.}
\label{fig2}
\end{figure}

\section{Experimental Results}
In this section, we evaluate the performance of SVRG-SD and SAGA-SD, and compare their performance with their counterparts including SVRG~\cite{johnson:svrg}, its proximal variant (Prox-SVRG)~\cite{xiao:prox-svrg}, and SAGA~\cite{defazio:saga}. Moreover, we also report the performance of the well-known accelerated VR-SGD methods, Catalyst~\cite{lin:vrsg} and Katyusha~\cite{zhu:Katyusha}. For fair comparison, we implemented all the methods in C++ with a Matlab interface (all codes are made available, see link in the Supplementary Materials), and performed all the experiments on a PC with an Intel i5-2400 CPU and 16GB RAM.

\subsection{Ridge Regression}
Our experiments were conducted on three popular data sets: Covtype, Ijcnn1 and SUSY, all of which were obtained from the LIBSVM Data website{\footnote{\url{https://www.csie.ntu.edu.tw/~cjlin/libsvm/}}} (more details and regularization parameters are given in the Supplementary Material). Following~\cite{xiao:prox-svrg}, each feature vector of these date sets has been normalized so that $\|a_{i}\|\!=\!1$ for all $i=1,\ldots,n$, which leads to the same upper bound on the Lipschitz constants $L_{i}$. This step is for comparison only and not necessary in practice. We focus on the ridge regression as the SC example. For SVRG-SD and SAGA-SD, we set $\sigma\!=\!1/2$ on the three data sets. In addition, unlike SAGA~\cite{defazio:saga}, we fixed $m\!=\!n$ for each epoch of SAGA-SD. For SVRG-SD, Catalyst, Katyusha, SVRG and its proximal variant, we set the epoch size $m\!=\!2n$, as suggested in~\cite{zhu:Katyusha,johnson:svrg,xiao:prox-svrg}. Each of these methods had its step size parameter chosen so as to give the fastest convergence.

Figure~\ref{fig2} shows how the objective gap, i.e., $F(x^{s})\!-\!F(x^{*})$, of all these algorithms decreases for ridge regression problems with the regularization parameter $\lambda\!=\!10^{-4}$ (more results are given in the Supplementary Material). Note that the horizontal axis denotes the number of effective passes over the data. As seen in these figures, SVRG-SD and SAGA-SD achieve consistent speedups for all the data sets, and significantly outperform their counterparts, SVRG and SAGA, in all the settings. This confirms that our sufficient decrease technique is able to accelerate SVRG and SAGA. Impressively, SVRG-SD and SAGA-SD usually converge much faster than the well-known accelerated VR-SGD methods, Catalyst and Katyusha, which further justifies the effectiveness of our sufficient decrease stochastic optimization method.

\begin{figure}[t]
\centering
\subfigure[Lasso, $\lambda\!=\!10^{-4}$]{\includegraphics[width=0.326\columnwidth]{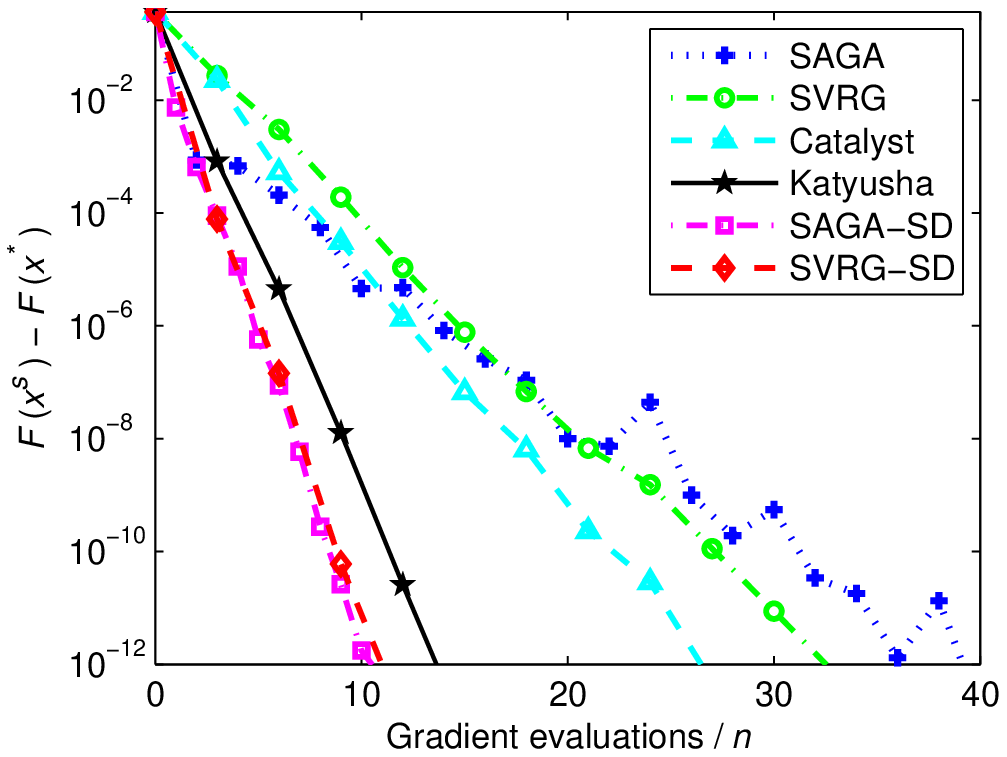}}\,
\subfigure[Elastic-net regularized Lasso, $\lambda_{1}\!=\!\lambda_{2}\!=\!10^{-5}$]{\includegraphics[width=0.326\columnwidth]{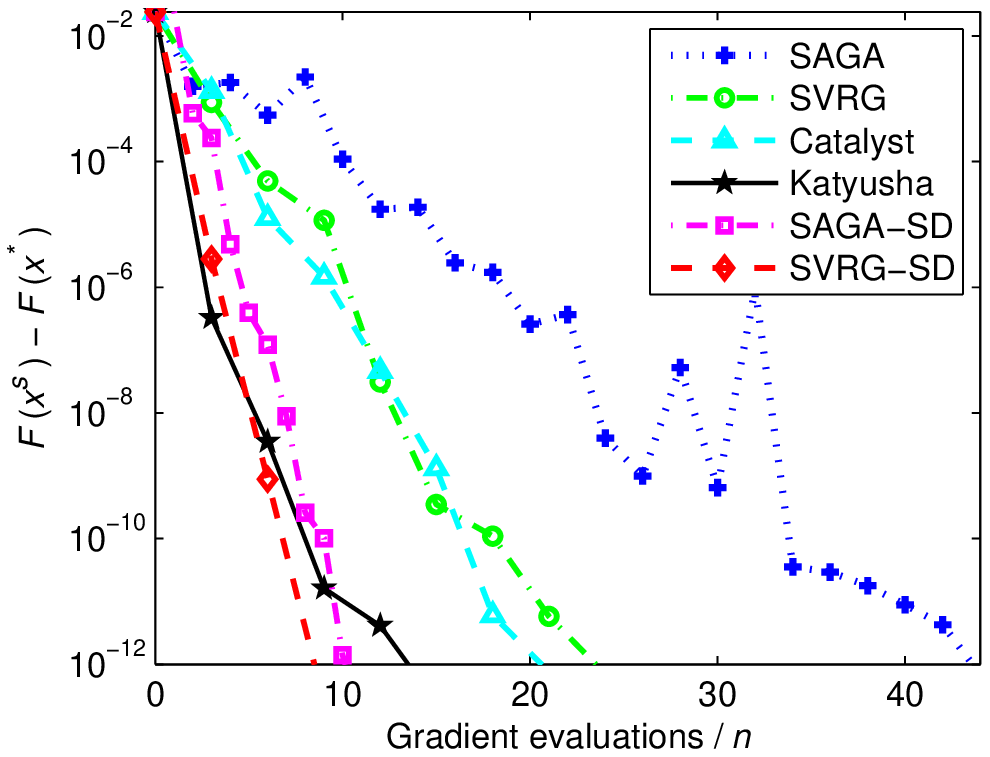}\,\includegraphics[width=0.326\columnwidth]{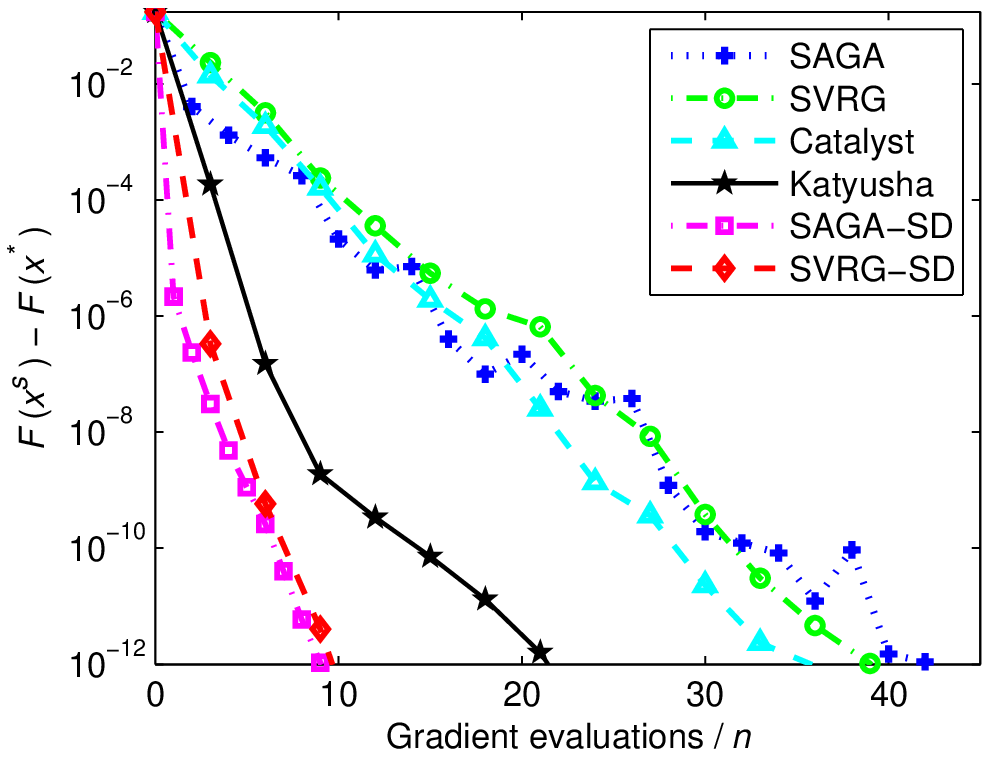}}
\vspace{-1.6mm}

\caption{Comparison of different VR-SGD methods for solving Lasso and elastic-net regularized Lasso problems on the three data sets: Ijcnn1 (left), Covtype (center), and SUSY (right).}
\label{fig3}
\end{figure}

\subsection{Lasso and Elastic-Net Regularized Lasso}
We also conducted experiments of the Lasso and elastic-net regularized (i.e., $\lambda_{1}\|\!\cdot\!\|_{1}\!+\!\lambda_{2}\|\!\cdot\!\|^2$) Lasso problems. We plot some representative results in Figure~\ref{fig3} (see Figures 3 and 4 in the Supplementary Material for more results), which show that SVRG-SD and SAGA-SD significantly outperform their counterparts (i.e., Prox-SVRG and SAGA) in all the settings, as well as Catalyst, and are considerably better than Katyusha in most cases. This empirically verifies that our sufficient decrease technique can accelerate SVRG and SAGA for solving both SC and NSC objectives.

\section{Conclusion \& Future Work}
To the best of our knowledge, this is the first work to design an efficient sufficient decrease technique for stochastic optimization. Moreover, we proposed two different schemes for Lasso and ridge regression to efficiently update the coefficient $\theta$, which takes the important decisions to shrink, expand or move in the opposite direction. This is very different from adaptive learning rate methods, e.g., \cite{kingma:sgd}, and line search methods, e.g., \cite{mahsereci:sgd}, all of which cannot address the issue in Section~\ref{sec31} whatever value the step size is. Unlike most VR-SGD methods~\cite{johnson:svrg,shalev-Shwartz:sdca,xiao:prox-svrg}, which only have convergence guarantees for SC problems, we provided the convergence guarantees of our algorithms for both SC and NSC cases. Experimental results verified the effectiveness of our sufficient decrease technique for stochastic optimization. Naturally, it can also be used to further speed up accelerated VR-SGD methods such as~\cite{zhu:Katyusha,zhu:univr,lin:vrsg}.

As each function $f_{i}(\cdot)$ can have different degrees of smoothness, to select the random index $i^{s}_{k}$ from a non-uniform distribution is a much better choice than simple uniform random sampling~\cite{zhao:prox-smd}, as well as without-replacement sampling vs.\ with-replacement sampling~\cite{shamir:sgd}. On the practical side, both our algorithms tackle the NSC and non-smooth problems directly, without using any quadratic regularizer as in~\cite{zhu:Katyusha,lin:vrsg}, as well as proximal settings. Note that some asynchronous parallel and distributed variants~\cite{lee:dsgd,reddi:sgd} of VR-SGD methods have also been proposed for such stochastic settings. We leave these variations out from our comparison and consider similar extensions to our stochastic sufficient decrease method as future work.


\newpage

\vspace{3mm}

\begin{center}
{\Large \textbf{Supplementary Materials for ``Guaranteed Sufficient Decrease for Variance Reduced Stochastic Gradient Descent''}}
\end{center}

In this supplementary material, we give the detailed proofs for some lemmas, theorems and corollaries stated in the main paper. Moreover, we also report more experimental results for both of our algorithms.

\section*{Notations}
Throughout this paper, $\|\!\cdot\!\|$ denotes the standard Euclidean norm, and $\|\!\cdot\!\|_{1}$ is the $\ell_{1}$-norm, i.e., $\|x\|_{1}\!=\!\sum^{d}_{i=1}\!|x_{i}|$. We denote by $\nabla\!f(x)$ the full gradient of $f(x)$ if it is differentiable, or $\partial f(x)$ the subdifferential of $f(\cdot)$ at $x$ if it is only Lipschitz continuous. Note that Assumption 2 is the general form for the two cases when $F(x)$ is smooth or non-smooth\footnote{Strictly speaking, when the function $F(\cdot)$ is non-smooth, $\vartheta\in \partial F(x)$; while $F(\cdot)$ is smooth, $\vartheta=\nabla F(x)$.}. That is, if $F(x)$ is smooth, the inequality in (12) in Assumption 2 becomes the following form:
\begin{displaymath}
F(y)\geq F(x)+\nabla F(x)(y-x)+\frac{\mu}{2}\|y-x\|^{2}.
\end{displaymath}

\section*{Appendix A: Proof of Theorem 1}
Although the proposed SVRG-SD is a variant of SVRG, it is non-trivial to analyze its convergence property, as well as that of SAGA-SD. Before proving Theorem 1, we first give the following lemma.

\begin{lemma}
\label{lemm1}
Let $x^{*}$ be the optimal solution of Problem (1), then the following inequality holds
\begin{equation*}
\begin{split}
&\mathbb{E}\!\left[\left\|\nabla\! f_{i^{s}_{k}}\!(x^{s}_{k-\!1})\!-\!\nabla\! f(x^{s}_{k-\!1})\!-\!\nabla\! f_{i^{s}_{k}}\!(\widetilde{x}^{s-\!1})\!+\!\nabla\! f(\widetilde{x}^{s-\!1})\right\|^{2}\right]\\
\leq&\, 4L\!\left[F(x^{s}_{k-\!1})-F(x^{*})+F(\widetilde{x}^{s-\!1})-F(x^{*})\right]\!.
\end{split}
\end{equation*}
\end{lemma}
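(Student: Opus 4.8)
The plan is to recognize the left-hand side as a variance, bound it by a second moment, reduce the problem to controlling single-index gradient discrepancies via $L$-smoothness and convexity, and then transfer the resulting bound from the smooth part $f$ to the composite objective $F$ using the optimality of $x^{*}$.

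First I would set $\xi:=\nabla\!f_{i^{s}_{k}}(x^{s}_{k-1})-\nabla\!f_{i^{s}_{k}}(\widetilde{x}^{s-1})$ and note that, since $i^{s}_{k}$ is drawn uniformly from $\{1,\ldots,n\}$, we have $\mathbb{E}[\xi]=\nabla\!f(x^{s}_{k-1})-\nabla\!f(\widetilde{x}^{s-1})$. The quantity to be bounded is therefore exactly $\mathbb{E}\|\xi-\mathbb{E}[\xi]\|^{2}$, and the elementary identity $\mathbb{E}\|\xi-\mathbb{E}[\xi]\|^{2}=\mathbb{E}\|\xi\|^{2}-\|\mathbb{E}[\xi]\|^{2}\leq\mathbb{E}\|\xi\|^{2}$ lets me drop the centering and work instead with $\mathbb{E}\|\nabla\!f_{i^{s}_{k}}(x^{s}_{k-1})-\nabla\!f_{i^{s}_{k}}(\widetilde{x}^{s-1})\|^{2}$.

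Next I would introduce $x^{*}$ as a pivot, writing the difference as $[\nabla\!f_{i}(x^{s}_{k-1})-\nabla\!f_{i}(x^{*})]-[\nabla\!f_{i}(\widetilde{x}^{s-1})-\nabla\!f_{i}(x^{*})]$ and applying $\|a-b\|^{2}\leq 2\|a\|^{2}+2\|b\|^{2}$. Each resulting squared term is handled by the standard consequence of $L$-smoothness and convexity of $f_{i}$ (Assumption~\ref{assum1}), namely $\|\nabla\!f_{i}(z)-\nabla\!f_{i}(x^{*})\|^{2}\leq 2L[f_{i}(z)-f_{i}(x^{*})-\nabla\!f_{i}(x^{*})^{T}(z-x^{*})]$, which follows from $f_{i}(z)\geq f_{i}(x^{*})+\nabla\!f_{i}(x^{*})^{T}(z-x^{*})+\frac{1}{2L}\|\nabla\!f_{i}(z)-\nabla\!f_{i}(x^{*})\|^{2}$. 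Averaging over $i$, so that $\frac{1}{n}\sum_{i}f_{i}=f$ and $\frac{1}{n}\sum_{i}\nabla\!f_{i}=\nabla\!f$, collapses the two contributions into $4L[f(x^{s}_{k-1})-f(x^{*})-\nabla\!f(x^{*})^{T}(x^{s}_{k-1}-x^{*})]+4L[f(\widetilde{x}^{s-1})-f(x^{*})-\nabla\!f(x^{*})^{T}(\widetilde{x}^{s-1}-x^{*})]$.

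The final and only genuinely delicate step is to replace these $f$-based gaps by the composite gaps $F(\cdot)-F(x^{*})$; this is where the regularizer $r$ matters and where the classical SVRG argument (which assumes $r\equiv 0$, so that $\nabla\!f(x^{*})=0$) does not apply directly. Since $x^{*}$ minimizes $F=f+r$, first-order optimality gives $-\nabla\!f(x^{*})\in\partial r(x^{*})$, and convexity of $r$ then yields $r(z)-r(x^{*})\geq -\nabla\!f(x^{*})^{T}(z-x^{*})$ for every $z$. Adding this to $f(z)-f(x^{*})$ shows $f(z)-f(x^{*})-\nabla\!f(x^{*})^{T}(z-x^{*})\leq F(z)-F(x^{*})$, which, applied at $z=x^{s}_{k-1}$ and $z=\widetilde{x}^{s-1}$, produces the claimed inequality with constant $4L$. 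I expect this transfer from $f$ to $F$ to be the main obstacle, since it is the step that must absorb the non-smooth regularizer and the fact that $\nabla\!f(x^{*})$ need not vanish in the composite setting.
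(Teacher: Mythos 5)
Your proof is correct and follows essentially the same route as the paper, which deduces the bound from the variance identity $\mathbb{E}\|\xi-\mathbb{E}\xi\|^{2}=\mathbb{E}\|\xi\|^{2}-\|\mathbb{E}\xi\|^{2}$, the pivot at $x^{*}$ with $\|a-b\|^{2}\leq 2(\|a\|^{2}+\|b\|^{2})$, and the bound $\mathbb{E}[\|\nabla f_{i}(x)-\nabla f_{i}(x^{*})\|^{2}]\leq 2L[F(x)-F(x^{*})]$ cited as Lemma~3.4 of the Prox-SVRG paper (the paper itself treats Lemma~1 as essentially identical to Corollary~3.5 there). The only difference is that you prove that cited sub-lemma from scratch, via co-coercivity of each $f_{i}$ plus the transfer $f(z)-f(x^{*})-\nabla f(x^{*})^{T}(z-x^{*})\leq F(z)-F(x^{*})$ from $-\nabla f(x^{*})\in\partial r(x^{*})$, which is exactly how that result is established and correctly handles the composite case where $\nabla f(x^{*})$ need not vanish.
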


Lemma~\ref{lemm1} provides the upper bound on the expected variance of the variance reduced gradient estimator in (9) (i.e., the SVRG estimator independently introduced in~\cite{johnson:svrg,zhang:svrg}), which satisfies $\mathbb{E}[\widetilde{\nabla}\! f_{i^{s}_{k}}(x^{s}_{k-\!1})]\!=\!\nabla\!f(x^{s}_{k-\!1})$. This lemma is essentially identical to Corollary 3.5 in~\cite{xiao:prox-svrg}. From Lemma~\ref{lemm1}, we immediately get the following result, which is useful in our convergence analysis.

\begin{corollary}\label{coro4}
For any $\alpha\geq\beta>0$, the following inequality holds
\begin{equation*}
\begin{split}
&\alpha\mathbb{E}\!\left[\left\|\nabla\! f_{i^{s}_{k}}\!(x^{s}_{k-\!1})-\!\nabla\! f(x^{s}_{k-\!1})\!-\!\nabla\! f_{i^{s}_{k}}\!(\widetilde{x}^{s-\!1})\!+\!\nabla\! f(\widetilde{x}^{s-\!1})\right\|^{2}\right]\!-\!\beta\mathbb{E}\!\left[\left\|\nabla\! f_{i^{s}_{k}}\!(x^{s}_{k-\!1})\!-\!\nabla\! f_{i^{s}_{k}}\!(\widetilde{x}^{s-\!1})\right\|^{2}\right]\\
\leq\, &4L(\alpha\!-\!\beta)\!\left[F(x^{s}_{k-1})-F(x^{*})+F(\widetilde{x}^{s-1})-F(x^{*})\right]\!.
\end{split}
\end{equation*}
\end{corollary}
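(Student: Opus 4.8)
The plan is to reduce Corollary~\ref{coro4} to Lemma~\ref{lemm1} by exploiting the fact that the two quadratic quantities in the statement differ precisely by a centering. Write $W\!=\!\nabla\! f_{i^{s}_{k}}(x^{s}_{k-\!1})-\nabla\! f_{i^{s}_{k}}(\widetilde{x}^{s-\!1})$ for the ``uncentered'' stochastic difference and $V\!=\!W-\nabla\! f(x^{s}_{k-\!1})+\nabla\! f(\widetilde{x}^{s-\!1})$ for the variance-reduced quantity appearing inside Lemma~\ref{lemm1}. Since $i^{s}_{k}$ is drawn uniformly at random, $\mathbb{E}[\nabla\! f_{i^{s}_{k}}(x^{s}_{k-\!1})]=\nabla\! f(x^{s}_{k-\!1})$ and likewise at $\widetilde{x}^{s-\!1}$, so $\mathbb{E}[W]=\nabla\! f(x^{s}_{k-\!1})-\nabla\! f(\widetilde{x}^{s-\!1})$ and therefore $V=W-\mathbb{E}[W]$; this identification is exactly the unbiasedness $\mathbb{E}[\widetilde{\nabla}\! f_{i^{s}_{k}}(x^{s}_{k-\!1})]=\nabla\! f(x^{s}_{k-\!1})$ recorded just after Lemma~\ref{lemm1}.

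First I would invoke the elementary bias--variance identity $\mathbb{E}[\|W-\mathbb{E}[W]\|^{2}]=\mathbb{E}[\|W\|^{2}]-\|\mathbb{E}[W]\|^{2}$, which yields the key inequality $\mathbb{E}[\|V\|^{2}]\le\mathbb{E}[\|W\|^{2}]$, the gap being exactly $\|\mathbb{E}[W]\|^{2}\ge 0$. Next I would split the left-hand side of the corollary as
\begin{equation*}
\alpha\,\mathbb{E}[\|V\|^{2}]-\beta\,\mathbb{E}[\|W\|^{2}]=(\alpha-\beta)\,\mathbb{E}[\|V\|^{2}]+\beta\bigl(\mathbb{E}[\|V\|^{2}]-\mathbb{E}[\|W\|^{2}]\bigr).
\end{equation*}
The second summand is nonpositive because $\beta>0$ and $\mathbb{E}[\|V\|^{2}]\le\mathbb{E}[\|W\|^{2}]$ from the previous step, so it may simply be dropped. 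For the first summand, the hypothesis $\alpha\ge\beta$ makes $(\alpha-\beta)\ge 0$ a legitimate nonnegative multiplier, so applying Lemma~\ref{lemm1}, which bounds $\mathbb{E}[\|V\|^{2}]$ by $4L[F(x^{s}_{k-\!1})-F(x^{*})+F(\widetilde{x}^{s-\!1})-F(x^{*})]$, and scaling by $(\alpha-\beta)$ produces exactly the claimed right-hand side.

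The argument is essentially mechanical once the centering $V=W-\mathbb{E}[W]$ is recognized, so I do not anticipate a serious obstacle; the only point requiring genuine care is the sign bookkeeping, namely using $\alpha\ge\beta$ to keep $(\alpha-\beta)$ available as a nonnegative factor for Lemma~\ref{lemm1} and using $\beta>0$ to guarantee that the discarded term indeed has the correct (nonpositive) sign. A small check worth making explicit is that all expectations are taken over the same randomness in $i^{s}_{k}$, so that the variance decomposition and Lemma~\ref{lemm1} are applied consistently conditional on $x^{s}_{k-\!1}$ and $\widetilde{x}^{s-\!1}$.
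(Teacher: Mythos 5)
Your proof is correct, and it takes a recognizably different route from the paper's own. Both arguments pivot on the same identity $\mathbb{E}[\|W-\mathbb{E}W\|^{2}]=\mathbb{E}[\|W\|^{2}]-\|\mathbb{E}W\|^{2}$ (with your notation $W=\nabla\! f_{i^{s}_{k}}\!(x^{s}_{k-1})-\nabla\! f_{i^{s}_{k}}\!(\widetilde{x}^{s-1})$ and $V=W-\mathbb{E}W$), but you deploy it so as to keep the \emph{centered} quantity: you split the left-hand side as $(\alpha-\beta)\,\mathbb{E}[\|V\|^{2}]+\beta\bigl(\mathbb{E}[\|V\|^{2}]-\mathbb{E}[\|W\|^{2}]\bigr)$, discard the second summand (which equals $-\beta\|\mathbb{E}W\|^{2}\leq 0$), and invoke Lemma~\ref{lemm1} as a black box to bound $\mathbb{E}[\|V\|^{2}]$. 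The paper goes the other way: it uses the identity to trade $\alpha\,\mathbb{E}[\|V\|^{2}]$ for $\alpha\,\mathbb{E}[\|W\|^{2}]-\alpha\|\mathbb{E}W\|^{2}$, drops $-\alpha\|\mathbb{E}W\|^{2}$, reduces the left-hand side to $(\alpha-\beta)\,\mathbb{E}[\|W\|^{2}]$, and then must bound the \emph{uncentered} second moment from scratch --- inserting $\nabla\! f_{i^{s}_{k}}\!(x^{*})$, applying $\|a-b\|^{2}\leq 2(\|a\|^{2}+\|b\|^{2})$, and citing Lemma 3.4 of \cite{xiao:prox-svrg} --- in effect inlining the proof of Lemma~\ref{lemm1} rather than citing it. The two routes discard the same total slack (namely $\alpha\|\mathbb{E}W\|^{2}$, once one counts the $\|\mathbb{E}W\|^{2}$ already dropped inside the proof of Lemma~\ref{lemm1}) and land on identical constants; yours buys modularity and brevity, while the paper's makes explicit the mildly stronger intermediate fact that the uncentered moment $\mathbb{E}[\|W\|^{2}]$ itself obeys the same $4L$ bound, which is the quantity it actually multiplies by $(\alpha-\beta)$. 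Your closing checks --- that $\alpha\geq\beta$ keeps the retained factor nonnegative, that $\beta>0$ gives the discarded term the right sign, and that all expectations are taken over $i^{s}_{k}$ conditional on $x^{s}_{k-1}$ and $\widetilde{x}^{s-1}$ --- are exactly the points of care, and nothing further is needed.
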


\begin{proof}
\begin{equation*}
\begin{split}
&\alpha\mathbb{E}\!\left[\left\|\nabla\! f_{i^{s}_{k}}\!(x^{s}_{k-\!1})\!-\!\nabla\! f(x^{s}_{k-\!1})\!-\!\nabla\! f_{i^{s}_{k}}\!(\widetilde{x}^{s-\!1})\!+\!\nabla\! f(\widetilde{x}^{s-\!1})\right\|^{2}\right]-\beta\mathbb{E}\!\left[\left\|\nabla\! f_{i^{s}_{k}}\!(x^{s}_{k-\!1})\!-\!\nabla\! f_{i^{s}_{k}}\!(\widetilde{x}^{s-\!1})\right\|^{2}\right]\\
=\,&\alpha\mathbb{E}\!\!\left[\left\|[\nabla\! f_{i^{s}_{k}}\!(x^{s}_{k-\!1})\!-\!\nabla\! f_{i^{s}_{k}}\!(\widetilde{x}^{s-\!1})]\!-\![\nabla\! f(x_{k-1})\!-\!\nabla\! f(\widetilde{x}^{s-\!1})]\right\|^{2}\right]-\beta\mathbb{E}\!\!\left[\left\|\nabla\! f_{i^{s}_{k}}\!(x^{s}_{k-\!1})\!-\!\nabla\! f_{i^{s}_{k}}\!(\widetilde{x}^{s-\!1})\right\|^{2}\right]\\
=\,&\alpha\mathbb{E}\!\!\left[\left\|\nabla\! f_{i^{s}_{k}}\!(x^{s}_{k-\!1})\!-\!\nabla\! f_{i^{s}_{k}}\!(\widetilde{x}^{s-\!1})\right\|^{2}\right]\!-\!\alpha\!\left\|\nabla\! f(x^{s}_{k-\!1})\!-\!\nabla\! f(\widetilde{x}^{s-\!1})\right\|^{2}\!-\!\beta\mathbb{E}\!\!\left[\left\|\nabla\! f_{i^{s}_{k}}\!(x^{s}_{k-\!1})\!-\!\nabla\! f_{i^{s}_{k}}\!(\widetilde{x}^{s-\!1})\right\|^{2}\right]\\
\leq\,&\alpha\mathbb{E}\!\left[\left\|\nabla\! f_{i^{s}_{k}}\!(x^{s}_{k-1})-\nabla\! f_{i^{s}_{k}}\!(\widetilde{x}^{s-1})\right\|^{2}\right]\!-\!\beta\mathbb{E}\!\left[\left\|\nabla\! f_{i^{s}_{k}}\!(x^{s}_{k-\!1})\!-\!\nabla\! f_{i^{s}_{k}}\!(\widetilde{x}^{s-\!1})\right\|^{2}\right]\\
=\,&(\alpha\!-\!\beta)\mathbb{E}\!\left[\left\|\left[\nabla\! f_{i^{s}_{k}}\!(x^{s}_{k-1})-\nabla\! f_{i^{s}_{k}}\!(x^{*})\right]-[\nabla\! f_{i^{s}_{k}}\!(\widetilde{x}^{s-1})-\nabla\! f_{i^{s}_{k}}\!(x^{*})]\right\|^{2}\right]\\
\leq\,&2(\alpha\!-\!\beta)\left\{\mathbb{E}\!\left[\left\|\nabla\! f_{i^{s}_{k}}\!(x^{s}_{k-\!1})-\nabla\! f_{i^{s}_{k}}\!(x^{*})\right\|^{2}\right]+\mathbb{E}\!\left[\left\|\nabla\! f_{i^{s}_{k}}\!(\widetilde{x}^{s-\!1})-\nabla\! f_{i^{s}_{k}}\!(x^{*})\right\|^{2}\right]\right\}\\
\leq\, & 4L(\alpha\!-\!\beta)\!\left[F(x^{s}_{k-1})-F(x^{*})+F(\widetilde{x}^{s-1})-F(x^{*})\right]\!,
\end{split}
\end{equation*}
where the second equality holds due to the fact that $\mathbb{E}[\|x\!-\!\mathbb{E}x\|^{2}]\!=\!\mathbb{E}[\|x\|^{2}]\!-\!\|\mathbb{E}x\|^{2}$; the second inequality holds due to the fact that $\|a-b\|^{2}\leq2(\|a\|^{2}+\|b\|^{2})$; and the last inequality follows from Lemma 3.4 in~\cite{xiao:prox-svrg} (i.e., $\mathbb{E}[\left\|\nabla\! f_{i}(x)\!-\!\nabla\! f_{i}(x^{*})\right\|^{2}]\!\leq\! 2L\!\left[F(x)\!-\!F(x^{*})\right]$).
\end{proof}

Moreover, we also introduce the following lemmas~\cite{baldassarre:prox,lan:sgd}, which are useful in our convergence analysis.

\begin{lemma}\label{prop1}
Let $\widetilde{F}(x,y)$ be the linear approximation of $F(\cdot)$ at $y$ with respect to $f$, i.e.,
\begin{displaymath}
\widetilde{F}(x,y)=f(y)+\left\langle \nabla f(y),\, x-y\right\rangle+ r(x).
\end{displaymath}
Then
\begin{displaymath}
F(x)\leq \widetilde{F}(x,y)+\frac{L}{2}\|x-y\|^{2}\leq F(x)+\frac{L}{2}\|x-y\|^{2}.
\end{displaymath}
\end{lemma}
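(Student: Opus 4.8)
The plan is to peel off the nonsmooth part $r(x)$, which appears identically in $F$ and in $\widetilde{F}$, and reduce the claimed sandwich to the two standard first-order characterizations of the smooth part $f$. The key preliminary observation is that, by Assumption~\ref{assum1} together with the convexity of each component, every $f_i$ is convex and $L$-smooth; since $f=\frac{1}{n}\sum_{i=1}^{n}f_i$ is an average of such functions, $f$ is itself convex with an $L$-Lipschitz gradient. Once this is noted, each of the two inequalities follows from a single textbook fact about $f$.

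For the left inequality $F(x)\leq\widetilde{F}(x,y)+\frac{L}{2}\|x-y\|^2$, I would expand $F(x)=f(x)+r(x)$ and cancel the common $r(x)$ on both sides. What remains is exactly the descent lemma $f(x)\leq f(y)+\langle\nabla f(y),x-y\rangle+\frac{L}{2}\|x-y\|^2$, which is a direct consequence of the $L$-Lipschitz continuity of $\nabla f$ (for instance, by integrating $\langle\nabla f(y+t(x-y))-\nabla f(y),\,x-y\rangle$ over $t\in[0,1]$ and bounding the integrand by $Lt\|x-y\|^2$).

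For the right inequality $\widetilde{F}(x,y)+\frac{L}{2}\|x-y\|^2\leq F(x)+\frac{L}{2}\|x-y\|^2$, I would cancel the term $\frac{L}{2}\|x-y\|^2$ from both sides, so that the claim collapses to $\widetilde{F}(x,y)\leq F(x)$; after again removing the shared $r(x)$, this is precisely the first-order convexity inequality $f(y)+\langle\nabla f(y),x-y\rangle\leq f(x)$ for the convex function $f$. Adding back $r(x)$ and then $\frac{L}{2}\|x-y\|^2$ recovers the stated form.

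Since each piece is an immediate application of a standard smoothness or convexity fact, there is no substantive obstacle here. The only point that warrants an explicit line of justification is that these facts are being applied to the averaged function $f$ rather than to the individual components $f_i$; this is legitimate because both convexity and the Lipschitz constant of the gradient are preserved under taking the uniform average of the $f_i$.
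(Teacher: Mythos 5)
Your proposal is correct, and there is in fact nothing in the paper to compare it against: the paper states this lemma without proof, citing it from prior work (\cite{baldassarre:prox,lan:sgd}). Your argument is the standard one and is complete --- canceling the common $r(x)$ reduces the left inequality to the descent lemma $f(x)\leq f(y)+\langle\nabla f(y),\,x-y\rangle+\frac{L}{2}\|x-y\|^{2}$ and the right inequality to the first-order convexity bound $f(y)+\langle\nabla f(y),\,x-y\rangle\leq f(x)$, and you correctly justify applying both facts to the averaged function $f=\frac{1}{n}\sum_{i=1}^{n}f_{i}$, since a uniform average of convex $L$-smooth functions is itself convex with gradient Lipschitz constant at most $L$. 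In effect you have supplied the proof that the paper delegates to its references, and no step is missing.
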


\begin{lemma}\label{prop2}
Assume that $\hat{x}$ is an optimal solution of the following problem,
\begin{displaymath}
\min_{x\in\mathbb{R}^{d}}\frac{\tau}{2}\|x-y\|^{2}+g(x),
\end{displaymath}
where $g(x)$ is a convex function (but possibly non-differentiable). Then the following inequality holds for all $x\!\in\!\mathbb{R}^{d}$:
\begin{displaymath}
g(\hat{x})+\frac{\tau}{2}\|\hat{x}-y\|^{2}+\frac{\tau}{2}\|x-\hat{x}\|^{2}\leq g(x)+\frac{\tau}{2}\|x-y\|^{2}.
\end{displaymath}
\end{lemma}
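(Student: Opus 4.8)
The plan is to exploit the first-order optimality of $\hat{x}$ together with the strong convexity contributed by the quadratic penalty. First I would set $\psi(x)\!:=\!\frac{\tau}{2}\|x-y\|^{2}+g(x)$ and observe that $\psi$ is $\tau$-strongly convex, since $g$ is convex and the term $\frac{\tau}{2}\|x-y\|^{2}$ supplies exactly curvature $\tau$. Because $\hat{x}$ minimizes $\psi$, its optimality condition is the subdifferential inclusion $0\in\tau(\hat{x}-y)+\partial g(\hat{x})$, so there exists a subgradient $v\in\partial g(\hat{x})$ with $v=\tau(y-\hat{x})$.

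Next I would apply the subgradient inequality for the convex function $g$ at $\hat{x}$: for every $x\in\mathbb{R}^{d}$,
\[
g(x)\geq g(\hat{x})+\langle v,\,x-\hat{x}\rangle=g(\hat{x})+\tau\langle y-\hat{x},\,x-\hat{x}\rangle.
\]
The rest is purely algebraic. I would rewrite the inner product with the polarization identity
\[
2\langle y-\hat{x},\,x-\hat{x}\rangle=\|y-\hat{x}\|^{2}+\|x-\hat{x}\|^{2}-\|x-y\|^{2},
\]
substitute it into the previous display, and then move the $\frac{\tau}{2}\|x-y\|^{2}$ contribution to the left-hand side while collecting the remaining squared norms. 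This rearrangement produces exactly the claimed inequality $g(\hat{x})+\frac{\tau}{2}\|\hat{x}-y\|^{2}+\frac{\tau}{2}\|x-\hat{x}\|^{2}\leq g(x)+\frac{\tau}{2}\|x-y\|^{2}$.

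The only delicate point, and the main obstacle, is the non-differentiability of $g$: I must express the stationarity of $\hat{x}$ as a subdifferential inclusion rather than a gradient equation, and verify that the specific element $v=\tau(y-\hat{x})$ genuinely lies in $\partial g(\hat{x})$. Once that subgradient is identified, the subgradient inequality holds simultaneously for all $x$, and no further care is required. An equivalent and even shorter route would be to use the strong-convexity lower bound $\psi(x)\geq\psi(\hat{x})+\frac{\tau}{2}\|x-\hat{x}\|^{2}$ directly from $0\in\partial\psi(\hat{x})$ and expand $\psi$ on both sides; I expect both derivations to collapse to the same one-line manipulation, which is why this result is standard (cf.\ the references cited for Lemmas~\ref{prop1} and~\ref{prop2}).
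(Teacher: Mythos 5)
Your proof is correct, and both routes you sketch are valid: the optimality condition $0\in\tau(\hat{x}-y)+\partial g(\hat{x})$ does hold here (the Moreau--Rockafellar sum rule applies since the quadratic term is finite and continuous everywhere), the polarization identity $2\langle y-\hat{x},\,x-\hat{x}\rangle=\|y-\hat{x}\|^{2}+\|x-\hat{x}\|^{2}-\|x-y\|^{2}$ is exact, and the rearrangement yields precisely the claimed inequality; the equivalent strong-convexity shortcut $\psi(x)\geq\psi(\hat{x})+\frac{\tau}{2}\|x-\hat{x}\|^{2}$ expands to the same thing. Note that the paper itself gives no proof of this lemma --- it is imported verbatim with citations to prior work --- so your argument is the standard derivation that those references contain, and it fills in exactly the one delicate point you flagged, namely phrasing stationarity as a subdifferential inclusion rather than a gradient equation.
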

\vspace{1mm}

\textbf{Proof of Theorem 1:}
\begin{proof}
Let $\eta=\frac{1}{L\alpha}$ and $p_{i^{s}_{k}}\!=\!\widetilde{\nabla}\! f_{i^{s}_{k}}\!(x^{s}_{k-\!1})\!=\!\nabla\! f_{i^{s}_{k}}\!(x^{s}_{k-1})-\nabla\! f_{i^{s}_{k}}\!(\widetilde{x}^{s-1})+\nabla\! f(\widetilde{x}^{s-1})$. Using Lemma~\ref{prop1}, we have
\begin{equation}\label{equ71}
\begin{split}
F(y^{s}_{k})\leq\,& f(x^{s}_{k-1})+\left\langle\nabla\! f(x^{s}_{k-\!1}),\,y^{s}_{k}\!-\!x^{s}_{k-\!1}\right\rangle+\frac{L\alpha}{2}\!\left\|y^{s}_{k}\!-\!x^{s}_{k-\!1}\right\|^{2}\!-\!\frac{L(\alpha\!-\!1)}{2}\!\left\|y^{s}_{k}\!-\!x^{s}_{k-\!1}\right\|^{2}+r(y^{s}_{k})\\
=\,& f_{i^{s}_{k}}\!(x^{s}_{k-1})+\left\langle p_{i^{s}_{k}},\,y^{s}_{k}-x^{s}_{k-1}\right\rangle+r(y^{s}_{k})+\frac{L\alpha}{2}\!\|y^{s}_{k}-x^{s}_{k-1}\|^2\\
&+\left\langle\nabla\! f(x^{s}_{k-1})-p_{i^{s}_{k}},\,y^{s}_{k}-x^{s}_{k-1}\right\rangle-\frac{L(\alpha\!-\!1)}{2}\|y^{s}_{k}-x^{s}_{k-1}\|^{2}+f(x^{s}_{k-1})-f_{i^{s}_{k}}(x^{s}_{k-1}).
\end{split}
\end{equation}
Then
\begin{equation}\label{equ72}
\begin{split}
&\left\langle\nabla\! f(x^{s}_{k-1})-p_{i^{s}_{k}},\,y^{s}_{k}-x^{s}_{k-1}\right\rangle-\frac{L(\alpha\!-\!1)}{2}\|y^{s}_{k}-x^{s}_{k-1}\|^{2}\\
\leq\,& \frac{1}{2L(\alpha\!-\!1)}\|\nabla\!f(x^{s}_{k-1})-p_{i^{s}_{k}}\|^{2}+\frac{L(\alpha\!-\!1)}{2}\|y^{s}_{k}\!-\!x^{s}_{k-1}\|^{2}-\frac{L(\alpha\!-\!1)}{2}\|y^{s}_{k}\!-\!x^{s}_{k-1}\|^{2}\\
=\,&\frac{1}{2L(\alpha\!-\!1)}\|\nabla\!f(x^{s}_{k-1})-p_{i^{s}_{k}}\|^{2},
\end{split}
\end{equation}
where the inequality follows from the Young's inequality, i.e., $a^{T}b\leq\|a\|^{2}/(2\rho)+\rho\|b\|^{2}/2$ for any $\rho\!>\!0$. Substituting the inequality \eqref{equ72} into the inequality \eqref{equ71}, we have
\begin{equation}\label{equ73}
\begin{split}
F(y^{s}_{k})&\leq f_{i^{s}_{k}}\!(x^{s}_{k-1})+\left\langle p_{i^{s}_{k}},\,y^{s}_{k}-x^{s}_{k-1}\right\rangle+r(y^{s}_{k})+\frac{L\alpha}{2}\|y^{s}_{k}-x^{s}_{k-1}\|^2\\
&\quad+\frac{1}{2L(\alpha\!-\!1)}\|\nabla\!f(x^{s}_{k-1})-p_{i^{s}_{k}}\|^{2}+f(x^{s}_{k-1})-f_{i^{s}_{k}}(x^{s}_{k-1})\\
&\leq f_{i^{s}_{k}}\!(x^{s}_{k-\!1})+r(\widehat{w}^{s}_{k-\!1})+\frac{L\alpha}{2}\!\left(\|\widehat{w}^{s}_{k-\!1}\!-\!x^{s}_{k-1}\|^{2}\!-\!\|\widehat{w}^{s}_{k-\!1}\!-\!y^{s}_{k}\|^{2}\right)+\langle p_{i^{s}_{k}},\,\widehat{w}^{s}_{k-\!1}\!-\!x^{s}_{k-\!1}\rangle\\
&\quad+\frac{1}{2L(\alpha\!-\!1)}\|\nabla\!f(x^{s}_{k-1})-p_{i^{s}_{k}}\|^{2}+f(x^{s}_{k-\!1})-f_{i^{s}_{k}}(x^{s}_{k-\!1})\\
&\leq F_{i^{s}_{k}}\!(\widehat{w}^{s}_{k-1})+\frac{L\alpha}{2}\left(\|\widehat{w}^{s}_{k-1}-x^{s}_{k-1}\|^{2}-\|\widehat{w}^{s}_{k-1}-y^{s}_{k}\|^{2}\right)+f(x^{s}_{k-1})-f_{i^{s}_{k}}(x^{s}_{k-1})\\
&\quad+\frac{1}{2L(\alpha\!-\!1)}\|\nabla\!f(x^{s}_{k-1})-p_{i^{s}_{k}}\|^{2}+\left\langle-\nabla f_{i^{s}_{k}}(\widetilde{x}^{s-1})+\nabla f(\widetilde{x}^{s-1}),\,\widehat{w}^{s}_{k-1}-x^{s}_{k-1}\right\rangle\\
&\leq \sigma F_{i^{s}_{k}}\!(x^{*})+(1-\sigma)F_{i^{s}_{k}}\!(\widehat{x}^{s}_{k-1})+\frac{L\alpha\sigma^{2}}{2}\left(\|x^{*}-z^{s}_{k-1}\|^{2}-\|x^{*}-z^{s}_{k}\|^{2}\right)\\
&\quad+\frac{1}{2L(\alpha\!-\!1)}\|\nabla\!f(x^{s}_{k-1})-p_{i^{s}_{k}}\|^{2}+f(x^{s}_{k-1})-f_{i^{s}_{k}}(x^{s}_{k-1})\\
&\quad+\!\left\langle\nabla f(\widetilde{x}^{s-1})\!-\!\nabla f_{i^{s}_{k}}(\widetilde{x}^{s-1}),\,\widehat{w}^{s}_{k-1}\!-\!x^{s}_{k-1}\right\rangle,
\end{split}
\end{equation}
where $\widehat{w}^{s}_{k-1}\!=\!\sigma x^{*}+(1\!-\!\sigma)\widehat{x}^{s}_{k-1}$, and $\widehat{x}_{k-\!1}\!=\!\theta_{k-\!1}x_{k-2}$. The second inequality follows from Lemma \ref{prop2} with $g(x)\!:=\!\left\langle p_{i^{s}_{k}},\,x\!-\!x^{s}_{k-1}\right\rangle\!+\!r(x)$, $\tau\!=\!L\alpha$, $\hat{x}\!=\!y^{s}_{k}$, $x\!=\!\widehat{w}^{s}_{k-1}$ and $y\!=\!x^{s}_{k-1}$; the third inequality holds due to the convexity of the component function $f_{i^{s}_{k}}(x)$ (i.e., $f_{i^{s}_{k}}\!(x^{s}_{k-\!1})\!+\!\langle\nabla\! f_{i^{s}_{k}}\!(x^{s}_{k-\!1}),\widehat{w}^{s}_{k-\!1}\!-\!x^{s}_{k-\!1}\rangle\!\leq\! f_{i^{s}_{k}}\!(\widehat{w}^{s}_{k-\!1})$); and the last inequality holds due to the convexity of the function $F_{i^{s}_{k}}\!(x)\!:=\!f_{i^{s}_{k}}\!(x)\!+\!r(x)$, and
\begin{displaymath}
z^{s}_{k-1}=[x^{s}_{k-1}-(1\!-\!\sigma)\widehat{x}^{s}_{k-1}]/{\sigma},\;\,z^{s}_{k}=[y^{s}_{k}-(1\!-\!\sigma)\widehat{x}^{s}_{k-1}]/{\sigma},
\end{displaymath}
which mean that $\widehat{w}^{s}_{k-\!1}\!-x^{s}_{k-\!1}=\sigma(x^{*}-z^{s}_{k-\!1})$ and $\widehat{w}^{s}_{k-\!1}\!-y^{s}_{k}=\sigma(x^{*}-z^{s}_{k})$.

Using Property 1 with $\zeta=\frac{\delta\eta}{1-L\eta}$ and $\eta=1/L\alpha,$\footnote{Note that our fast versions of SVRG-SD and SAGA-SD (i.e., SVRG-SD and SAGA-SD with randomly partial sufficient decrease) have the similar convergence properties as SVRG-SD and SAGA-SD because Property 1 still holds when $\theta_{k}\!=\!1$. That is, the main difference between their convergence properties is the different values of $\beta_{k}$, as shown below.} we obtain
\begin{equation}\label{equ74}
\begin{split}
F(\theta_{k}{x}^{s}_{k-1})=F(\widehat{x}^{s}_{k})&\leq F(x^{s}_{k-1})-\frac{(\theta_{k}\!-\!1)^{2}}{2L(\alpha-1)}\|\nabla\!f_{i^{s}_{k}}\!(x^{s}_{k-\!1})-\!\nabla\! f_{i^{s}_{k}}\!(\widetilde{x}^{s-1})\|^{2}\\
&\leq F(x^{s}_{k-1})-\frac{\beta_{k}}{2L(\alpha\!-\!1)}\|\nabla\!f_{i^{s}_{k}}\!(x^{s}_{k-\!1})-\!\nabla\! f_{i^{s}_{k}}\!(\widetilde{x}^{s-1})\|^{2},
\end{split}
\end{equation}
where $\beta_{k}=\min\!\left[1/\alpha_{k},\,(\theta_{k}\!-\!1)^{2}\right]$, and $\alpha_{k}$ is defined below. Then there exists $\overline{\beta}_{k}$ such that
\begin{equation}\label{equ83}
\mathbb{E}\!\left[\frac{\beta_{k}}{2L(\alpha\!-\!1)}\|\nabla\!f_{i^{s}_{k}}\!(x^{s}_{k-\!1})-\!\nabla\! f_{i^{s}_{k}}\!(\widetilde{x}^{s-1})\|^{2}\right]=\frac{\overline{\beta}_{k}}{2L(\alpha\!-\!1)}\mathbb{E}\!\left[\|\nabla\!f_{i^{s}_{k}}\!(x^{s}_{k-\!1})-\!\nabla\! f_{i^{s}_{k}}\!(\widetilde{x}^{s-1})\|^{2}\right],
\end{equation}
where $\overline{\beta}_{k}=\mathbb{E}[\beta_{k}\|\nabla\!f_{i^{s}_{k}}\!(x^{s}_{k-\!1})-\!\nabla\! f_{i^{s}_{k}}\!(\widetilde{x}^{s-1})\|^{2}]/\mathbb{E}[\|\nabla\!f_{i^{s}_{k}}\!(x^{s}_{k-\!1})-\!\nabla\! f_{i^{s}_{k}}\!(\widetilde{x}^{s-1})\|^{2}]$, and $\overline{\beta}_{k}<(\alpha\!-\!1)/{2}$. Using the inequality~\eqref{equ74}, then we have
\begin{equation}\label{equ101}
\begin{split}
\mathbb{E}\!\left[F(\widehat{x}^{s}_{k})-F(x^{*})\right]&\leq  \mathbb{E}\!\left[F(x^{s}_{k-1})-F(x^{*})-\frac{\beta_{k}}{2L(\alpha\!-\!1)}\|\nabla\!f_{i^{s}_{k}}\!(x^{s}_{k-\!1})-\!\nabla\! f_{i_{k}}\!(\widetilde{x}^{s-1})\|^{2}\right]\\
&= \mathbb{E}\!\left[F(x^{s}_{k-1})-F(x^{*})\right]-\frac{\overline{\beta}_{k}}{2L(\alpha\!-\!1)}\mathbb{E}\!\left[\|\nabla\!f_{i^{s}_{k}}\!(x^{s}_{k-\!1})-\!\nabla\! f_{i^{s}_{k}}\!(\widetilde{x}^{s-1})\|^{2}\right].
\end{split}
\end{equation}

There must exist a constant $\alpha_{k}\!>\!0$ such that $F(y^{s}_{k})\!-\!F(x^{*})\!=\!\alpha_{k}[F(x^{s}_{k-\!1})\!-\!F(x^{*})]$. Since
$\mathbb{E}\!\left[f(x^{s}_{k-\!1})\!-\!f_{i^{s}_{k}}\!(x^{s}_{k-\!1})\right]\!=\!0$, $\mathbb{E}\!\left[\nabla\! f_{i^{s}_{k}}(\widetilde{x}^{s-1})\right]\!=\!\nabla\! f(\widetilde{x}^{s-1})$, $\mathbb{E}\!\left[F_{i^{s}_{k}}\!(x^{*})\right]\!=\!F(x^{*})$, and $\mathbb{E}\!\left[F_{i^{s}_{k}}\!(x^{s}_{k-1})\right]\!=\!F(x^{s}_{k-1})$, and taking the expectation of both sides of (\ref{equ73}), we have
\begin{equation}\label{equ75}
\begin{split}
&\alpha_{k}\mathbb{E}\!\left[F(x^{s}_{k-1})-F(x^{*})\right]-\frac{c_{k}\overline{\beta}_{k}}{2L(\alpha\!-\!1)}\mathbb{E}\!\left[\|\nabla\!f_{i^{s}_{k}}\!(x^{s}_{k-\!1})-\!\nabla\! f_{i^{s}_{k}}\!(\widetilde{x}^{s-1})\|^{2}\right]\\
\leq\,&(1-\sigma)\mathbb{E}\!\left[F(\widehat{x}^{s}_{k-1})-F(x^{*})\right]+\frac{L\alpha\sigma^{2}}{2}\mathbb{E}\!\left[\|x^{*}-z^{s}_{k-1}\|^{2}-\|x^{*}-z^{s}_{k}\|^{2}\right]\\
&+\frac{1}{2L(\alpha\!-\!1)}\mathbb{E}\|\nabla\!f(x^{s}_{k-1})-p_{i^{s}_{k}}\|^{2}-\frac{c_{k}\overline{\beta}_{k}}{2L(\alpha\!-\!1)}\mathbb{E}\!\left[\|\nabla\!f_{i^{s}_{k}}\!(x^{s}_{k-\!1})-\!\nabla\! f_{i^{s}_{k}}\!(\widetilde{x}^{s-1})\|^{2}\right]\\
\leq&(1-\sigma)\mathbb{E}\!\left[F(\widehat{x}^{s}_{k-1})-F(x^{*})\right]+\frac{L\alpha\sigma^{2}}{2}\mathbb{E}\!\left[\|x^{*}-z^{s}_{k-1}\|^{2}-\|x^{*}-z^{s}_{k}\|^{2}\right]\\
&+\frac{2(1-c_{k}\overline{\beta}_{k})}{\alpha\!-\!1}\left[F(x^{s}_{k-1})-F(x^{*})+F(\widetilde{x}^{s-1})-F(x^{*})\right],
\end{split}
\end{equation}
where the second inequality follows from Lemma~\ref{lemm1} and Corollary~\ref{coro4}. Here, $c_{k}=\alpha_{k}-[{2(1\!-\!c_{k}\overline{\beta}_{k})}]/({\alpha\!-\!1})$, i.e.,
\begin{displaymath}
c_{k}= \frac{\alpha_{k}(\alpha-1)-2}{\alpha-1-2\overline{\beta}_{k}}.
\end{displaymath}
Since $\frac{2}{\alpha-1}<\sigma$ with the suitable choices of $\alpha$ and $\sigma$, we have $c_{k}>\alpha_{k}-\frac{2}{\alpha-1}>1-\sigma$. Thus, (\ref{equ75}) is rewritten as follows:
\begin{equation}\label{equ76}
\begin{split}
&c_{k}\mathbb{E}\!\left[F(x^{s}_{k-1})-F(x^{*})\right]-\frac{c_{k}\overline{\beta}_{k}}{2L(\alpha-1)}\mathbb{E}\!\left[\|p_{i^{s}_{k}}-\!\nabla\! f_{i^{s}_{k}}\!(\widetilde{x}^{s-1})\|^{2}\right]\\
\leq&\,(1-\sigma)\mathbb{E}[F(\widehat{x}^{s}_{k-1})-F(x^{*})]+\frac{L\alpha\sigma^{2}}{2}\mathbb{E}\!\left[\|x^{*}-z^{s}_{k-1}\|^{2}-\|x^{*}-z^{s}_{k}\|^{2}\right]\\
&\,+\frac{2(1-c_{k}\overline{\beta}_{k})}{\alpha-1}\mathbb{E}\!\left[F(\widetilde{x}^{s-1})-F(x^{*})\right].
\end{split}
\end{equation}

Combining the above two inequalities (\ref{equ101}) and (\ref{equ76}), we have
\begin{equation}
\begin{split}
&c_{k}\mathbb{E}\!\left[F(\widehat{x}^{s}_{k})-F(x^{*})\right]\\
\leq\,&(1-\sigma)\mathbb{E}\!\left[F(\widehat{x}^{s}_{k-1})-F(x^{*})\right]+\frac{L\alpha\sigma^{2}}{2}\mathbb{E}\!\left[\|x^{*}-z^{s}_{k-1}\|^{2}-\|x^{*}-z^{s}_{k}\|^{2}\right]\\
&+\frac{2(1-c_{k}\overline{\beta}_{k})}{\alpha-1}\mathbb{E}\!\left[F(\widetilde{x}^{s-1})-F(x^{*})\right].
\end{split}
\end{equation}

Taking the expectation over the random choice of $i^{s}_{1},i^{s}_{2},\ldots,i^{s}_{m}$, summing up the above inequality over $k=1,\ldots,m$, and $\widehat{x}^{s}_{0}=\widetilde{x}^{s-1}$, we have
\begin{equation}\label{equ102}
\begin{split}
&\mathbb{E}\!\left[\sum^{m}_{k=1}\!\left[c_{k}-(1-\sigma)\right][F(\widehat{x}^{s}_{k})-F(x^{*})]\right]\\
\leq\,&(1-\sigma)\mathbb{E}\!\left[F(\widetilde{x}^{s-1})-F(x^{*})\right]+\frac{L\alpha\sigma^{2}}{2}\mathbb{E}\!\left[\|x^{*}-z^{s}_{0}\|^{2}-\|x^{*}-z^{s}_{m}\|^{2}\right]\\
&+\mathbb{E}\!\left[\sum^{m}_{k=1}\frac{2(1-c_{k}\overline{\beta}_{k})}{\alpha-1}[F(\widetilde{x}^{s-1})-F(x^{*})]\right].
\end{split}
\end{equation}
In addition, there exists $\widehat{\beta}^{s}$ for the $s$-th epoch such that
\begin{equation}\label{equ103}
\begin{split}
&\;\mathbb{E}\!\left[\sum^{m}_{k=1}\left[c_{k}-(1-\sigma)\right][F(\widehat{x}^{s}_{k})-F(x^{*})]\right]\\
=&\;\mathbb{E}\!\left[\sum^{m}_{k=1}\left(\sigma-\frac{2}{\alpha-1}+\frac{2c_{k}\overline{\beta}_{k}}{\alpha-1}\right)[F(\widehat{x}^{s}_{k})-F(x^{*})]\right]\\
=&\;\left(\sigma-\frac{2}{\alpha-1}+\widehat{\beta}^{s}\right)\mathbb{E}\!\left[\sum^{m}_{k=1}[F(\widehat{x}^{s}_{k})-F(x^{*})]\right],
\end{split}
\end{equation}
where
\begin{displaymath}
\widehat{\beta}^{s}=\frac{\mathbb{E}\!\left[\sum^{m}_{k=1}\frac{2c_{k}\beta_{k}}{\alpha-1}[F(\widehat{x}^{s}_{k})-F(x^{*})]\right]}{\mathbb{E}\!\left[\sum^{m}_{k=1}[F(\widehat{x}^{s}_{k})-F(x^{*})]\right]}.
\end{displaymath}
Let $\widehat{\beta}=\min_{s=1,\ldots,S}\widehat{\beta}^{s}$. Using
\begin{displaymath}
\widetilde{x}^{s}=\frac{1}{m}\sum^{m}_{k=1}\widehat{x}^{s}_{k},\;\,F(\widetilde{x}^{s})\leq\frac{1}{m}\sum^{m}_{k=1}F(\widehat{x}^{s}_{k}),
\end{displaymath}
(\ref{equ102}) and (\ref{equ103}), we have
\begin{equation*}
\begin{split}
&\left(\sigma-\frac{2}{\alpha-1}+\widehat{\beta}\right)m\mathbb{E}\!\left[F(\widetilde{x}^{s})-F(x^{*})\right]\\
\leq\,&\left(1-\sigma+\frac{2m}{\alpha\!-\!1}\right)\mathbb{E}\!\left[F(\widetilde{x}^{s-1})-F(x^{*})\right]\\
&+\frac{L\alpha\sigma^{2}}{2}\mathbb{E}\!\left[\|x^{*}-z^{s}_{0}\|^{2}-\|x^{*}-z^{s}_{m}\|^{2}\right].
\end{split}
\end{equation*}
Therefore,
\begin{equation*}
\begin{split}
&\mathbb{E}\!\left[F(\widetilde{x}^{s})-F(x^{*})\right]\\
\leq\,&\left(\frac{1-\sigma}{\left(\sigma-\frac{2}{\alpha-1}+\widehat{\beta}\right)m}+\frac{2}{(\alpha\!-\!1)\left(\sigma-\frac{2}{\alpha-1}+\widehat{\beta}\right)}\right)\mathbb{E}\!\left[F(\widetilde{x}^{s-1})-F(x^{*})\right]\\
&+\frac{L\alpha\sigma^{2}}{2m\left(\sigma-\frac{2}{\alpha-1}+\widehat{\beta}\right)}\mathbb{E}\!\left[\|x^{*}-z^{s}_{0}\|^{2}-\|x^{*}-z^{s}_{m}\|^{2}\right].
\end{split}
\end{equation*}
This completes the proof.
\end{proof}
\vspace{1mm}

\section*{Appendix B: Proofs of Corollaries 1 and 2}

\textbf{Proof of Corollary 1:}
\begin{proof}
For $\mu$-strongly convex problems, and let $x^{s}_{0}=\widehat{x}^{s}_{0}=\widetilde{x}^{s-1}$ and
\begin{equation*}
z^{s}_{0}=\frac{x^{s}_{0}-(1-\sigma)\widehat{x}^{s}_{0}}{\sigma}=\widetilde{x}^{s-1}.
\end{equation*}
Due to the strong convexity of $F(\cdot)$, we have
\begin{equation}
\frac{\mu}{2}\|x^{*}-z^{s}_{0}\|^{2}=\frac{\mu}{2}\|x^{*}-\widetilde{x}^{s-1}\|^{2}\leq F(\widetilde{x}^{s-1})-F(x^{*}).
\end{equation}
Using Theorem 1, we obtain
\begin{equation*}
\begin{split}
&\mathbb{E}\!\left[F(\widetilde{x}^{s})-F(x^{*})\right]\\
\leq&\left(\frac{1-\sigma}{m(\sigma\!-\!\frac{2}{\alpha-1}\!+\!\widehat{\beta})}+\frac{2}{(\alpha\!-\!1)\left(\sigma\!-\!\frac{2}{\alpha-1}\!+\!\widehat{\beta}\right)}+\frac{L\alpha\sigma^{2}}{m\mu\left(\sigma\!-\!\frac{2}{\alpha-1}\!+\!\widehat{\beta}\right)}\right)\mathbb{E}\!\left[F(\widetilde{x}^{s-1})-F(x^{*})\right].
\end{split}
\end{equation*}

Replacing $\alpha$ and $\sigma$ in the above inequality with $19$ and $1/2$, respectively, we have
\begin{equation*}
\begin{split}
&\mathbb{E}\!\left[F(\widetilde{x}^{s})-F(x^{*})\right]\\
\leq&\left(\frac{9}{(7+18\widehat{\beta})m}+\frac{2}{7+18\widehat{\beta}}+\frac{171L}{(14+36\widehat{\beta})m\mu}\right)\mathbb{E}\!\left[F(\widetilde{x}^{s-1})-F(x^{*})\right]\!.
\end{split}
\end{equation*}
This completes the proof.
\end{proof}
\vspace{1mm}

\textbf{Proof of Corollary 2:}
\begin{proof}
For non-strongly convex problems, and using Theorem 1 with $\alpha=19$ and $\sigma=1/2$, we have
\begin{equation}\label{equ105}
\begin{split}
\mathbb{E}[F(\widetilde{x}^{s})-F(x^{*})]\leq&\;\frac{171L}{(28+72\widehat{\beta})m}\mathbb{E}\!\left[\left\|x^{*}-z^{s}_{0}\right\|^{2}-\left\|x^{*}-z^{s}_{m}\right\|^{2}\right]\\
&\;+\left(\frac{9}{(7+18\widehat{\beta})m}+\frac{2}{7+18\widehat{\beta}}\right)\left[F(\widetilde{x}^{s-1})-F(x^{*})\right]\!.
\end{split}
\end{equation}

According to the settings of Algorithm 1 for the non-strongly convex case, and let
\begin{equation*}
x^{s}_{0}=\widehat{x}^{s}_{0}=[x^{s-1}_{m}-(1-\sigma)\widehat{x}^{s-1}_{m}]/\sigma,
\end{equation*}
then we have
\begin{equation*}
z^{s}_{0}=\frac{x^{s}_{0}-(1-\sigma)\widehat{x}^{s}_{0}}{\sigma}=\frac{x^{s-1}_{m}-(1-\sigma)\widehat{x}^{s-1}_{m}}{\sigma},
\end{equation*}
and
\begin{equation*}
z^{s-1}_{m}=\frac{x^{s-1}_{m}-(1-\sigma)\widehat{x}^{s-1}_{m}}{\sigma}.
\end{equation*}
Therefore, $z^{s}_{0}=z^{s-1}_{m}$.

Using $z^{0}_{0}=\widetilde{x}^{0}$, and summing up the inequality (\ref{equ105}) over all $s=1,\ldots,S$, then
\begin{equation*}
\begin{split}
\mathbb{E}\!\left[F\!\left(\frac{1}{S}\sum^{S}_{s=1}\widetilde{x}^{s}\right)-F(x^{*})\right]\leq&\;\frac{171L}{(16+40\widehat{\beta})mS}\left\|x^{*}-\widetilde{x}^{0}\right\|^{2}\\
&\;+\left(\frac{9}{(4+8\widehat{\beta})mS}+\frac{1}{(2+4\widehat{\beta})S}\right)\left[F(\widetilde{x}^{0})-F(x^{*})\right]\!.
\end{split}
\end{equation*}

Due to the settings of Algorithm 1 for the non-strongly convex case, we have
\begin{equation*}\label{equ106}
\begin{split}
\mathbb{E}\!\left[F(\overline{x})-F(x^{*})\right]\leq&\;\frac{171L}{(16+40\widehat{\beta})mS}\left\|x^{*}-\widetilde{x}^{0}\right\|^{2}\\
&\;+\left(\frac{9}{(4+8\widehat{\beta})mS}+\frac{1}{(2+4\widehat{\beta})S}\right)\left[F(\widetilde{x}^{0})-F(x^{*})\right]\!.
\end{split}
\end{equation*}

This completes the proof.
 \end{proof}
\vspace{3mm}

\section*{Appendix C: Proof of Lemma 1}
Lemma 1 provides the upper bound on the expected variance of the variance reduced gradient estimator in (9) (i.e., the SAGA estimator introduced in~\cite{defazio:saga}). Before giving the proof of Lemma 1, we first present the following lemmas.

\begin{lemma}[\cite{defazio:saga}]
\label{lemm12}
Let $x^{*}$ be the optimal solution of Problem (1), then the following inequality holds for all $\phi_{j}$:
\begin{displaymath}
\frac{1}{n}\sum^{n}_{j=1}\left\|\nabla\!f_{j}(\phi_{j})-\nabla\!f_{j}(x^{*})\right\|^{2}\leq 2L\! \left[\frac{1}{n}\sum^{n}_{j=1}f_{j}(\phi_{j})-f(x^{*})-\frac{1}{n}\sum^{n}_{j=1}\left\langle \nabla\!f_{j}(x^{*}),\,\phi_{j}-x^{*}\right\rangle\right]\!.
\end{displaymath}
\end{lemma}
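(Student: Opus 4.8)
The plan is to reduce the averaged bound to a per-component inequality and then exploit the smoothness and convexity of each $f_{j}$ separately. The key fact I would establish first is the standard ``co-coercivity in function values'' estimate for an $L$-smooth convex function $g$: for all $u,v\in\mathbb{R}^{d}$,
\[
g(u)-g(v)-\langle\nabla g(v),\,u-v\rangle\;\geq\;\frac{1}{2L}\|\nabla g(u)-\nabla g(v)\|^{2}.
\]
To prove this, I would introduce the shifted function $h(z)=g(z)-\langle\nabla g(v),\,z\rangle$, which is again $L$-smooth and convex and satisfies $\nabla h(v)=0$, so that $v$ is its global minimizer. Applying the descent lemma for $L$-smooth functions (equivalently, the first inequality of Lemma~\ref{prop1} with $r\equiv0$ applied to $h$) at the point $z-\frac{1}{L}\nabla h(z)$ gives $h(v)\leq h(z)-\frac{1}{2L}\|\nabla h(z)\|^{2}$, and substituting $\nabla h(z)=\nabla g(z)-\nabla g(v)$ together with $h(v)-h(z)=g(v)-g(z)+\langle\nabla g(v),\,z-v\rangle$ yields the displayed inequality.

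Next I would instantiate this estimate with $g=f_{j}$, $u=\phi_{j}$, and $v=x^{*}$, which is legitimate since each $f_{j}$ is convex and $L$-smooth by Assumption~\ref{assum1}. This gives, for every $j$,
\[
\frac{1}{2L}\|\nabla f_{j}(\phi_{j})-\nabla f_{j}(x^{*})\|^{2}\leq f_{j}(\phi_{j})-f_{j}(x^{*})-\langle\nabla f_{j}(x^{*}),\,\phi_{j}-x^{*}\rangle.
\]
Averaging these $n$ inequalities over $j=1,\ldots,n$, using the identity $\frac{1}{n}\sum_{j=1}^{n}f_{j}(x^{*})=f(x^{*})$, and finally multiplying through by $2L$ produces exactly the claimed bound.

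All the steps here are routine; the only genuine content is the co-coercivity estimate, and even that is a textbook consequence of $L$-smoothness and convexity. The points to be careful about are purely bookkeeping: keeping straight which iterate plays which role ($\phi_{j}$ versus $x^{*}$), and recognizing that the cross term $-\frac{1}{n}\sum_{j}\langle\nabla f_{j}(x^{*}),\,\phi_{j}-x^{*}\rangle$ does \emph{not} vanish in general. Since $x^{*}$ minimizes $F=f+r$ rather than $f$ alone, one only has $\frac{1}{n}\sum_{j}\nabla f_{j}(x^{*})=\nabla f(x^{*})$, which need not be zero when $r\not\equiv0$; hence this term must be carried intact through the averaging rather than discarded.
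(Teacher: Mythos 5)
Your proof is correct and is essentially the same argument behind the paper's statement: the paper gives no proof of this lemma, importing it directly from \cite{defazio:saga}, where it is derived exactly as you do—via the per-component co-coercivity bound $f_{j}(\phi_{j})-f_{j}(x^{*})-\left\langle\nabla\!f_{j}(x^{*}),\,\phi_{j}-x^{*}\right\rangle\geq\frac{1}{2L}\left\|\nabla\!f_{j}(\phi_{j})-\nabla\!f_{j}(x^{*})\right\|^{2}$ for each $L$-smooth convex $f_{j}$, averaged over $j=1,\ldots,n$ and scaled by $2L$. Your closing caution is also exactly right: since $x^{*}$ minimizes $F=f+r$ rather than $f$ alone, the term $\frac{1}{n}\sum_{j}\left\langle\nabla\!f_{j}(x^{*}),\,\phi_{j}-x^{*}\right\rangle$ must be retained, and indeed the paper only disposes of it downstream, in the proof of Lemma 1, by combining it with a subgradient of $r$ at $x^{*}$ and the telescoping expectation identity of Lemma~\ref{lemm13}.
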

\vspace{1mm}

\begin{lemma}[\cite{defazio:saga}]
\label{lemm13}
\begin{displaymath}
\begin{split}
\mathbb{E}\!\!\left[\frac{1}{n}\!\sum^{n}_{j=1}\!\left\langle \partial F_{j}(x^{*}),\:\phi^{k}_{j}\!-\!x^{*}\right\rangle\right]\!=\!\frac{1}{n}\!\left\langle \partial F(x^{*}),\,x^{s}_{k-\!1}\!-\!x^{*}\right\rangle+(1\!-\!\frac{1}{n})\frac{1}{n}\!\sum^{n}_{j=1}\!\left\langle \partial F_{j}(x^{*}),\,\phi^{k-\!1}_{j}\!-\!x^{*}\right\rangle\!,
\end{split}
\end{displaymath}
where $F_{i}(\cdot)\!=\!f_{i}(\cdot)+r(\cdot)$, and $\partial F_{i}(x^{*})$ denotes a sub-gradient of $F_{i}(\cdot)$ at $x^{*}$.
\end{lemma}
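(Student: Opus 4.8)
The plan is to compute the conditional expectation directly by conditioning on the random index $i^s_k$ and invoking the SAGA table update rule. Recall that at inner-iteration $k$ exactly one entry of the memory table is refreshed: if $i^s_k=i$ is drawn, then $\phi^k_i=x^s_{k-1}$ while $\phi^k_j=\phi^{k-1}_j$ for every $j\neq i$. Holding the history fixed (hence $x^s_{k-1}$ and all $\phi^{k-1}_j$ are determined), I would first write out the sum $\frac{1}{n}\sum_{j=1}^n\langle \partial F_j(x^*),\,\phi^k_j-x^*\rangle$ for the particular realization $i^s_k=i$, separating the single refreshed term $j=i$, which contributes $\langle \partial F_i(x^*),\,x^s_{k-1}-x^*\rangle$, from the $n-1$ stale terms $j\neq i$, which contribute $\langle \partial F_j(x^*),\,\phi^{k-1}_j-x^*\rangle$.

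Next I would take the expectation over $i^s_k$, which is uniform on $\{1,\ldots,n\}$, so each realization carries weight $1/n$. For the refreshed term this yields $\frac{1}{n}\cdot\frac{1}{n}\sum_{i=1}^n\langle \partial F_i(x^*),\,x^s_{k-1}-x^*\rangle$; using the consistency of the subgradient choice $\frac{1}{n}\sum_{i=1}^n\partial F_i(x^*)=\partial F(x^*)$ together with linearity of the inner product, this collapses to $\frac{1}{n}\langle \partial F(x^*),\,x^s_{k-1}-x^*\rangle$, matching the first term on the right-hand side.

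The remaining piece is a counting argument on the stale terms. Writing $a_j:=\langle \partial F_j(x^*),\,\phi^{k-1}_j-x^*\rangle$, the stale contribution is $\frac{1}{n^2}\sum_{i=1}^n\sum_{j\neq i}a_j$; swapping the order of summation, each $a_j$ is counted once for every $i\neq j$, i.e.\ exactly $n-1$ times, so the double sum equals $(n-1)\sum_{j=1}^n a_j$ and the whole piece becomes $\frac{n-1}{n^2}\sum_{j=1}^n a_j=\left(1-\frac{1}{n}\right)\frac{1}{n}\sum_{j=1}^n a_j$, precisely the second term on the right-hand side. Adding the two contributions gives the claimed identity.

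This argument is essentially bookkeeping rather than a genuine obstacle; the only points requiring care are ensuring the subgradients $\partial F_j(x^*)$ are fixed in advance so that their average truly equals the chosen $\partial F(x^*)$ (exactly the SAGA convention), and correctly tracking the combinatorial factor $n-1$ in the stale-term count. Since everything is conditioned on the history through iteration $k-1$, neither smoothness nor convexity is needed here—the result follows purely from the table update rule and uniform sampling.
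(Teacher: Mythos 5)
Your proof is correct and is precisely the standard argument behind this lemma: the paper itself gives no proof, citing it directly to the SAGA paper \cite{defazio:saga}, and your one-step conditioning on the history, splitting off the single refreshed table entry $\phi^{k}_{i^{s}_{k}}\!=\!x^{s}_{k-1}$, and the $(n-1)$-fold counting of the stale terms under uniform sampling is exactly the computation that reference performs. You also correctly identify the only genuine subtlety, namely that the subgradients must be chosen consistently (e.g., $\partial F_{j}(x^{*})=\nabla\! f_{j}(x^{*})+\xi^{*}$ for a common $\xi^{*}\in\partial r(x^{*})$, as the paper indeed does in its proof of Lemma 1) so that $\frac{1}{n}\sum_{j}\partial F_{j}(x^{*})$ equals the designated $\partial F(x^{*})$.
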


\textbf{Proof of Lemma 1:}
\begin{proof}
Using Lemma~\ref{lemm13}, we have
\begin{equation}\label{equ107}
\begin{split}
&\,\mathbb{E}\!\left[\frac{1}{n}\sum^n_{j=1}\left\langle \partial F_{j}(x^{*}),\phi^{k-1}_{j}-x^{*}\right\rangle\right]\\
=\,&\mathbb{E}\!\left[\frac{1}{n}\left\langle \partial F(x^{*}),\:x^{s}_{k-2}-x^{*}\right\rangle+(1-\frac{1}{n})\frac{1}{n}\sum^n_{j=1}\left\langle \partial F_{j}(x^{*}),\:\phi^{k-2}_{j}-x^{*}\right\rangle\right]\\
=\,&(1-\frac{1}{n})\mathbb{E}\!\left[\frac{1}{n}\sum^n_{j=1}\left\langle \partial F_{j}(x^{*}),\:\phi^{k-2}_{j}-x^{*}\right\rangle\right]\\
=\,&(1-\frac{1}{n})^{k-1}\mathbb{E}[\frac{1}{n}\sum^n_{j=1}\left\langle \partial F_{j}(x^{*}),\:\phi^{0}_{j}-x^{*}\right\rangle]\\
=\,&(1-\frac{1}{n})^{k-1}\mathbb{E}\!\left[\left\langle \partial F(x^{*}),\:{x}^{s}_{0}-x^{*}\right\rangle\right]\\
=\,&0,
\end{split}
\end{equation}
where the second and last equalities hold from the optimality of $x^{*}$, the third equality holds due to Lemma~\ref{lemm13}, and the fourth equality is due to $\phi^{0}_{j}\!=\!{x}^{s}_{0}$ for all $j=1,\ldots,n$.

Since $\mathbb{E}[\nabla\! f_{i^{s}_{k}}\!(x^{s}_{k-\!1})]\!=\!\nabla\!f(x^{s}_{k-\!1})$ and $\mathbb{E}[\nabla\! f_{i^{s}_{k}}\!(\phi^{k-\!1}_{i^{s}_{k}})]\!=\!\frac{1}{n}\!\sum^{n}_{i=1}\!\nabla\!f_{i}(\phi^{k-\!1}_{i})$, then for any $i^{s}_{k}\!\in\![n]$,
\begin{displaymath}
\begin{split}
&\mathbb{E}\!\left[\left\|\nabla\! f_{i^{s}_{k}}\!(x^{s}_{k-\!1})-\!\nabla\!f(x^{s}_{k-\!1})-\nabla\! f_{i^{s}_{k}}\!(\phi^{k-1}_{i^{s}_{k}})+\frac{1}{n}\!\sum^{n}_{j=1}\nabla\! f_{j}(\phi^{k-1}_{j})\right\|^{2}\right]\\
=\,&\mathbb{E}\!\left[\left\|\nabla\! f_{i^{s}_{k}}\!(x^{s}_{k-\!1})-\nabla\! f_{i^{s}_{k}}\!(\phi^{k-1}_{i_{k}})\right\|^{2}\right]-\|\nabla\!f(x_{k-\!1})-\frac{1}{n}\!\sum^{n}_{j=1}\nabla\! f_{j}(\phi^{k-1}_{j})\|^{2}\\
\leq\,&\mathbb{E}\!\left[\left\|\nabla\! f_{i^{s}_{k}}\!(x^{s}_{k-\!1})-\nabla\! f_{i^{s}_{k}}\!(\phi^{k-1}_{i^{s}_{k}})\right\|^{2}\right]\\
=\,&\mathbb{E}\!\left[\left\|[\nabla\! f_{i^{s}_{k}}\!(x^{s}_{k-\!1})-\nabla\! f_{i^{s}_{k}}\!(x^{*})]-[\nabla\! f_{i^{s}_{k}}\!(\phi^{k-1}_{i^{s}_{k}})-\nabla\! f_{i^{s}_{k}}\!(x^{*})]\right\|^{2}\right]\\
\leq\,&2\mathbb{E}\!\left[\left\|\nabla\! f_{i^{s}_{k}}\!(\phi^{k-1}_{i^{s}_{k}})-\nabla\! f_{i^{s}_{k}}\!(x^{*})\right\|^{2}\right]+2\mathbb{E}\!\left[\left\|\nabla\! f_{i^{s}_{k}}\!(x^{s}_{k-\!1})-\nabla\! f_{i^{s}_{k}}\!(x^{*})\right\|^{2}\right]\\
\leq\,& 4L\!\left[\frac{1}{n}\!\sum^{n}_{j=1}\!f_{j}(\phi^{k-\!1}_{j})\!-\!f(x^{*})\!+\!\frac{1}{n}\!\sum^n_{j=1}\!\langle\xi^{*}\!,\phi^{k-\!1}_{j}\!-\!x^{*}\rangle\!-\!\frac{1}{n}\!\sum^n_{j=1}\!\left\langle \nabla\! f_{j}(x^{*})\!+\!\xi^{*}\!,\,\phi^{k-\!1}_{j}\!-\!x^{*}\right\rangle\right]\\
&+4L\!\left[F(x^{s}_{k-\!1})\!-\!F(x^{*})\right]\\
\leq\,& 4L\!\left[\frac{1}{n}\!\sum^{n}_{j=1}\!f_{j}(\phi^{k-\!1}_{j})\!-\!f(x^{*})\!+\!\frac{1}{n}\!\sum^n_{j=1}\!r(\phi^{k-\!1}_{j})\!-\!r(x^{*})\right]+4L\!\left[F(x^{s}_{k-\!1})\!-\!F(x^{*})\right]\\
=\,&4L\!\left[\frac{1}{n}\!\sum^{n}_{j=1}F_{j}(\phi^{k-\!1}_{j})-F(x^{*})+F(x^{s}_{k-\!1})-F(x^{*})\right]\!,
\end{split}
\end{displaymath}
where $\xi^{*}\!=\!\partial r(x^{*})$, if $r(\cdot)$ is non-smooth, and $\xi^{*}\!=\!\nabla r(x^{*})$ otherwise. The first equality holds due to the fact that $\mathbb{E}[\|x\!-\!\mathbb{E}x\|^{2}]\!=\!\mathbb{E}[\|x\|^{2}]\!-\!\|\mathbb{E}x\|^{2}$; the second inequality holds due to the fact that $\|a-b\|^{2}\leq2(\|a\|^{2}+\|b\|^{2})$; and the third inequality follows from Lemma \ref{lemm12} and Lemma 3.4 in~\cite{xiao:prox-svrg}; and the last inequality holds due to the equality in (\ref{equ107}) and the convexity of $r(\cdot)$.
\end{proof}

\section*{Appendix D: Proofs of Theorem 2 and Corollary 3}
From Lemma 1, we immediately have the following result, which is useful in our convergence analysis below.

\begin{corollary}\label{coro5}
For any $\alpha\geq\beta>0$, we have
\begin{equation*}
\begin{split}
&\alpha\;\!\mathbb{E}\!\!\left[\left\|\nabla\! f_{i^{s}_{k}}\!(x^{s}_{k-\!1})\!-\!\nabla\! f(x^{s}_{k-\!1})\!-\!\nabla\! f_{i^{s}_{k}}\!(\phi^{k-\!1}_{i^{s}_{k}})\!+\!\frac{1}{n}\!\sum^{n}_{j=1}\!\nabla\! f_{j}(\phi^{k-\!1}_{j})\right\|^{2}\right]\!-\!\beta\;\!\mathbb{E}\!\!\left[\left\|\nabla\! f_{i^{s}_{k}}\!(x^{s}_{k-\!1})\!-\!\nabla\! f_{i^{s}_{k}}\!(\phi^{k-\!1}_{i^{s}_{k}})\right\|^{2}\right]\\
&\leq 4L(\alpha\!-\!\beta)\!\left[F(x^{s}_{k-\!1})-F(x^{*})+\frac{1}{n}\sum^{n}_{j=1}F_{j}(\phi^{k-\!1}_{j})-F(x^{*})\right]\!.
\end{split}
\end{equation*}
\end{corollary}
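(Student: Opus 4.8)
The plan is to reproduce, almost verbatim, the computation carried out for Corollary~\ref{coro4}, replacing the SVRG estimator by the SAGA estimator and the SVRG variance bound by its SAGA analogue, Lemma~\ref{lemm2}. Throughout, abbreviate $X\!=\!\nabla\! f_{i^{s}_{k}}\!(x^{s}_{k-\!1})-\nabla\! f_{i^{s}_{k}}\!(\phi^{k-\!1}_{i^{s}_{k}})$.

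First I would use the two unbiasedness identities $\mathbb{E}[\nabla\! f_{i^{s}_{k}}\!(x^{s}_{k-\!1})]=\nabla\! f(x^{s}_{k-\!1})$ and $\mathbb{E}[\nabla\! f_{i^{s}_{k}}\!(\phi^{k-\!1}_{i^{s}_{k}})]=\frac{1}{n}\sum^{n}_{j=1}\nabla\! f_{j}(\phi^{k-\!1}_{j})$ to recognise that the vector inside the first norm is precisely $X-\mathbb{E}[X]$. The variance identity $\mathbb{E}[\|X-\mathbb{E}X\|^{2}]=\mathbb{E}[\|X\|^{2}]-\|\mathbb{E}X\|^{2}$ then turns the first expectation into $\mathbb{E}[\|X\|^{2}]-\|\mathbb{E}X\|^{2}$. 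Combining with the second term and using $\alpha\!\geq\!\beta\!>\!0$ to discard the nonpositive contribution $-\alpha\|\mathbb{E}X\|^{2}$, the whole left-hand side is bounded above by $(\alpha\!-\!\beta)\,\mathbb{E}[\|X\|^{2}]$.

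It remains to bound $\mathbb{E}[\|X\|^{2}]$, and here I would simply invoke the estimate already produced inside the proof of Lemma~\ref{lemm2}: after rewriting $X\!=\![\nabla\! f_{i^{s}_{k}}\!(x^{s}_{k-\!1})-\nabla\! f_{i^{s}_{k}}\!(x^{*})]-[\nabla\! f_{i^{s}_{k}}\!(\phi^{k-\!1}_{i^{s}_{k}})-\nabla\! f_{i^{s}_{k}}\!(x^{*})]$, bounding via $\|a-b\|^{2}\!\leq\!2(\|a\|^{2}+\|b\|^{2})$, and applying Lemma 3.4 of~\cite{xiao:prox-svrg} to the $x^{s}_{k-\!1}$ term together with Lemma~\ref{lemm12} to the $\phi$ term, one obtains
\[
\mathbb{E}[\|X\|^{2}]\leq 4L\!\left[F(x^{s}_{k-\!1})-F(x^{*})+\frac{1}{n}\sum^{n}_{j=1}F_{j}(\phi^{k-\!1}_{j})-F(x^{*})\right]\!.
\]
Multiplying through by $(\alpha\!-\!\beta)\!\geq\!0$ yields exactly the stated inequality.

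Since the argument mirrors Corollary~\ref{coro4}, there is no genuine obstacle; the only point requiring care is that the bound on $\mathbb{E}[\|X\|^{2}]$ is not a standalone result but a sub-step of the proof of Lemma~\ref{lemm2}, so I must extract it cleanly. In particular, converting the two gradient-difference norms into $F$- and $F_{j}$-gaps relies on the stored-gradient cross term $\frac{1}{n}\sum^{n}_{j=1}\langle\nabla\! f_{j}(x^{*}),\,\phi^{k-\!1}_{j}-x^{*}\rangle$ vanishing in expectation, which is secured by the telescoping identity~\eqref{equ107} (a consequence of Lemma~\ref{lemm13}), and on the convexity of $r(\cdot)$ to pass from the $f_{j}$-gap to the $F_{j}$-gap. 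Everything else is a direct transcription of the earlier SVRG computation.
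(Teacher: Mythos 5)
Your proposal is correct and coincides with the paper's intended argument: the paper gives no written proof of Corollary~\ref{coro5} (it is declared to follow ``immediately'' from Lemma~\ref{lemm2}), and your transcription of Corollary~\ref{coro4}'s proof---the variance identity $\mathbb{E}[\|X-\mathbb{E}X\|^{2}]=\mathbb{E}[\|X\|^{2}]-\|\mathbb{E}X\|^{2}$, discarding $-\alpha\|\mathbb{E}X\|^{2}\leq 0$, and then bounding $(\alpha-\beta)\,\mathbb{E}[\|X\|^{2}]$ via Lemma 3.4 of~\cite{xiao:prox-svrg}, Lemma~\ref{lemm12}, the vanishing cross term~\eqref{equ107}, and convexity of $r(\cdot)$---is exactly the omitted argument. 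You are also right on the one delicate point: the statement of Lemma~\ref{lemm2} alone bounds only the centered quantity, so the needed bound $\mathbb{E}[\|X\|^{2}]\leq 4L[F(x^{s}_{k-1})-F(x^{*})+\frac{1}{n}\sum^{n}_{j=1}F_{j}(\phi^{k-1}_{j})-F(x^{*})]$ must indeed be extracted as a sub-step of its proof, which you do correctly.
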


\vspace{3mm}
\textbf{Proof of Theorem 2:}
\begin{proof}
Let $p_{i^{s}_{k}}=\nabla\! f_{i^{s}_{k}}\!(x^{s}_{k-\!1})-\nabla\! f_{i^{s}_{k}}\!(\phi^{k-\!1}_{i^{s}_{k}})+\frac{1}{n}\!\sum^{n}_{j=1}\!\nabla\! f_{j}(\phi^{k-\!1}_{j})$, and $\widehat{x}^{s}_{k}=\theta_{k}x^{s}_{k-1}$. By the similar derivation for \eqref{equ75} only replacing Lemma 2 and Corollary~\ref{coro4} with Lemma 1 and Corollary~\ref{coro5}, then the following inequality holds:
\begin{equation}\label{equ78}
\begin{split}
&\alpha_{k}\mathbb{E}\!\left[F(x^{s}_{k-1})-F(x^{*})\right]-\frac{c_{k}\overline{\beta}_{k}}{2L(\alpha\!-\!1)}\mathbb{E}\!\left[\|p_{i^{s}_{k}}-\nabla f(x^{s}_{k-\!1})\|^{2}\right]\\
\leq\,&(1-\sigma)\mathbb{E}\!\left[F(\widehat{x}^{s}_{k-1})-F(x^{*})\right]+\frac{L\alpha\sigma^{2}}{2}\mathbb{E}\!\left[\|x^{*}\!-z^{s}_{k-1}\|^{2}-\|x^{*}\!-z^{s}_{k}\|^{2}\right]\\
&\!\!+\!\frac{2(1\!-\!c_{k}\overline{\beta}_{k})}{\alpha-1}\!\left[F(x^{s}_{k-\!1})\!-\!F(x^{*})\!+\!\frac{1}{n}\!\sum^{n}_{j=1}F_{j}(\phi^{k-\!1}_{j})\!-\!F(x^{*})\right]\!.
\end{split}
\end{equation}

Given $q>0$, and using the result in the proof of Theorem 1 in \cite{defazio:saga}, we obtain
\begin{equation}\label{equ79}
\begin{split}
\frac{q}{n}\!\left[F(x^{s}_{k-\!1})-F(x^{*})\right]=q\mathbb{E}\!\left[\frac{1}{n}\!\sum^{n}_{j=1}\!F_{j}(\phi^{k}_{j})-F(x^{*})\right]-q(1\!-\!\frac{1}{n})\!\left(\frac{1}{n}\!\sum^{n}_{j=1}\!F_{j}(\phi^{k-\!1}_{j})-F(x^{*})\right)\!.
\end{split}
\end{equation}

Using (\ref{equ79}) and Lemma 1, then (\ref{equ78}) is rewritten as follows:
\begin{equation}\label{equ80}
\begin{split}
&\left(\alpha_{k}\!-\!\frac{2(1\!-\!c_{k}\overline{\beta}_{k})}{\alpha-1}\!-\!\frac{q}{n}\right)\mathbb{E}\!\left[F(x^{s}_{k-\!1})\!-\!F(x^{*})\right]\!-\!\frac{c_{k}\overline{\beta}_{k}}{2L(\alpha\!-\!1)}\mathbb{E}\!\left[\|\nabla\! f_{i^{s}_{k}}(x^{s}_{k-\!1})\!-\!\nabla\! f_{i^{s}_{k}}\!(\phi^{k-\!1}_{i^{s}_{k}})\|^{2}\right]\\
\leq\,&(1-\sigma)\mathbb{E}\!\left[F(\widehat{x}^{s}_{k-1})-F(x^{*})\right]+\frac{L\alpha\sigma^{2}}{2}\mathbb{E}\!\left[\|x^{*}-z^{s}_{k-1}\|^{2}-\|x^{*}-z^{s}_{k}\|^{2}\right]\\
&+\left(\frac{2(1\!-\!c_{k}\overline{\beta}_{k})}{\alpha-1}\!+\!q(1\!-\!\frac{1}{n})\right)\!\left[\frac{1}{n}\sum^{n}_{j=1}F_{j}(\phi^{k-\!1}_{j})\!-\!F(x^{*})\right]\!-q\mathbb{E}\!\left[\frac{1}{n}\!\sum^{n}_{j=1}F_{j}(\phi^{k}_{j})\!-\!F(x^{*})\right]\!.
\end{split}
\end{equation}
Let
\begin{equation}\label{equ81}
\frac{q}{n}=\frac{2}{\alpha-1}\quad \textup{and}\quad c_{k}=\alpha_{k}-\frac{q}{n}-\frac{2(1-c_{k}\overline{\beta}_{k})}{\alpha-1}.
\end{equation}
Therefore,
\begin{equation*}
\begin{split}
c_{k}=\frac{\alpha_{k}(\alpha-1)-4}{\alpha-1-2\overline{\beta}_{k}}>0.
\end{split}
\end{equation*}

Using (\ref{equ80}) and (\ref{equ81}), we have
\begin{equation*}
\begin{split}
&c_{k}\mathbb{E}\!\left[F(\widehat{x}^{s}_{k})-F(x^{*})\right]\\
\leq\,& c_{k}\mathbb{E}\!\left[F(x^{s}_{k-1})-F(x^{*})\right]-\frac{c_{k}\overline{\beta}_{k}}{2L(\alpha-1)}\mathbb{E}\!\left[\left\|\nabla\! f_{i^{s}_{k}}(x^{s}_{k-1})-\nabla\! f_{i^{s}_{k}}\!(\phi^{k-\!1}_{i^{s}_{k}})\right\|^{2}\right]\\
\leq\,&(1-\sigma)\left[F(\widehat{x}^{s}_{k-1})-F(x^{*})\right]+\frac{L\alpha\sigma^{2}}{2}\mathbb{E}\!\left[\left\|x^{*}-z^{s}_{k-1}\|^{2}-\|x^{*}-z^{s}_{k}\right\|^{2}\right]\\
&+\!\left(\frac{2(1\!-\!c_{k}\overline{\beta}_{k})}{\alpha-1}\!+\!q(1\!-\!\frac{1}{n})\right)\!\left(\frac{1}{n}\!\sum^{n}_{j=1}\!F_{j}(\phi^{k-\!1}_{j})\!-\!F(x^{*})\right)\!-\!q\mathbb{E}\!\!\left[\frac{1}{n}\!\sum^{n}_{j=1}\!F_{j}(\phi^{k}_{j})\!-\!F(x^{*})\right]\\
\leq\,&(1-\sigma)[F(\widehat{x}^{s}_{k-1})-F(x^{*})]+\frac{L\alpha\sigma^{2}}{2}\mathbb{E}\!\left[\left\|x^{*}-z^{s}_{k-1}\right\|^{2}-\left\|x^{*}-z^{s}_{k}\right\|^{2}\right]\\
&+q\left(\frac{1}{n}\sum^{n}_{j=1}F_{j}(\phi^{k-1}_{j})-F(x^{*})\right)-q\mathbb{E}\!\left[\frac{1}{n}\sum^{n}_{j=1}F_{j}(\phi^{k}_{j})-F(x^{*})\right]\!.
\end{split}
\end{equation*}

Taking the expectation over the random choice of the history of $i^{s}_{1},\ldots,i^{s}_{m}$, using Lemma 1, and summing up the above inequality over $k=1,\ldots,m$, then \begin{equation}\label{equ82}
\begin{split}
&\mathbb{E}\!\left[\sum^{m}_{k=1}(c_{k}-(1-\sigma))\left[F(\widehat{x}^{s}_{k})-F(x^{*})\right]\right]\\
\leq\,&(1-\sigma)[F(\widehat{x}^{s}_{0})-F(x^{*})]+\frac{L\alpha\sigma^{2}}{2}\mathbb{E}\!\left[\left\|x^{*}-z^{s}_{0}\right\|^{2}-\left\|x^{*}-z^{s}_{m}\right\|^{2}\right]\\
&+q\mathbb{E}\!\left[\frac{1}{n}\sum^{n}_{j=1}F_{j}(\phi^{0}_{j})-F(x^{*})\right]-q\mathbb{E}\!\left[\frac{1}{n}\sum^{n}_{j=1}F_{j}(\phi^{m}_{j})-F(x^{*})\right].
\end{split}
\end{equation}
$c_{k}$ and ${q}/{n}$ are defined in (\ref{equ81}) with $\sigma=1/2$ and $\alpha=19$, then there exists $\widehat{\beta}^{s}\geq 0$ for the $s$-th epoch such that
\begin{equation}\label{equ83}
\begin{split}
\mathbb{E}\!\left[\sum^{m}_{k=1}(c_{k}-(1\!-\!\sigma))\left[F(\widehat{x}^{s}_{k})-F(x^{*})\right]\right]&=\mathbb{E}\!\left[\sum^{m}_{k=1}\frac{7+2c_{k}\overline{\beta}_{k}}{9}\left[F(\widehat{x}^{s}_{k})-F(x^{*})\right]\right]\\
&=\left(\frac{7}{9}\!+\!\widehat{\beta}^{s}\right)\mathbb{E}\!\left[\sum^{m}_{k=1}[F(\widehat{x}^{s}_{k})-F(x^{*})]\right],
\end{split}
\end{equation}
where $\widehat{\beta}^{s}=\mathbb{E}\!\left[\frac{2}{9}\sum^{m}_{k=1}c_{k}\overline{\beta}_{k}\!\left(F(\widehat{x}^{s}_{k})-F(x^{*})\right)\right]/\mathbb{E}[\sum^{m}_{k=1}(F(\widehat{x}^{s}_{k})-F(x^{*}))]$.

Let $\widehat{\beta}\!=\!\min_{s=1,\ldots,S}\widehat{\beta}^{s}$ as in the proof of Theorem 1. Using $\widetilde{x}^{s}\!=\!\frac{1}{m}\sum^{m}_{k=1}\widehat{x}^{s}_{k}$, $F(\widetilde{x}^{s})\!\leq\! \frac{1}{m}\sum^{m}_{k=1}F(\widehat{x}^{s}_{k})$, and (\ref{equ83}), then (\ref{equ82}) is rewritten as follows:
\begin{equation*}
\begin{split}
&m\left(\frac{7}{9}+\widehat{\beta}\right)\mathbb{E}\!\left[F(\widetilde{x}^{s})-F(x^{*})\right]\\
\leq\,&\frac{1}{2}\!\left[F(\widetilde{x}^{s-1})-F(x^{*})\right]+\frac{19L}{8}\mathbb{E}\!\left[\left\|x^{*}-\widetilde{x}^{s-1}\right\|^{2}\right]\\
&+q\left(\frac{1}{n}\sum^{n}_{j=1}F_{j}(\phi^{0}_{j})-F(x^{*})\right)-q\:\!\mathbb{E}\!\left[\frac{1}{n}\sum^{n}_{j=1}F_{j}(\phi^{m}_{j})-F(x^{*})\right]\\
\leq\,&\frac{19L}{8}\mathbb{E}\!\left[\|x^{*}-\widetilde{x}^{s-1}\|^{2}\right]+\left(\frac{1}{2}+q\right)\left[F(\widetilde{x}^{s-1})-F(x^{*})\right],
\end{split}
\end{equation*}
where the first and second inequalities hold due to the facts that $\widehat{x}^{s}_{0}=\widetilde{x}^{s-1}$ and $\phi^{0}_{j}=\widetilde{x}^{s-1}$.

Setting $\sigma=1/2$, $\alpha=19$, $\frac{2}{\alpha-1}=\frac{q}{n}$, and using the $\mu$-strongly convex property, we have
\begin{equation*}
m\left(\frac{7}{9}+\widehat{\beta}\right)\mathbb{E}\!\left[F(\widetilde{x}^{s})-F(x^{*})\right]\leq\left(\frac{1}{2}+\frac{n}{9}+\frac{19L}{4\mu}\right)\left[F(\widetilde{x}^{s-1})-F(x^{*})\right].
\end{equation*}

Therefore,
\begin{equation*}
\mathbb{E}\!\left[F(\widetilde{x}^{s})-F(x^{*})\right]\leq\left(\frac{n}{(7+9\widehat{\beta})m}+\frac{9}{(14+18\widehat{\beta})m}+\frac{171L}{(28+36\widehat{\beta})\mu m}\right)\mathbb{E}\!\left[F(\widetilde{x}^{s-1})-F(x^{*})\right]\!.
\end{equation*}
This completes the proof.
\end{proof}

\vspace{3mm}
\textbf{Proof of Corollary 3:}
\begin{proof}
Using the similar derivation in the proof of Theorem 2 for the strongly convex case, and with the same parameter settings (i.e., $\sigma\!=\!1/2$, $\alpha\!=\!19$, and $\frac{2}{\alpha-1}\!=\!\frac{q}{n}$), we have
\begin{equation}\label{equ108}
\begin{split}
&\left(\frac{7}{9}+\widehat{\beta}\right)\mathbb{E}\!\left[F(\widetilde{x}^{s})-F(x^{*})\right]\\
\leq&\;\frac{19L}{8m}\mathbb{E}\!\left[\left\|x^{*}-z^{s}_{0}\right\|^{2}-\left\|x^{*}-z^{s}_{m}\right\|^{2}\right]+\left(\frac{1}{2m}+\frac{n}{9m}\right)\left[F(\widetilde{x}^{s-1})-F(x^{*})\right]\!.
\end{split}
\end{equation}

According to the settings of Algorithm~\ref{alg12} for the non-strongly convex case as in Algorithm 1, we have
\begin{equation*}
z^{s}_{0}=z^{s-1}_{m},\;\,z^{0}_{0}=\widetilde{x}^{0}.
\end{equation*}
Summing up the above inequality (\ref{equ108}) over $s=1,\ldots,S$, and setting $m=n$, then
\begin{equation*}
\begin{split}
&\,\mathbb{E}\!\left[F(\overline{x})-F(x^{*})\right]\leq\mathbb{E}\!\left[F\left(\frac{1}{S}\sum^{S}_{s=1}\widetilde{x}^{s}\right)-F(x^{*})\right]\\
\leq&\,\frac{171L}{(49\!+\!56\widehat{\beta})nS}\|x^{*}\!-\!\widetilde{x}^{0}\|^{2}\!+\!\left(\frac{81}{(98\!+\!126\widehat{\beta})nS}\!+\!\frac{9}{(49\!+\!63\widehat{\beta})S}\right)\!\left[F(\widetilde{x}^{0})\!-\!F(x^{*})\right]\!.
\end{split}
\end{equation*}
This completes the proof.
\end{proof}

\section*{Appendix E: Codes and Data Sets}
In this section, we first present the detailed descriptions for the three popular data sets: Covtype, SUSY and Ijcnn1, which were obtained from the LIBSVM Data website{\footnote{\url{https://www.csie.ntu.edu.tw/~cjlin/libsvm/}}}, as shown in Table~\ref{tab_sim1}. The C++ code of SVRG~\cite{johnson:svrg} was downloaded from~\url{http://riejohnson.com/svrg_download.html}. For fair comparison, we implemented the proposed SVRG-SD and SAGA-SD (see Algorithm~\ref{alg12}) algorithms, SAGA~\cite{defazio:saga}, Prox-SVRG~\cite{xiao:prox-svrg}, Catalyst~\cite{lin:vrsg} (which is based on SVRG and has three important parameters: $\alpha_{k}$, $\kappa$, and the learning rate, $\eta$), and Katyusha~\cite{zhu:Katyusha} in C++ with a Matlab interface\footnote{The codes of all those algorithms can be downloaded by the following link:\\ \centerline{\url{https://www.dropbox.com/s/5sg7h49qctr9ahi/Code_VD_SGD.zip?dl=0.}}}, and performed all the experiments on a PC with an Intel i5-2400 CPU and 16GB RAM.

\begin{table}[!th]
\centering
\caption{Data sets and their regularization parameters.}
\label{tab_sim1}
\setlength{\tabcolsep}{6.9pt}
\linespread{1.36}
\begin{tabular}{lccc}
\hline
\ Data sets   & Sizes $n$    & Dimensions $d$  & Sparsity \\
\hline
\ Ijcnn1      & 49,990         & 22             & 59.09\% \\
\ Covtype     & 581,012        & 54             & 22.12\% \\
\ SUSY        & 5,000,000      & 18             & 98.82\% \\
\ Sido0       & 12,678         & 4,932          & 9.84\% \\
\hline
\end{tabular}
\end{table}

\begin{algorithm}[t]
\caption{SAGA-SD}
\label{alg12}
\renewcommand{\algorithmicrequire}{\textbf{Input:}}
\renewcommand{\algorithmicensure}{\textbf{Initialize:}}
\renewcommand{\algorithmicoutput}{\textbf{Output:}}
\begin{algorithmic}[1]
\REQUIRE the number of epochs $S$, the number of iterations $m$ per epoch, and step size $\eta$.\\
\ENSURE $\widetilde{x}^{0}$.
\FOR{$s=1,2,\ldots S$}
\STATE {$\,x^{s}_{0}\!=\widehat{x}^{s}_{0}\!=\widetilde{x}^{s-\!1}$;}
\FOR{$k=1,\ldots,m$}
\STATE {Pick $i^{s}_{k}$ uniformly at random from $[n]$;}
\STATE {Take $\phi^{k}_{i^{s}_{k}}\!=x^{s}_{k-1}$ and store $\nabla\! f_{i^{s}_{k}}\!(\phi^{k}_{i^{s}_{k}})$ in the table;}
\STATE {$\widetilde{\nabla}\!f_{i^{s}_{k}}\!(x^{s}_{k-\!1})=\nabla\! f_{i^{s}_{k}}\!(x^{s}_{k-\!1})\!-\!\nabla\! f_{i^{s}_{k}}\!(\phi^{k-\!1}_{i^{s}_{k}})\!+\!\frac{1}{n}\!\sum^{n}_{j=1}\!\!\nabla\! f_{j}(\phi^{k-\!1}_{j})$;}
\STATE {$y^{s}_{k}=\textrm{prox}^{r}_{\eta}\!\left(x^{s}_{k-1}-\eta\widetilde{\nabla} f_{i^{s}_{k}}(x^{s}_{k-1})\right)$;}
\STATE {$\theta_{k}= \arg\min_{\theta\in\mathbb{R}}F(\theta x^{s}_{k-\!1})\!+\!\frac{\zeta(1\!-\!\theta)^2}{2}\!\|\nabla\!f_{i^{s}_{k}}\!(x^{s}_{k-\!1})-\!\nabla\! f_{i^{s}_{k}}\!(\phi^{k-\!1}_{i^{s}_{k}})\|^{2}$;}
\STATE {$x^{s}_{k}=y^{s}_{k}+(1\!-\!\sigma)(\widehat{x}^{s}_{k}-\widehat{x}^{s}_{k-1})$\: and\: $\widehat{x}^{s}_{k}=\theta_{k}x^{s}_{k-\!1}$;}
\ENDFOR
\STATE {$\widetilde{x}^{s}=\frac{1}{m}\!\sum^{m}_{k=1}\!\widehat{x}^{s}_{k}$;}
\ENDFOR
\OUTPUT $\overline{x}\!=\!\widetilde{x}^{S}$
\end{algorithmic}
\end{algorithm}

\begin{figure}[t]
\centering
\subfigure[Ijcnn1, $\lambda\!=\!10^{-4}$]{\includegraphics[width=0.326\columnwidth]{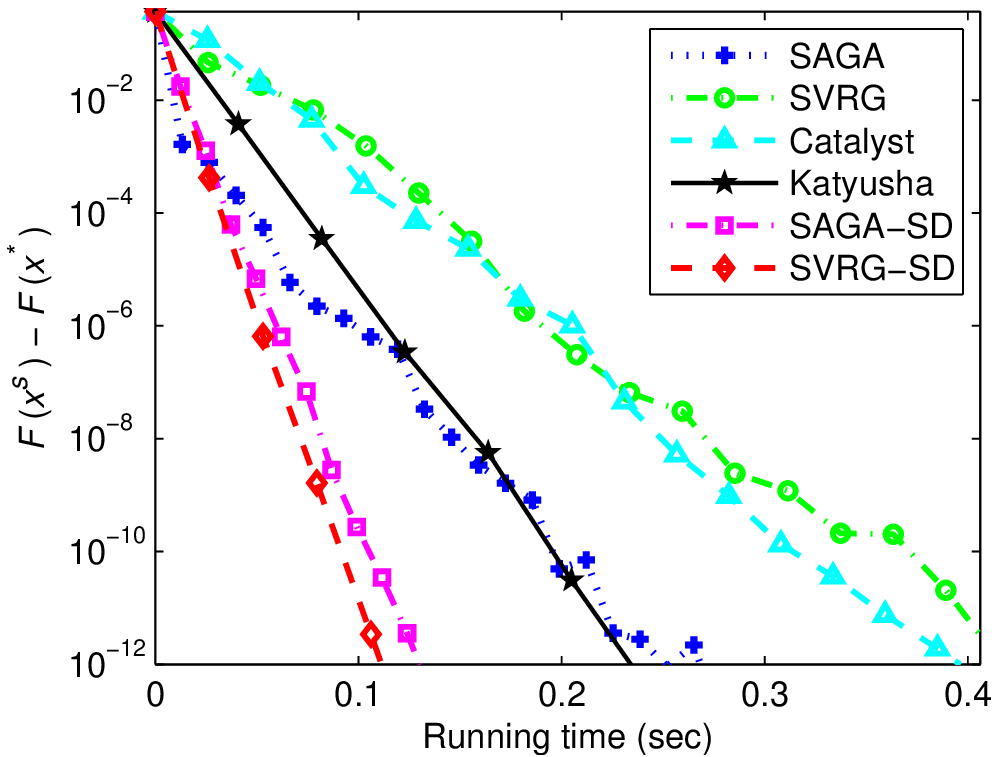}}\,
\subfigure[Covtype, $\lambda\!=\!10^{-4}$]{\includegraphics[width=0.326\columnwidth]{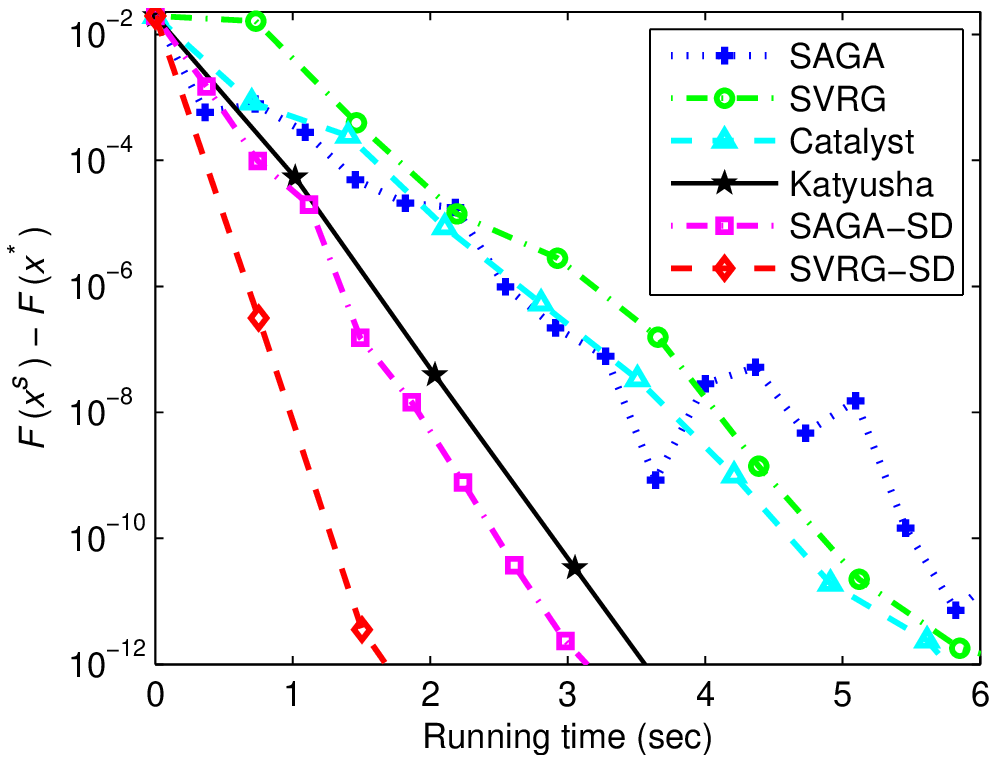}}\,
\subfigure[SUSY, $\lambda\!=\!10^{-4}$]{\includegraphics[width=0.326\columnwidth]{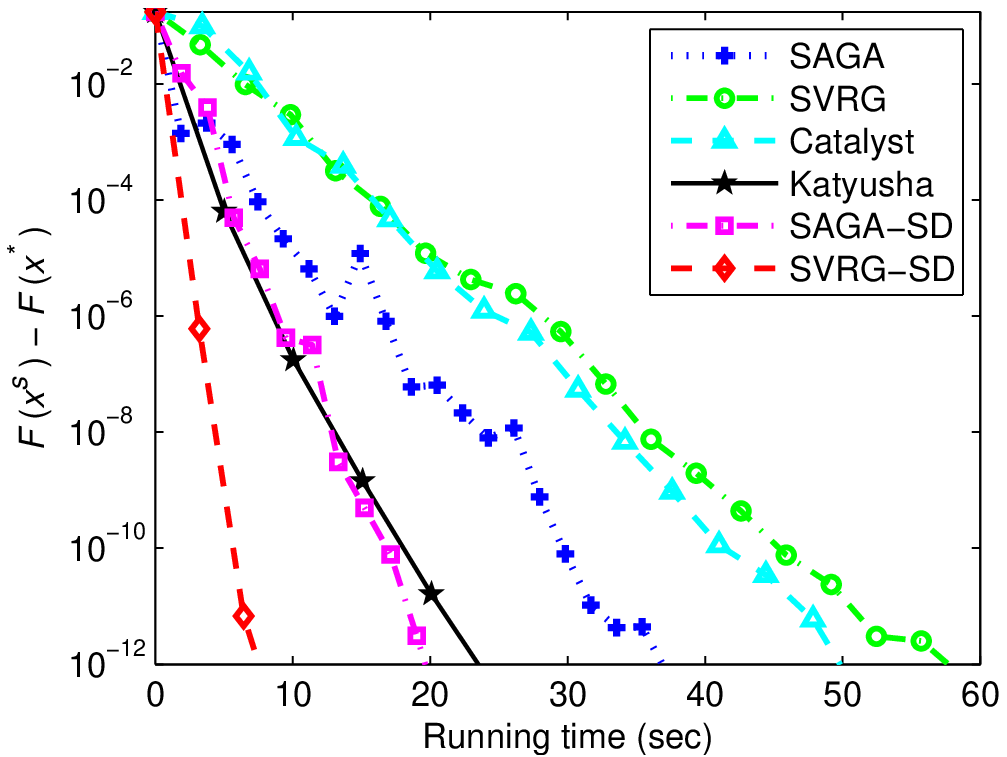}}
\vspace{1mm}

\includegraphics[width=0.326\columnwidth]{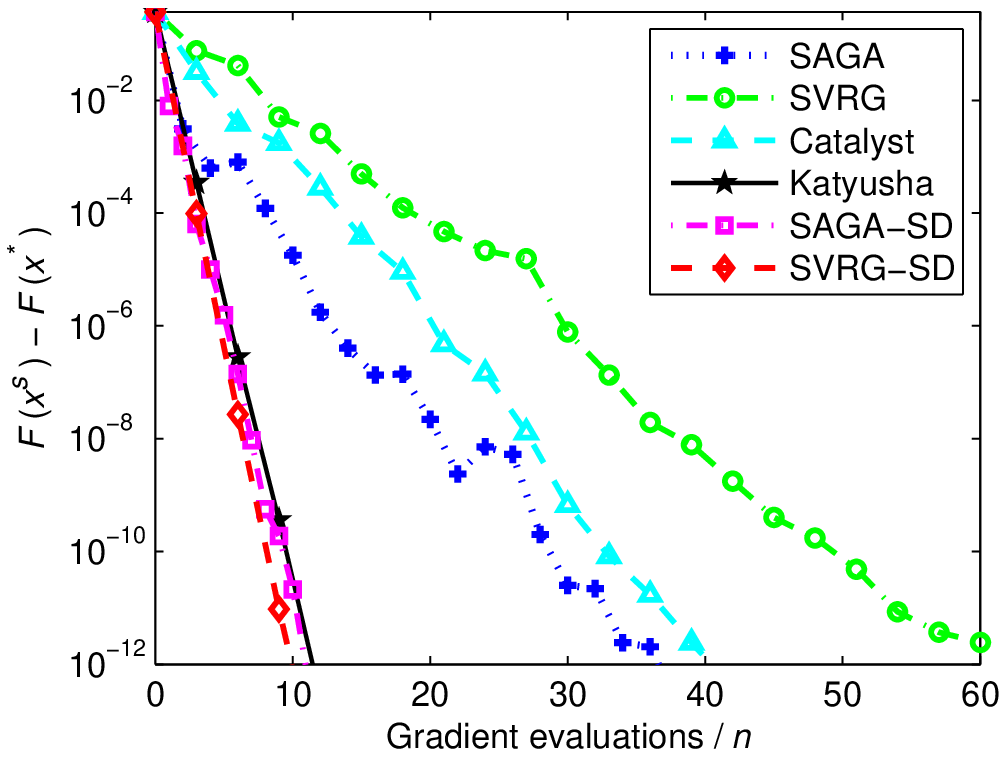}\,
\includegraphics[width=0.326\columnwidth]{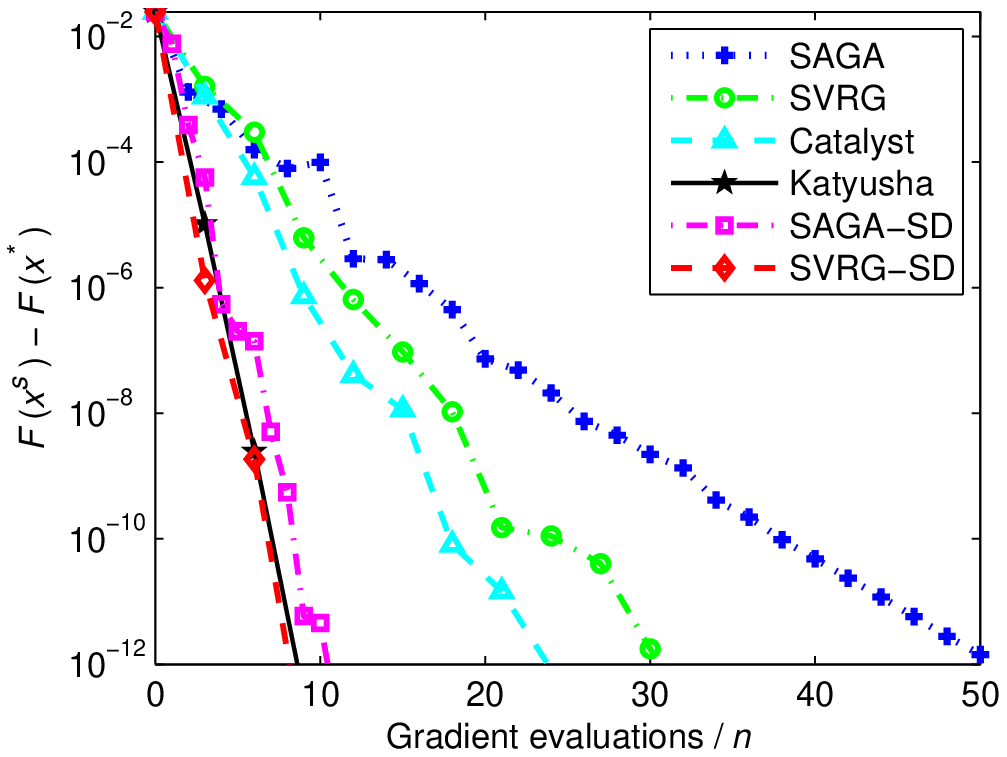}\,
\includegraphics[width=0.326\columnwidth]{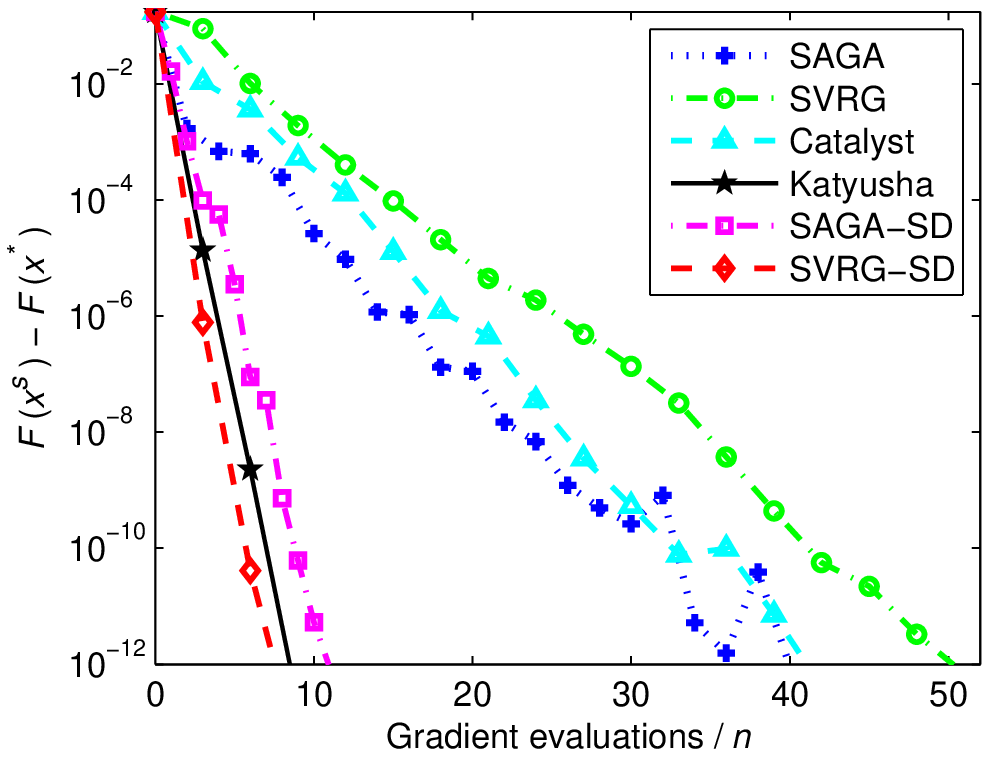}

\subfigure[Ijcnn1, $\lambda\!=\!10^{-5}$]{\includegraphics[width=0.326\columnwidth]{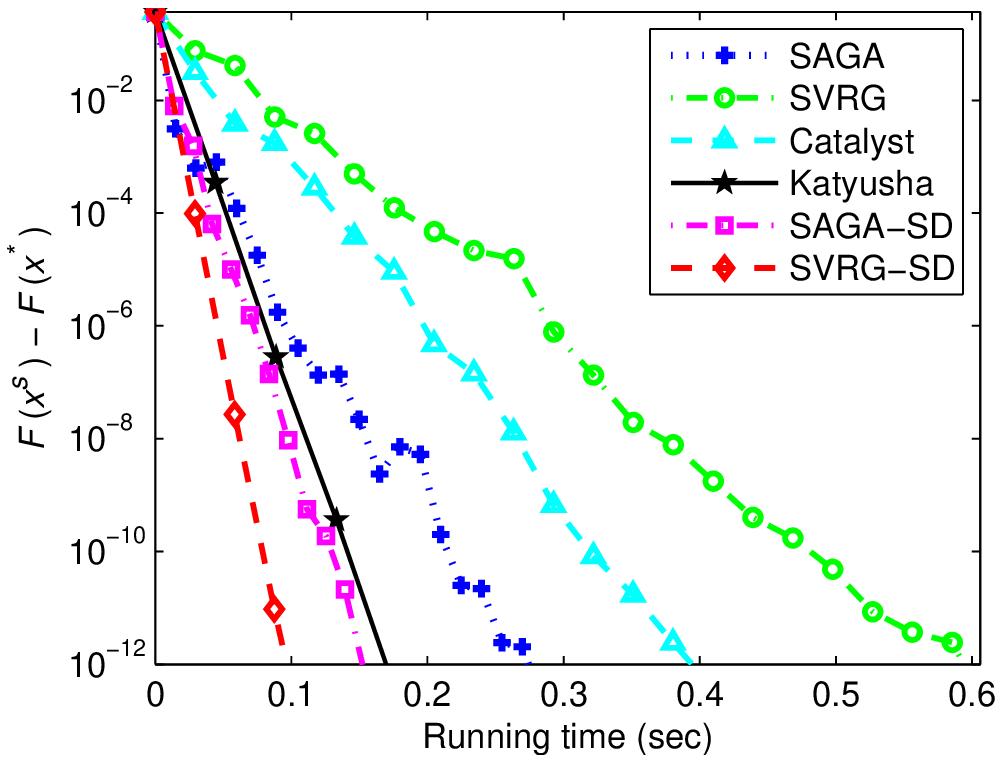}}\,
\subfigure[Covtype, $\lambda\!=\!10^{-5}$]{\includegraphics[width=0.326\columnwidth]{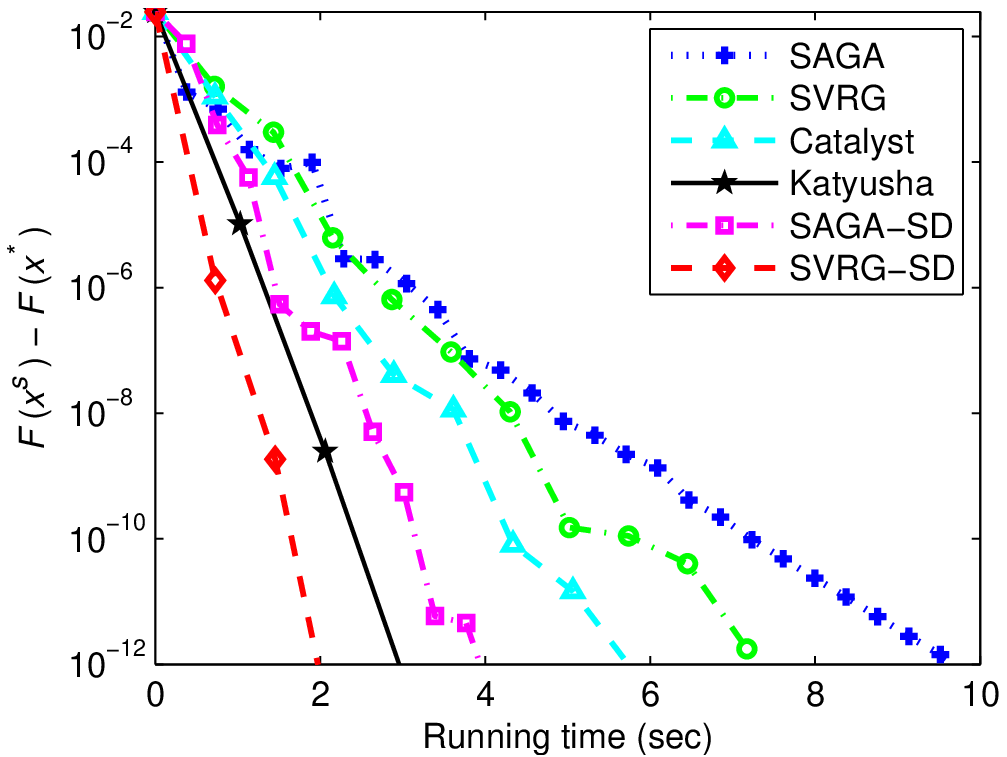}}\,
\subfigure[SUSY, $\lambda\!=\!10^{-5}$]{\includegraphics[width=0.326\columnwidth]{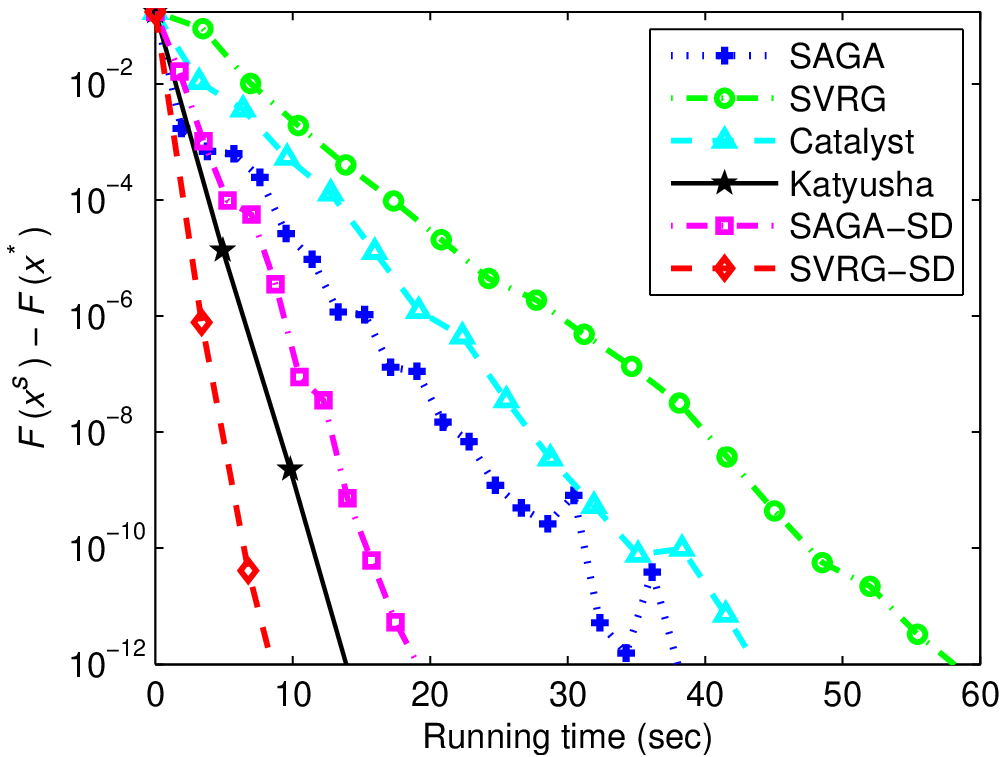}}
\caption{Comparison of different variance reduced SGD methods for solving strongly convex ridge regression problems. The vertical axis is the objective value minus the minimum, and the horizontal axis denotes the running time (seconds) or the number of effective passes over the data.}
\label{fig_sim1}
\end{figure}

\begin{figure}[th]
\centering
\includegraphics[width=0.326\columnwidth]{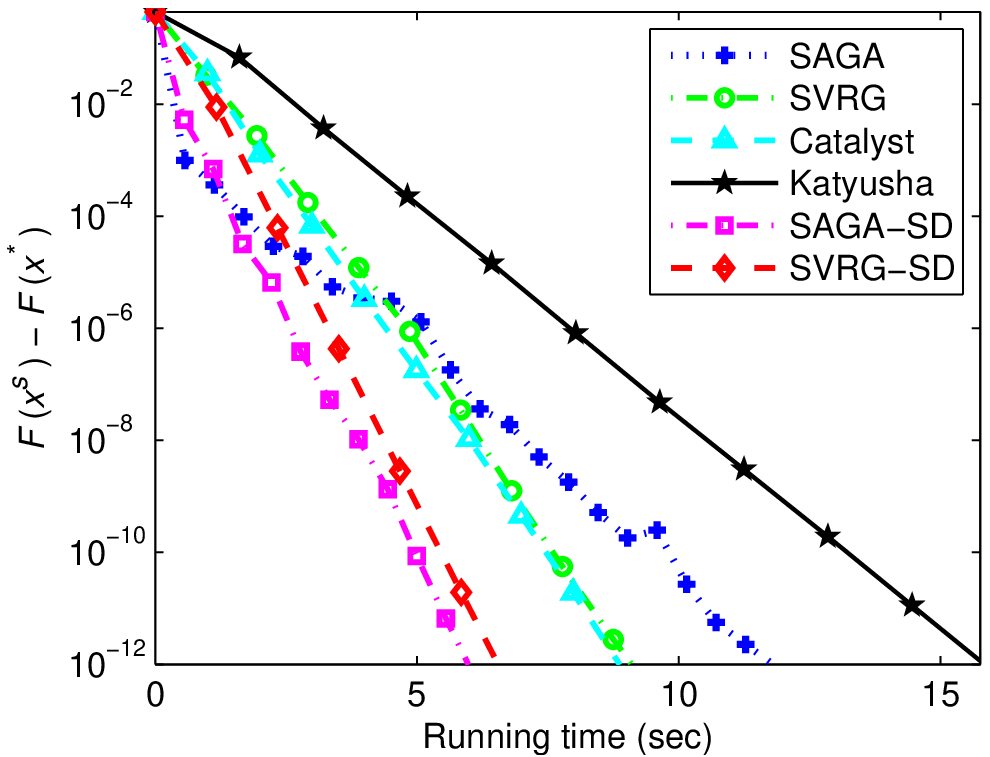}\,
\includegraphics[width=0.326\columnwidth]{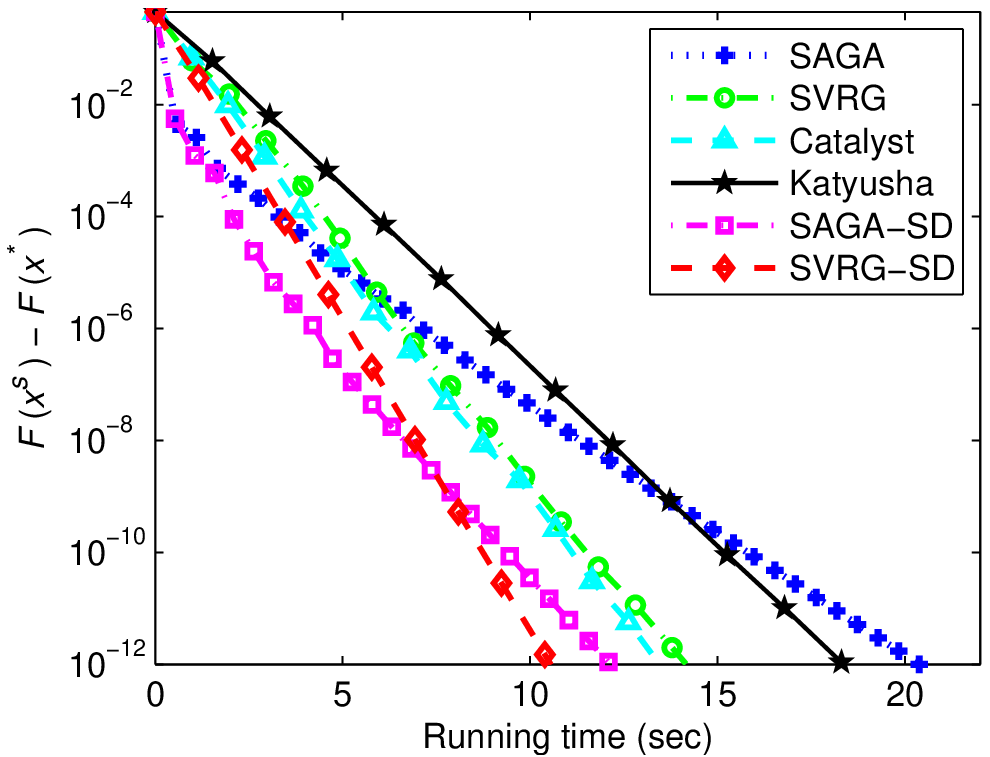}\,
\includegraphics[width=0.326\columnwidth]{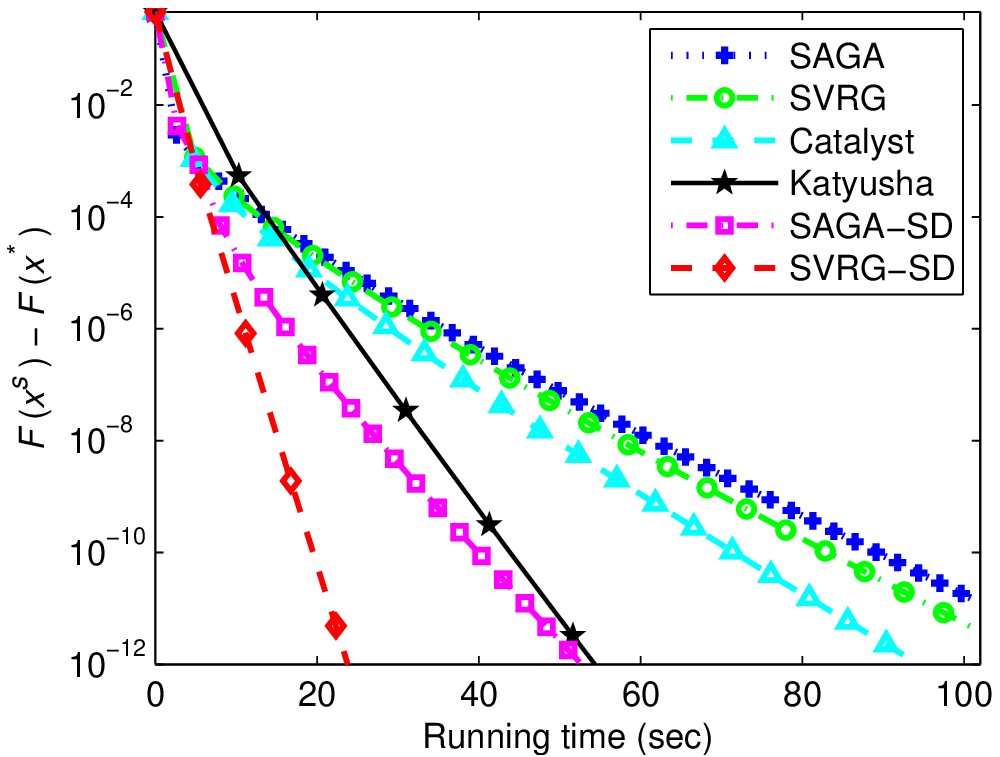}

\subfigure[$\lambda\!=\!10^{-3}$]{\includegraphics[width=0.326\columnwidth]{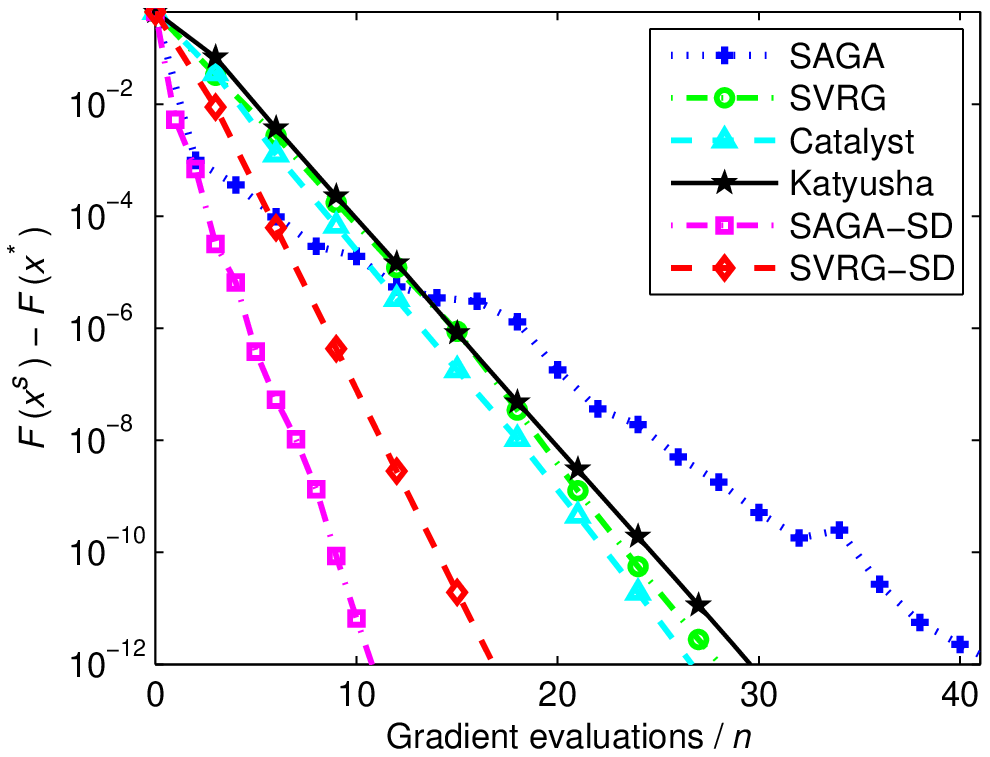}}\,
\subfigure[$\lambda\!=\!10^{-4}$]{\includegraphics[width=0.326\columnwidth]{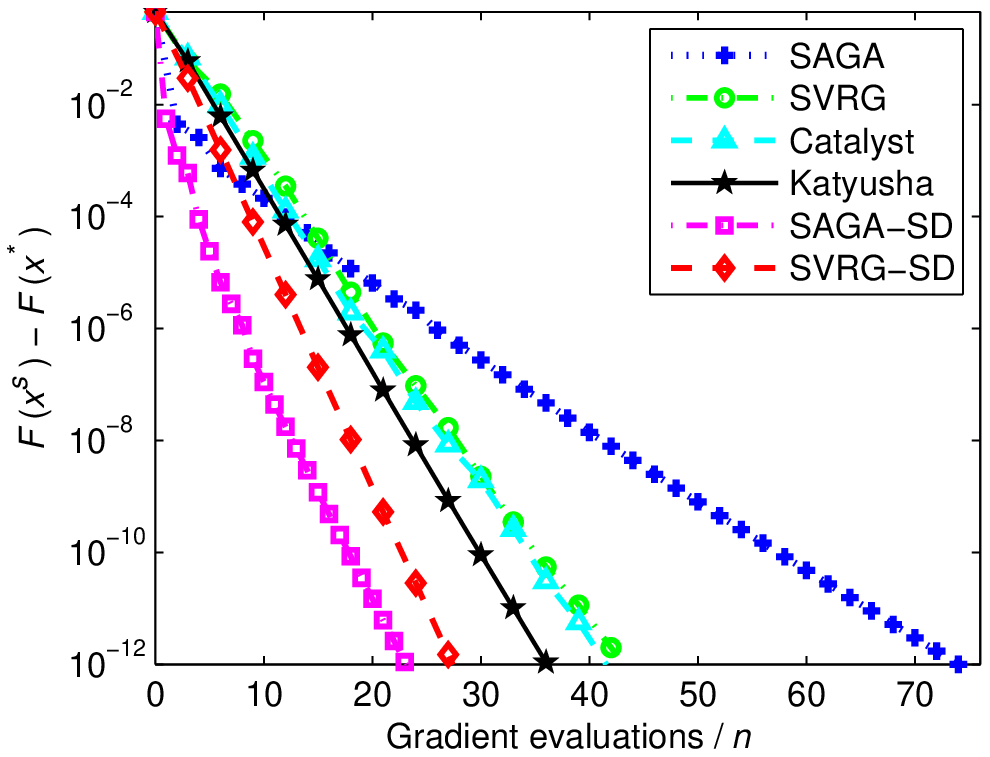}}\,
\subfigure[$\lambda\!=\!10^{-5}$]{\includegraphics[width=0.326\columnwidth]{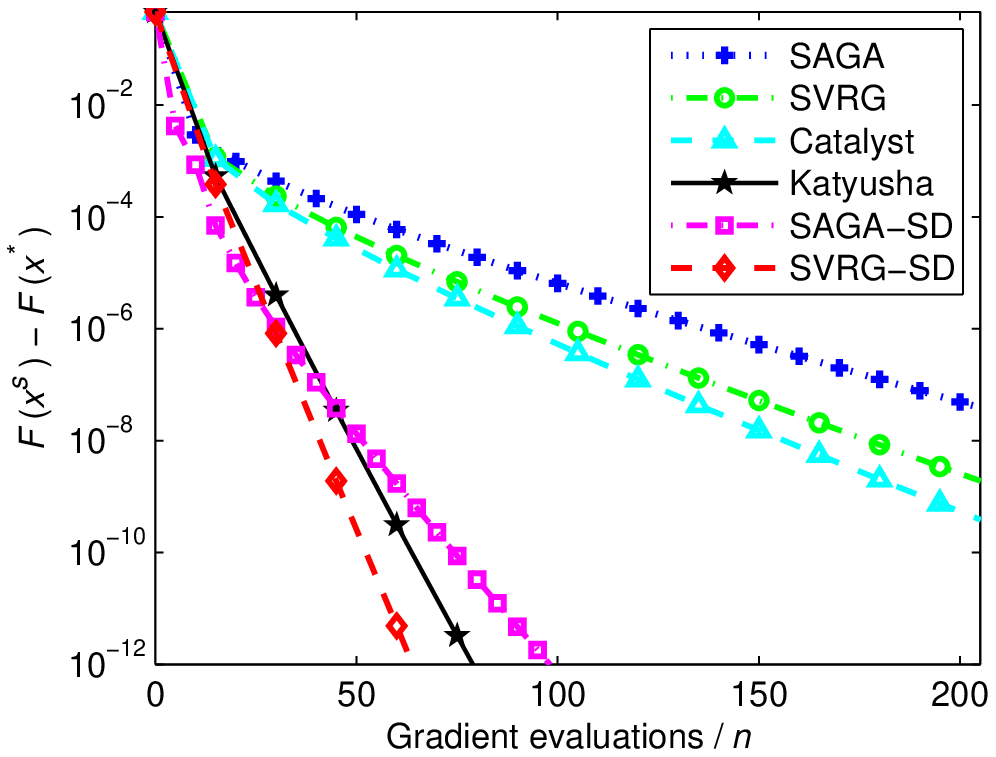}}
\caption{Comparison of different variance reduced SGD methods for solving strongly convex ridge regression problems with different regularization parameters on the Sido0 data set. The vertical axis represents the objective value minus the minimum, and the horizontal axis denotes the running time (top) or the number of effective passes (bottom).}
\label{fig_sim2}
\end{figure}

\section*{Appendix F: More Experimental Results}
In this section, we report more experimental results of SVRG~\cite{johnson:svrg}, SAGA~\cite{defazio:saga}, Catalyst~\cite{lin:vrsg}, Katyusha~\cite{zhu:Katyusha}, SVRG-SD and SAGA-SD for solving strongly convex ridge regression problems with regularization parameters $\lambda\!=\!10^{-4}$ and $\lambda\!=\!10^{-5}$ in Figure~\ref{fig_sim1}, where the horizontal axis denotes the number of effective passes over the data set (evaluating $n$ component gradients, or computing a single full gradient is considered as one effective pass) or the running time (seconds). Figure~\ref{fig_sim2} shows the performance of all these methods for solving ridge regression problems with different regularization parameters on a sparse data set, Sido0, which can be downloaded from the Causality Workbench website{\footnote{\url{http://www.causality.inf.ethz.ch/home.php}}}. From all the results, we can observe that SVRG-SD and SAGA-SD significantly outperform their counterparts: SVRG and SAGA in terms of both number of effective passes and running time. The accelerated method, Catalyst, usually outperforms the non-accelerated methods, SVRG and SAGA. Moreover, SVRG-SD and SAGA-SD achieve at least comparable performance with the best known stochastic method, Katyusha~\cite{zhu:Katyusha}, in terms of number of effective passes. Since SVRG-SD and SAGA-SD have much lower per-iteration complexities than Katyusha, they have more obvious advantage over Katyusha in terms of running time.

\begin{figure}[th]
\centering
\includegraphics[width=0.326\columnwidth]{Fig51}\,
\includegraphics[width=0.326\columnwidth]{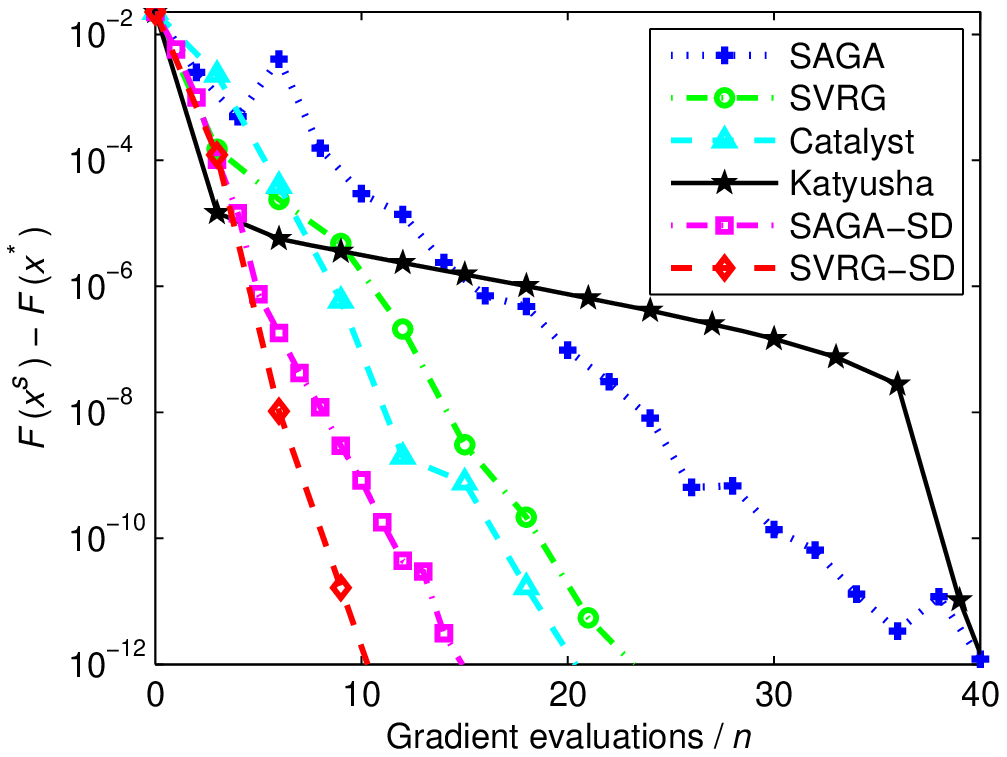}\,
\includegraphics[width=0.326\columnwidth]{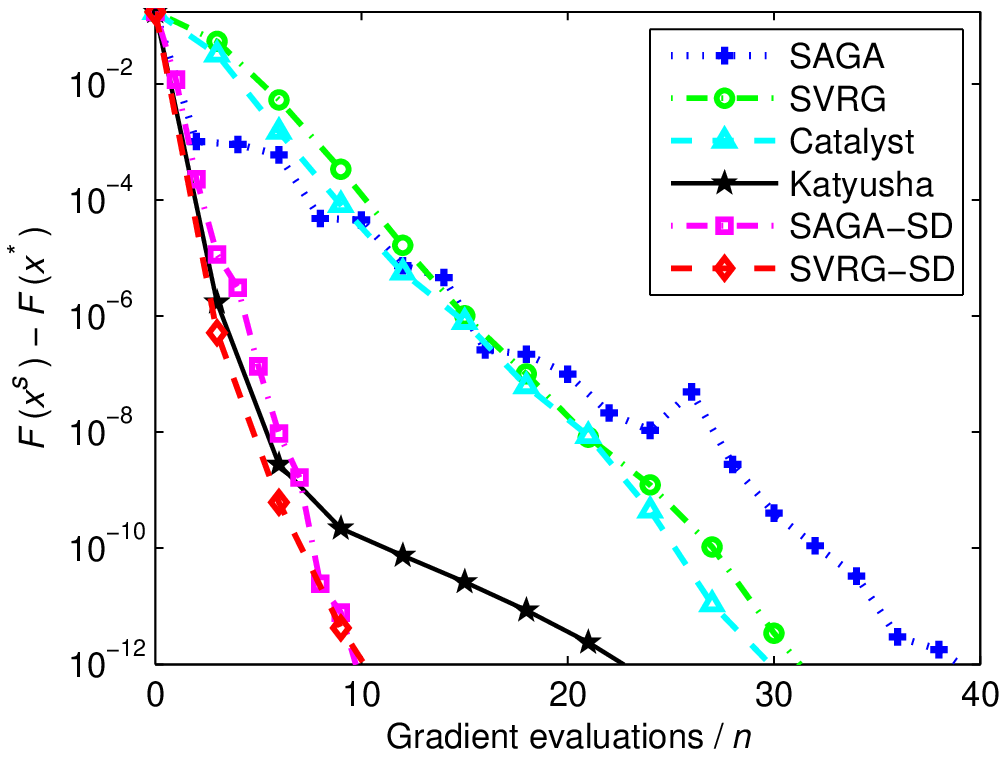}

\subfigure[Ijcnn1, $\lambda\!=\!10^{-4}$]{\includegraphics[width=0.326\columnwidth]{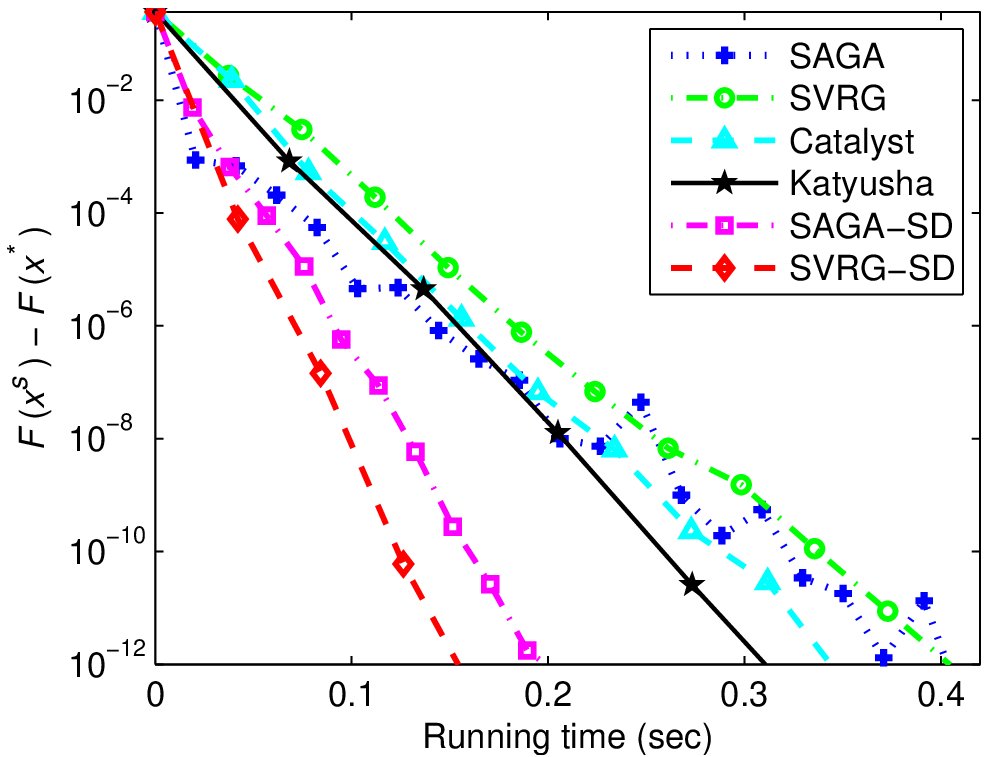}}\,
\subfigure[Covtype, $\lambda\!=\!10^{-4}$]{\includegraphics[width=0.326\columnwidth]{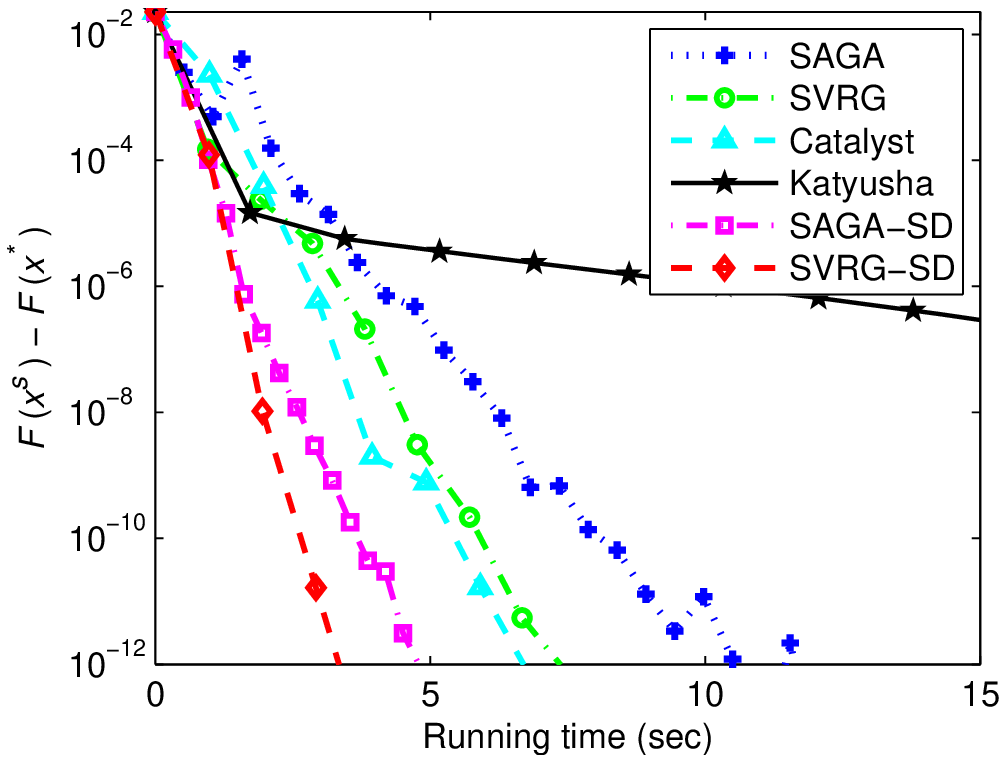}}\,
\subfigure[SUSY, $\lambda\!=\!10^{-4}$]{\includegraphics[width=0.326\columnwidth]{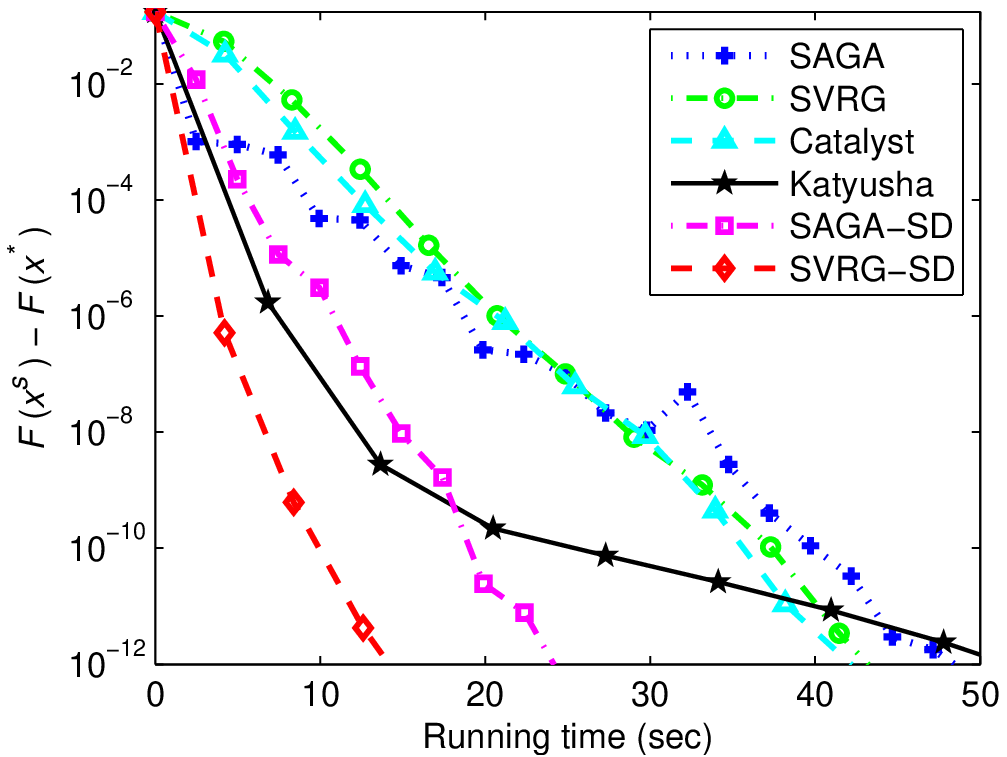}}
\vspace{1mm}

\includegraphics[width=0.326\columnwidth]{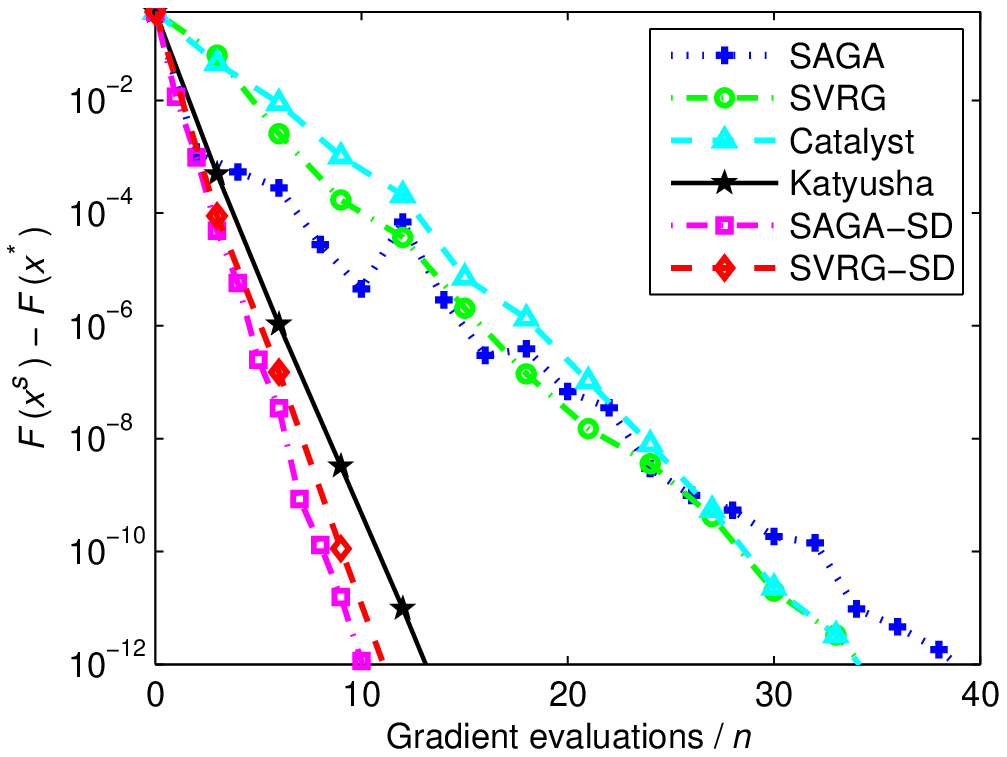}\,
\includegraphics[width=0.326\columnwidth]{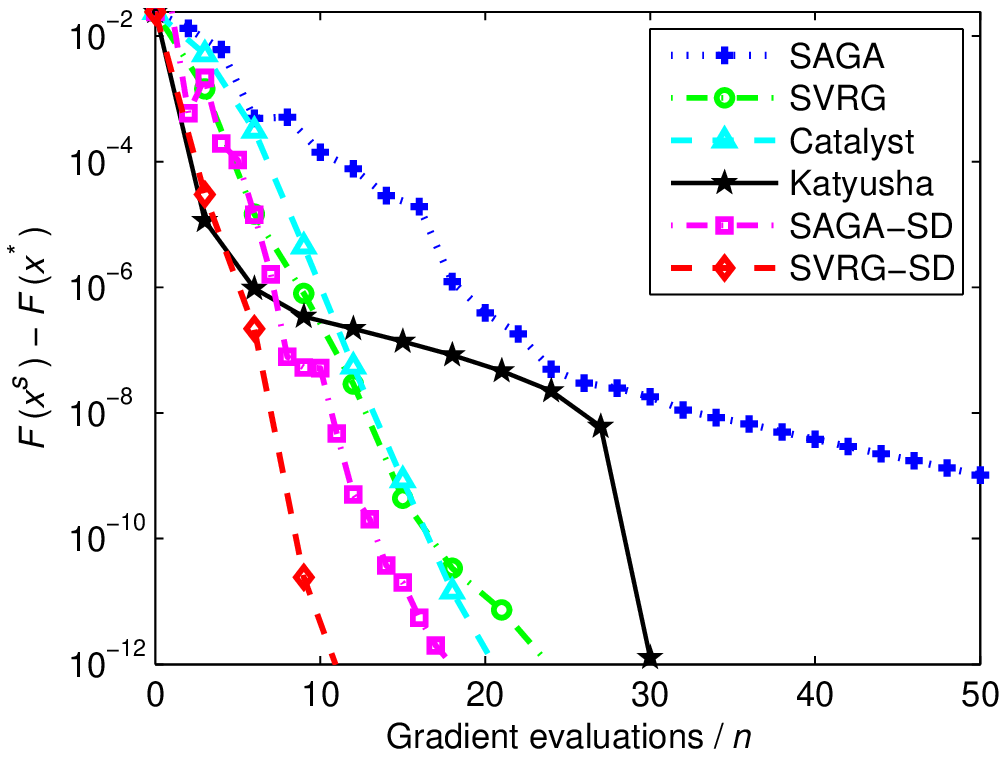}\,
\includegraphics[width=0.326\columnwidth]{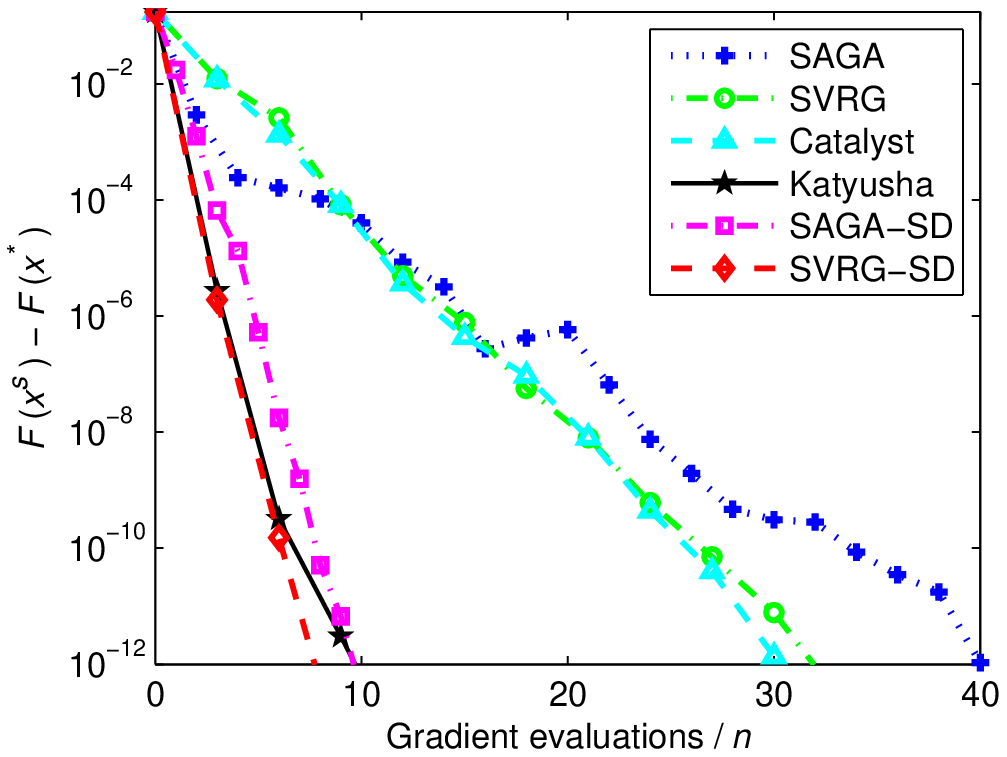}

\subfigure[Ijcnn1, $\lambda\!=\!10^{-5}$]{\includegraphics[width=0.326\columnwidth]{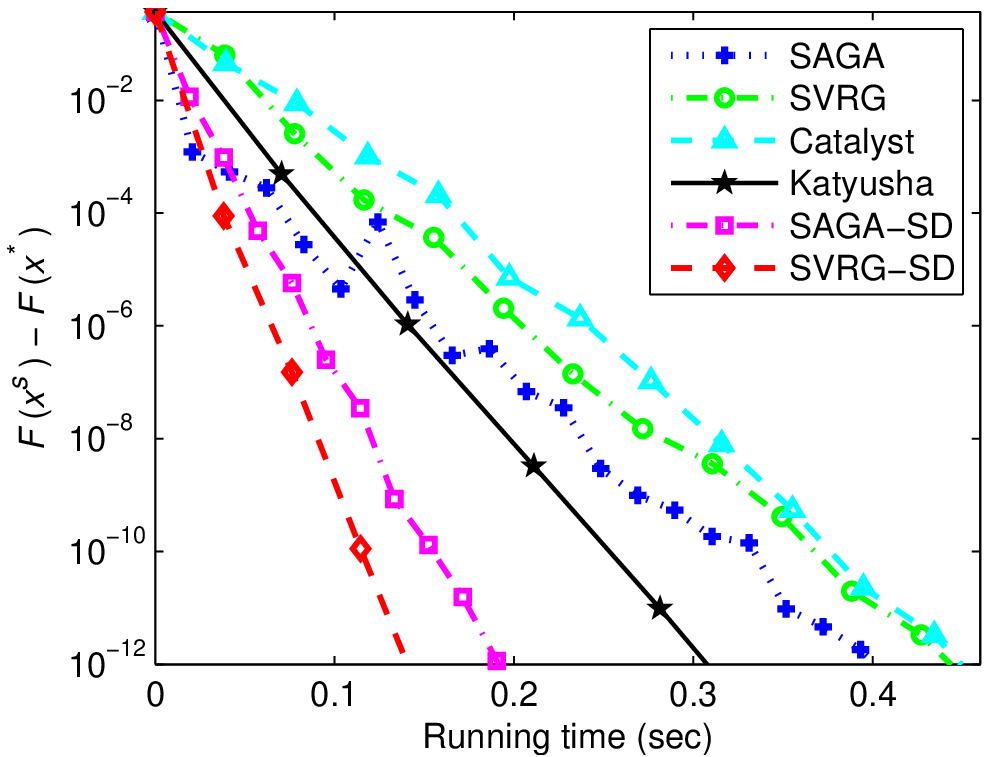}}\,
\subfigure[Covtype, $\lambda\!=\!10^{-5}$]{\includegraphics[width=0.326\columnwidth]{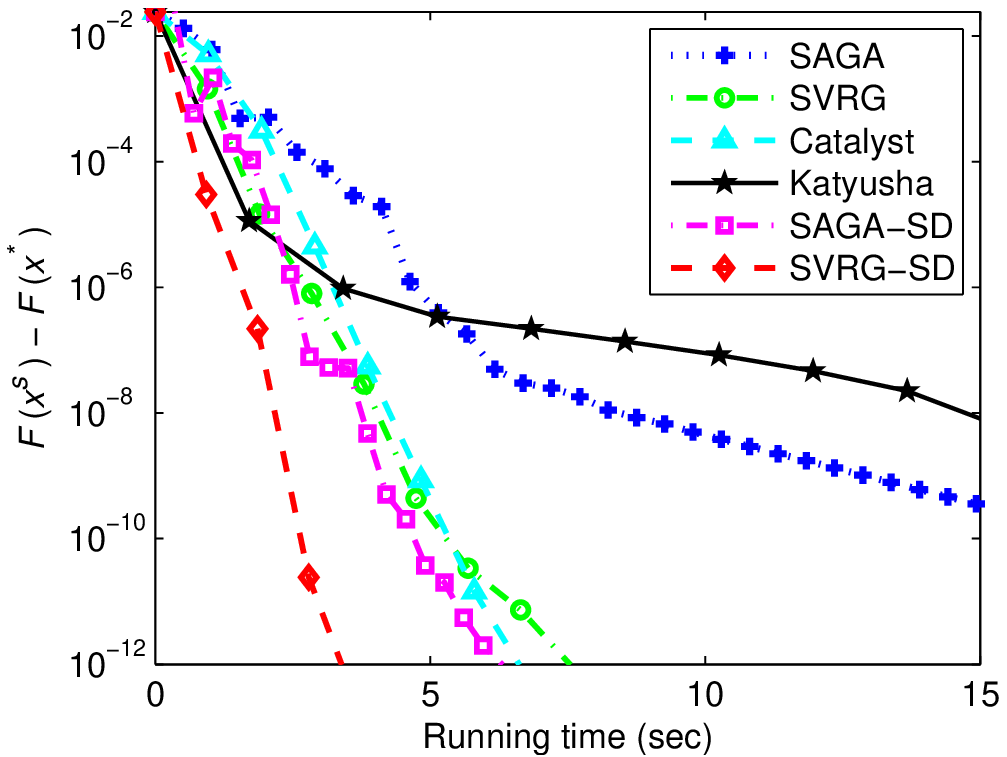}}\,
\subfigure[SUSY, $\lambda\!=\!10^{-5}$]{\includegraphics[width=0.326\columnwidth]{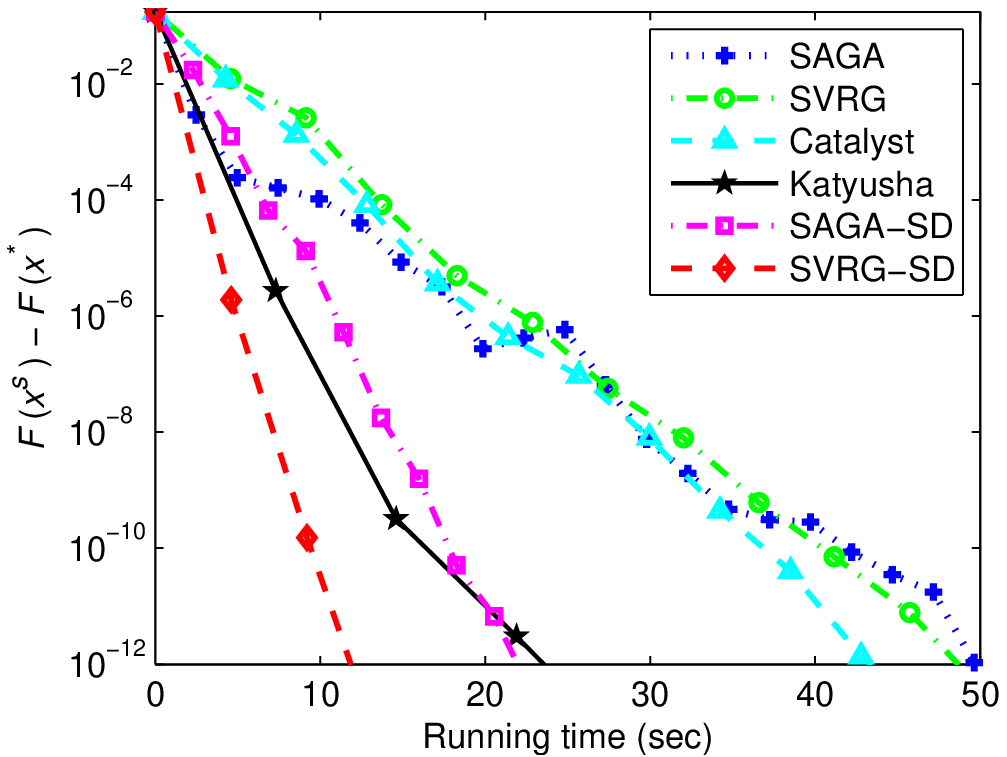}}
\caption{Comparison of different variance reduced SGD methods for solving non-strongly convex Lasso problems. The vertical axis is the objective value minus the minimum, and the horizontal axis denotes the number of effective passes over the data (top) or the running time (seconds, bottom).}
\label{fig_sim3}
\end{figure}

\begin{figure}[th]
\centering
\includegraphics[width=0.326\columnwidth]{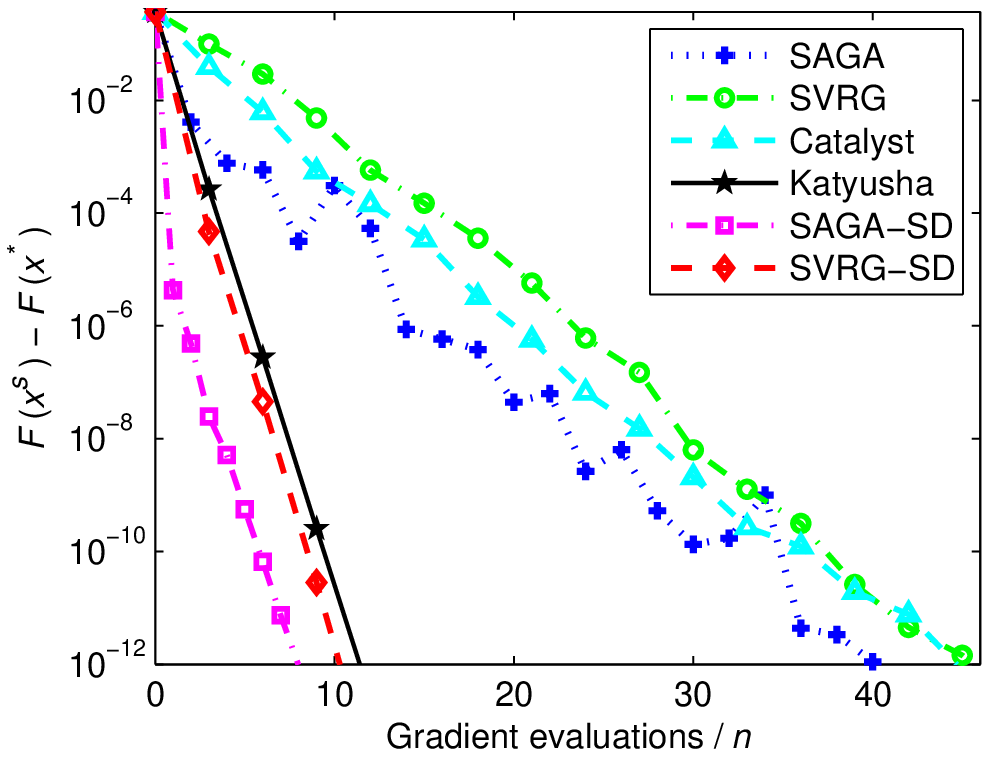}\,
\includegraphics[width=0.326\columnwidth]{Fig76}\,
\includegraphics[width=0.326\columnwidth]{Fig81}

\subfigure[$\lambda_{1}=10^{-5}$ \;and\; $\lambda_{2}=10^{-5}$]{\includegraphics[width=0.326\columnwidth]{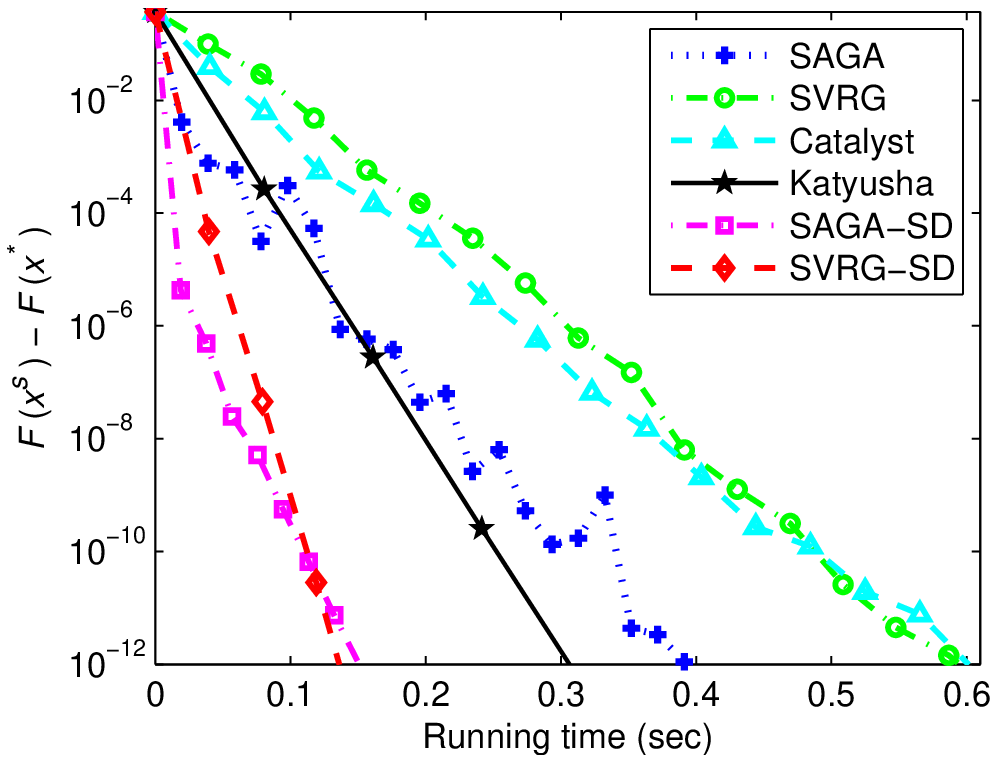}\,\includegraphics[width=0.326\columnwidth]{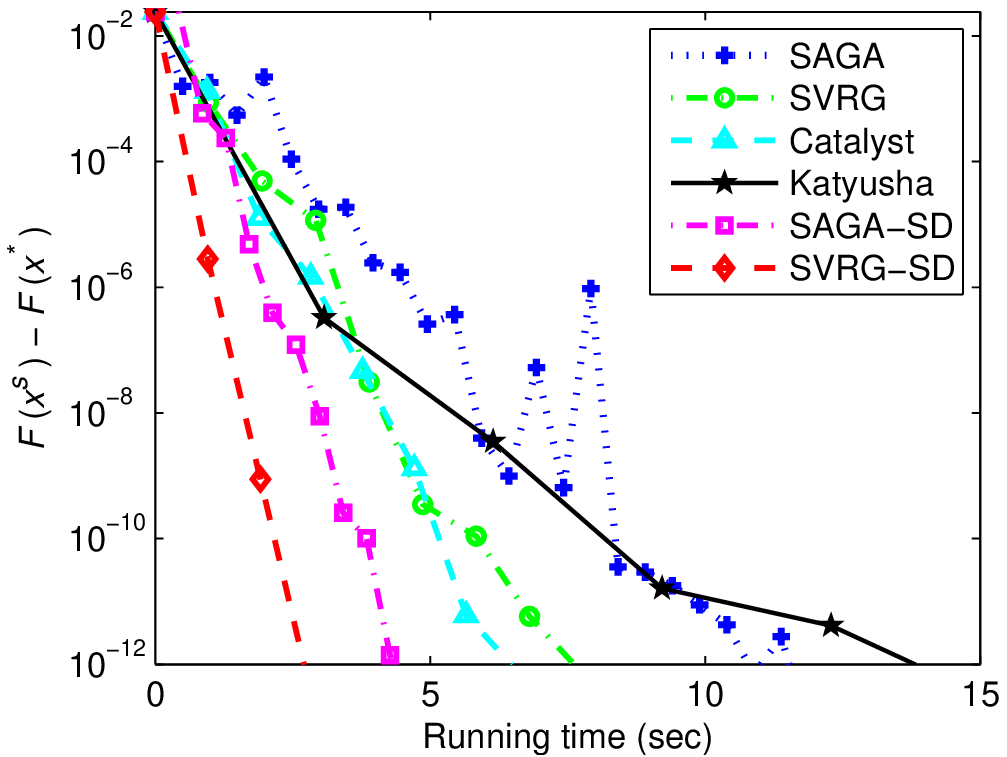}\,\includegraphics[width=0.326\columnwidth]{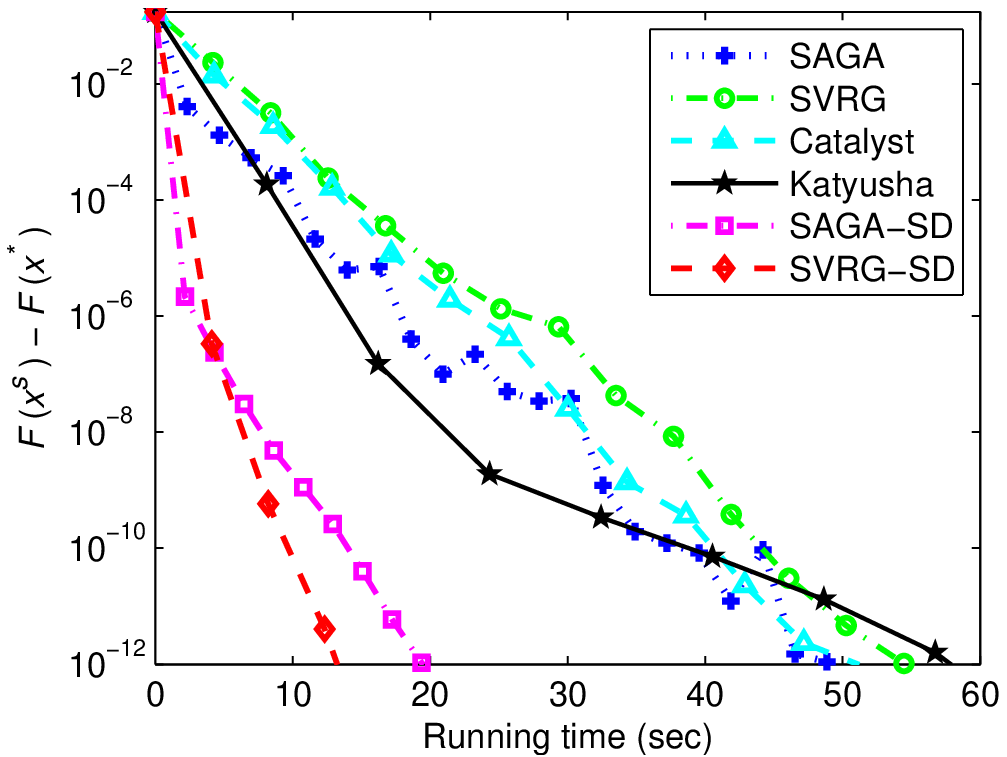}}
\vspace{1mm}

\includegraphics[width=0.326\columnwidth]{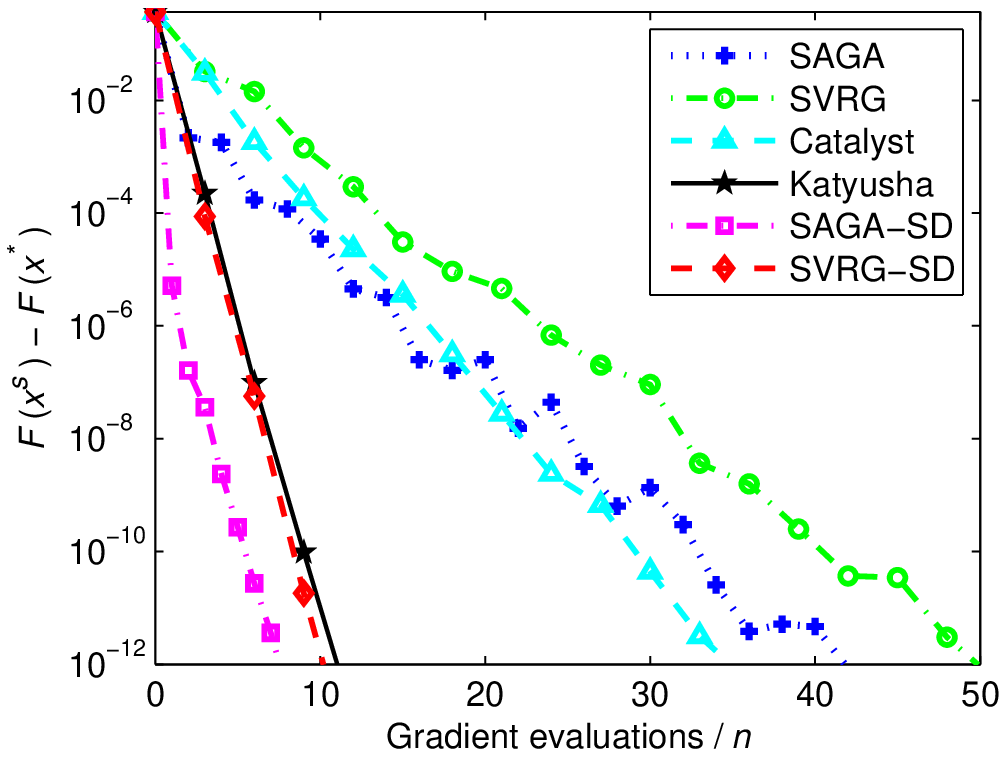}\,
\includegraphics[width=0.326\columnwidth]{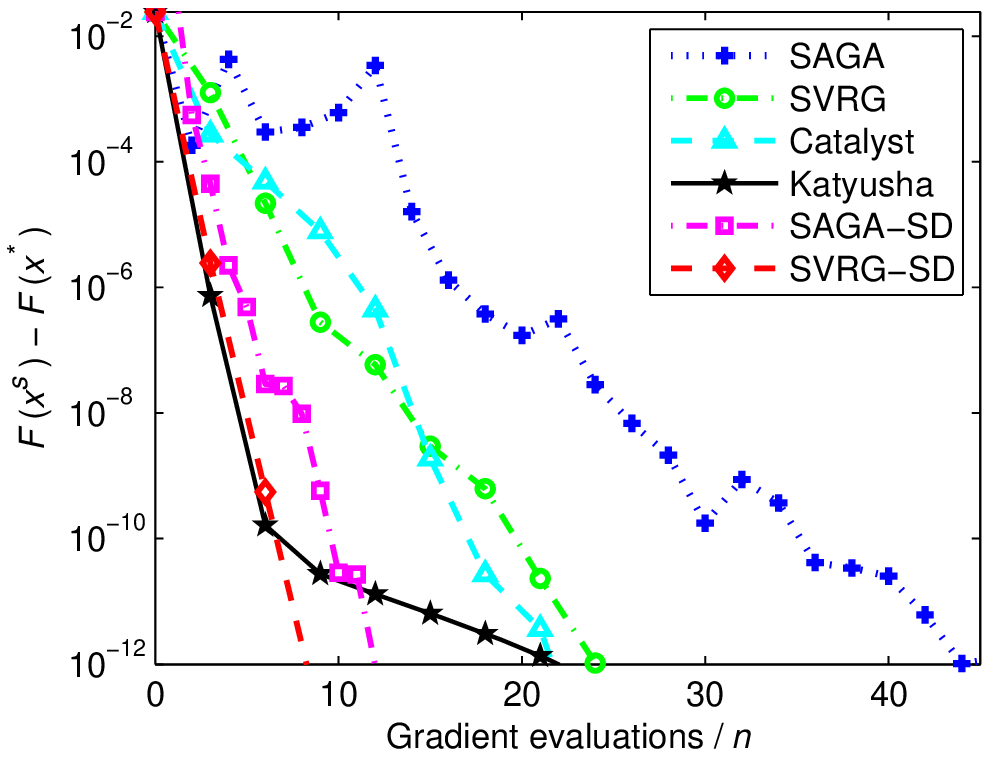}\,
\includegraphics[width=0.326\columnwidth]{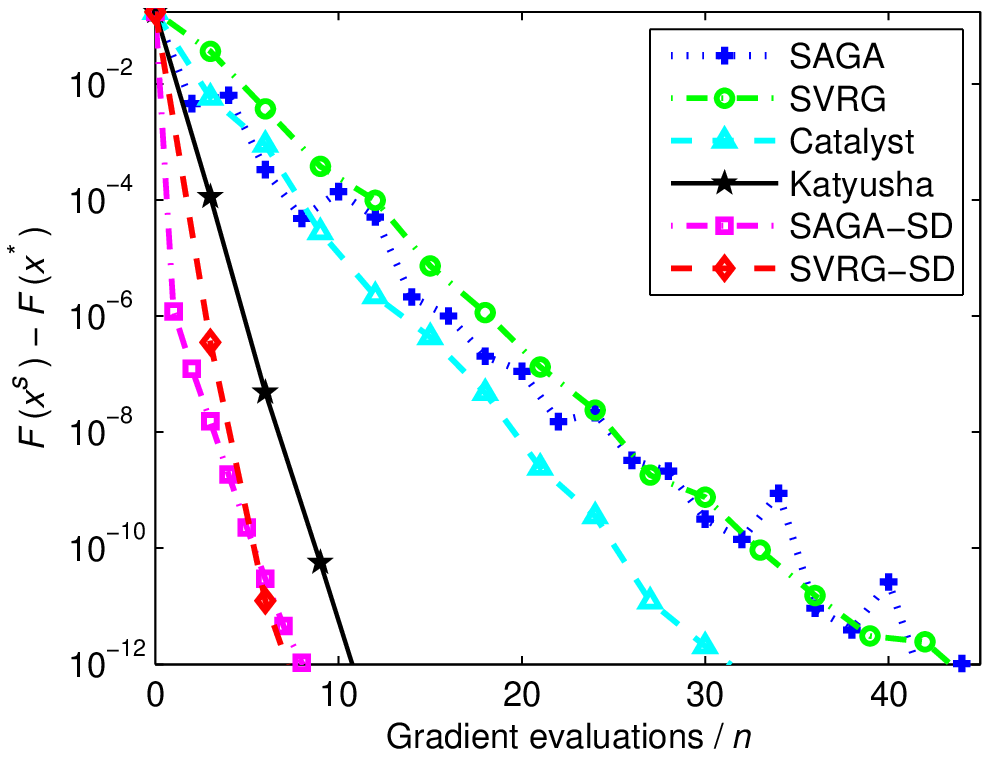}

\subfigure[$\lambda_{1}=10^{-6}$ \;and\; $\lambda_{2}=10^{-5}$]{\includegraphics[width=0.326\columnwidth]{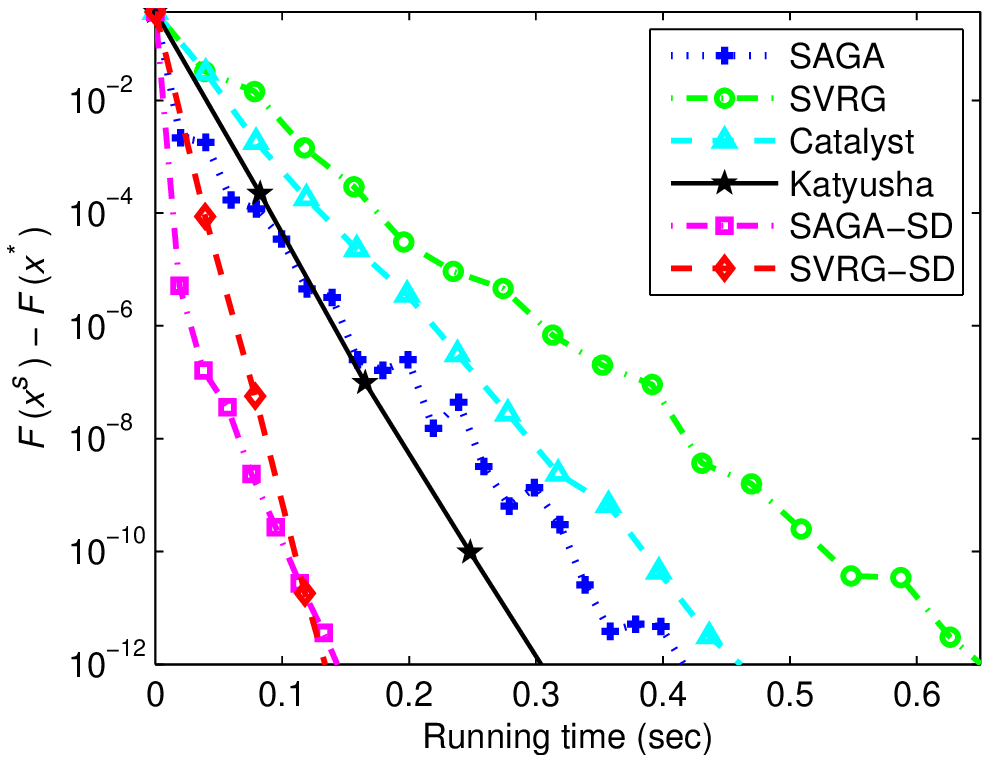}\,\includegraphics[width=0.326\columnwidth]{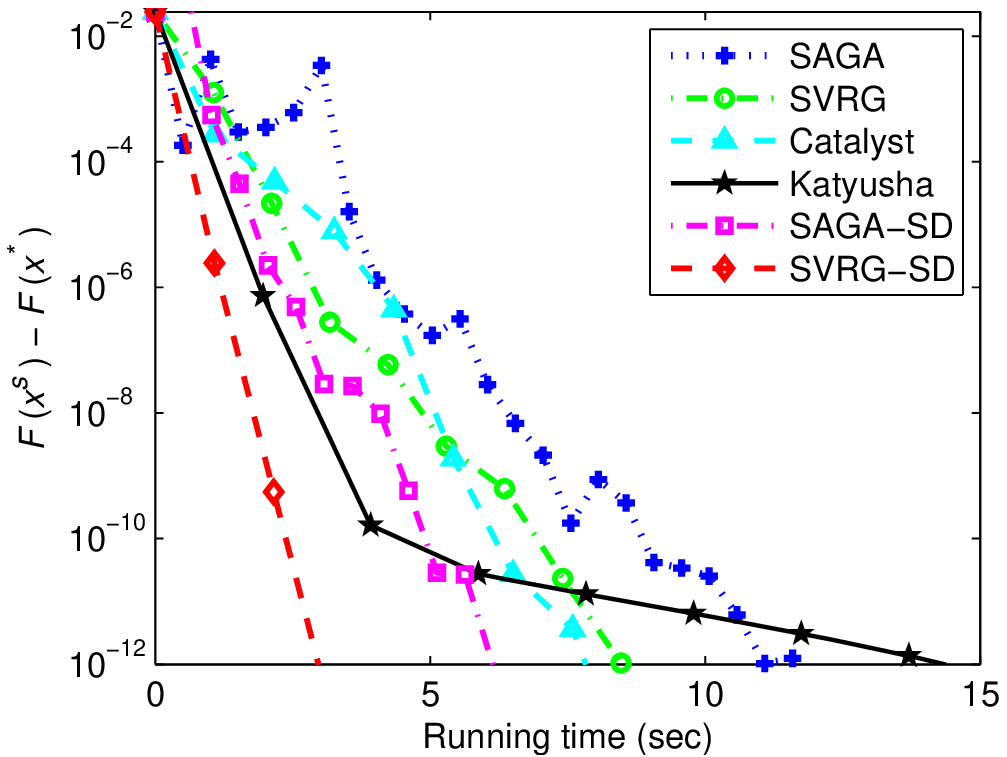}\,\includegraphics[width=0.326\columnwidth]{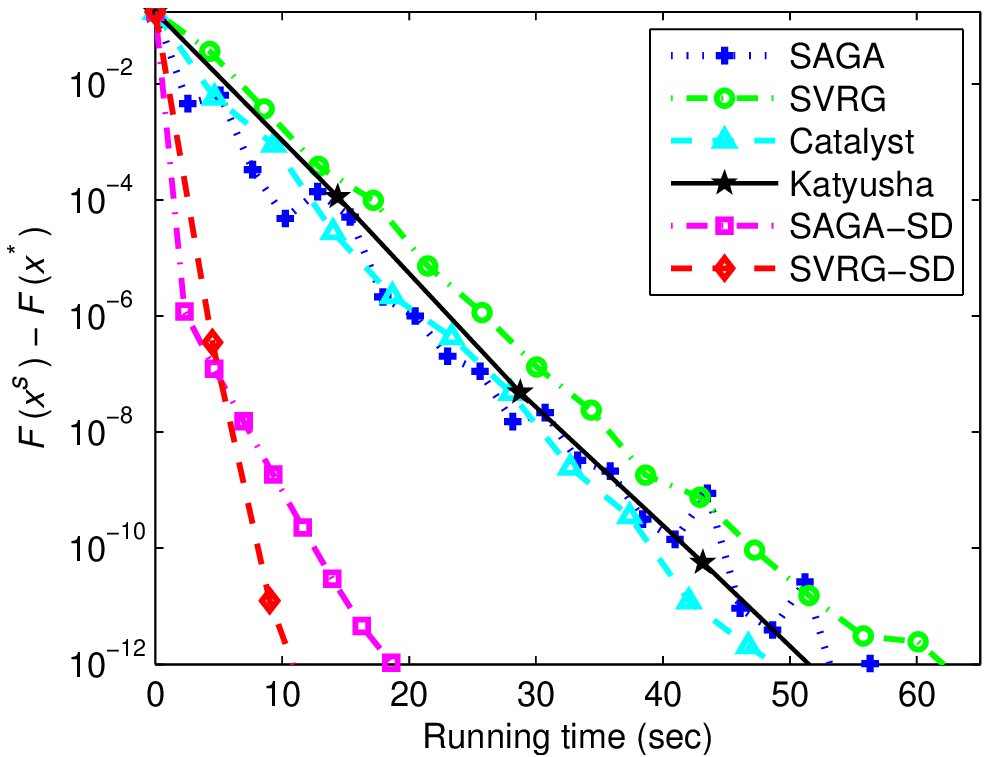}}
\caption{Comparison of different variance reduced SGD methods for solving elastic-net regularized (i.e., $\lambda_{1}\|\!\cdot\!\|_{1}\!+\!\lambda_{2}\|\!\cdot\!\|^2$) Lasso problems on Ijcnn1 (the first column), Covtype (the second column), and SUSY (the last column). The vertical axis is the objective value minus the minimum, and the horizontal axis denotes the number of effective passes over the data (top) or the running time (seconds, bottom).}
\label{fig_sim4}
\end{figure}

Moreover, we report the performance of Prox-SVRG~\cite{xiao:prox-svrg}, SAGA~\cite{defazio:saga}, Catalyst~\cite{lin:vrsg}, Katyusha~\cite{zhu:Katyusha}, SVRG-SD and SAGA-SD for solving Lasso and elastic-net regularized Lasso problems with different regularization parameters in Figures~\ref{fig_sim3} and \ref{fig_sim4}, respectively, from which we can see that SVRG-SD and SAGA-SD also achieve much faster convergence speed than their counterparts: Prox-SVRG and SAGA, respectively. In particular, they also have comparable or better performance than the accelerated VR-SGD methods, Catalyst and Katyusha.

\begin{algorithm}[t]
\caption{SVRG-I and SVRG-II}
\label{alg3}
\renewcommand{\algorithmicrequire}{\textbf{Input:}}
\renewcommand{\algorithmicensure}{\textbf{Initialize:}}
\renewcommand{\algorithmicoutput}{\textbf{Output:}}
\begin{algorithmic}[1]
\REQUIRE The number of epochs $S$, the number of iterations $m$ per epoch, and step size $\eta$.\\
\ENSURE $\widetilde{x}^{0}$.\\
\FOR{$s=1,2,\ldots,S$}
\STATE {$\widetilde{\mu}^{s-1}=\frac{1}{n}\!\sum^{n}_{i=1}\!\nabla\!f_{i}(\widetilde{x}^{s-1})$,\; $x_{0}=\widetilde{x}^{s-1}$;}
\FOR{$k=1,2,\ldots,m$}
\STATE {Pick $i_{k}$ uniformly at random from $[n]$;}
\STATE {$\widetilde{\nabla}\! f_{i_{k}}(x_{k-1})=\nabla\! f_{i_{k}}(x_{k-1})-\nabla\! f_{i_{k}}(\widetilde{x}^{s-1})+\widetilde{\mu}^{s-1}$;}
\STATE {$x_{k}=x_{k-1}-\eta\left[\widetilde{\nabla}\!f_{i_{k}}(x_{k-1})+\nabla r(x_{k-1})\right]$;}
\ENDFOR
\STATE {Option I:\,$\widetilde{x}^{s}=x_{m}$;}
\STATE {Option II:\,$\widetilde{x}^{s}=\frac{1}{m}\!\sum^{m}_{k=1}x_{k}$;}
\ENDFOR
\OUTPUT {$\widetilde{x}^{S}$}
\end{algorithmic}
\end{algorithm}

\begin{algorithm}[t]
\caption{SVRG-SDI}
\label{alg4}
\renewcommand{\algorithmicrequire}{\textbf{Input:}}
\renewcommand{\algorithmicensure}{\textbf{Initialize:}}
\renewcommand{\algorithmicoutput}{\textbf{Output:}}
\begin{algorithmic}[1]
\REQUIRE The number of epochs $S$, the number of iterations $m$ per epoch, and step size $\eta$.\\
\ENSURE $x_{0}=\widetilde{x}^{0}$, $\sigma=0.618-0.382/[1+\exp(-\log(6\lambda)-12)]$ for Case of SC, \,or\, $\sigma=1/(S\!+\!3)$ for Case of NSC.\\
\FOR{$s=1,2,\ldots,S$}
\STATE {$\widetilde{\mu}^{s-1}=\frac{1}{n}\!\sum^{n}_{i=1}\!\nabla\!f_{i}(\widetilde{x}^{s-1})$;}
\FOR{$k=1,2,\ldots,m$}
\STATE {Pick $i_{k}$ uniformly at random from $[n]$;}
\STATE {$\widetilde{\nabla}\! f_{i_{k}}(x_{k-1})=\nabla\! f_{i_{k}}(x_{k-1})-\nabla\! f_{i_{k}}(\widetilde{x}^{s-1})+\widetilde{\mu}^{s-1}$;}
\STATE {$x_{k}=x_{k-1}-\eta\left[\widetilde{\nabla}\!f_{i_{k}}(x_{k-1})+\nabla r(x_{k-1})\right]$\; or\; $x_{k}=\textrm{prox}^{\,r}_{\,\eta}\left(x_{k-1}-\eta\widetilde{\nabla}\!f_{i_{k}}(x_{k-1})\right)$;}
\ENDFOR
\STATE {$\widetilde{x}^{s}=\frac{1}{m}\!\sum^{m}_{k=1}\left[x_{k}+(1\!-\!\sigma)(x_{k}-x_{k-1})\right]$;}
\STATE {$x_{0}=x_{m}$;}
\ENDFOR
\OUTPUT {$\widetilde{x}^{S}$}
\end{algorithmic}
\end{algorithm}

\begin{algorithm}[t]
\caption{SAGA-SDI}
\label{alg5}
\renewcommand{\algorithmicrequire}{\textbf{Input:}}
\renewcommand{\algorithmicensure}{\textbf{Initialize:}}
\renewcommand{\algorithmicoutput}{\textbf{Output:}}
\begin{algorithmic}[1]
\REQUIRE The number of iterations $K\!=\!S\!\ast\!m$, and step size $\eta$.\\
\ENSURE $x_{0}$, $\sigma=0.5-0.5/[1+\exp(-\log\lambda-12)]$ for the cases of SC and NSC.
\FOR{$k=1,\ldots,K$}
\STATE {Pick $i_{k}$ uniformly at random from $[n]$;}
\STATE {Take $\phi^{k}_{i_{k}}\!\!=\!x_{k-1}$, and store $\nabla\! f_{i_{k}}\!(\phi^{k}_{i_{k}})$ in the table and all other entries in the table remain unchanged;}
\STATE {$\widetilde{\nabla}\! f_{i_{k}}(x_{k-1})=\nabla\! f_{i_{k}}(\phi^{k}_{i_{k}})-\nabla \! f_{i_{k}}(\phi^{k-1}_{i_{k}})+\frac{1}{n}\sum^{n}_{j=1}\!\nabla\! f_{j}(\phi^{k-1}_{j})$;}
\STATE {$x_{k}=x_{k-1}-\eta\left[\widetilde{\nabla}\!f_{i_{k}}(x_{k-1})+\nabla r(x_{k-1})\right]$\; or\; $x_{k}=\textrm{prox}^{\,r}_{\,\eta}\left(x_{k-1}-\eta\widetilde{\nabla}\!f_{i_{k}}(x_{k-1})\right)$;}
\STATE {$x_{k}=x_{k}+(1\!-\!\sigma)(x_{k}-x_{k-1})$;}
\ENDFOR
\OUTPUT $x_{K}$
\end{algorithmic}
\end{algorithm}

\section*{Appendix G: Pseudo-Codes of More Algorithms}
Recall that the main difference of the original SVRG~\cite{johnson:svrg} (denoted by SVRG-I, see Algorithm~\ref{alg3} with Option I for completeness) and its variant, SVRG-II (see Algorithm~\ref{alg3} with Option II) in~\cite{johnson:svrg} is that the former uses the last iterate of the previous epoch as the snapshot point $\widetilde{x}^{s}$, while in the latter, $\widetilde{x}^{s}$ is the average point of the previous epoch, which has been successfully used in \cite{zhu:Katyusha,zhu:univr,xiao:prox-svrg}. In this paper, we observed the following interesting phenomena: When the regularization parameter $\lambda$ is relatively large, e.g., $\lambda\!=\!10^{-3}$, SVRG-II converges significantly faster than SVRG-I, as shown in Figure \ref{fig_sim6} and also suggested in \cite{zhu:Katyusha,zhu:univr,xiao:prox-svrg}; whereas SVRG-I significantly outperforms SVRG-II when $\lambda$ is relatively small, e.g., $\lambda\!=\!10^{-7}$.

As a by-product of SVRG-SD and motivated by the above observations, we propose a momentum acceleration variant of the original SVRG, as outlined in Algorithm \ref{alg4}, which is called as SVRG-SDI, and can be viewed as a special case of SVRG-SD when $\theta_{k}\!\equiv\!1$ (i.e., without the proposed sufficient decrease technique). Note that SVRG-SDI, as well as SVRG-SD, can use much larger learning rates than SVRG-I and SVRG-II, e.g., $\eta\!=\!1.0$ for SVRG-SD and SVRG-SDI vs.\ $\eta\!=\!0.4$ for SVRG-I and SVRG-II. From the results in Figure~\ref{fig_sim6}, where we vary the values of the regularization parameter $\lambda$ from $10^{-3}$ to $10^{-8}$, it is clear that our SVRG-SDI method achieves significantly better performance than both SVRG-I and SVRG-II in most cases, and is comparable to the best known method, Katyusha~\cite{zhu:Katyusha}. Note that Katyusha fails to converge when the regularization parameter is greater than or equal to $10^{-3}$. Moreover, our SVRG-SD method often outperforms the other methods (including Katyusha and SVRG-SDI). Especially in the cases when $\lambda$ is relatively small, e.g., $\lambda\!=\!10^{-6}$ and $\lambda\!=\!10^{-7}$, SVRG-SD converges significantly faster than Katyusha and SVRG-SDI, which further verifies the effectiveness of our sufficient decrease technique for stochastic optimization. Our sufficient decrease technique in SVRG-SD and SVRG-SDI naturally generalizes to other stochastic gradient estimators as in~\cite{defazio:saga,nguyen:srg,roux:sag} as well, e.g., Algorithm~\ref{alg5} (called SAGA-SDI) for the SAGA estimator~\cite{defazio:saga} in (9).

\begin{figure}[th]
\centering
\subfigure[$\lambda\!=\!10^{-3}$]{\includegraphics[width=0.324\columnwidth]{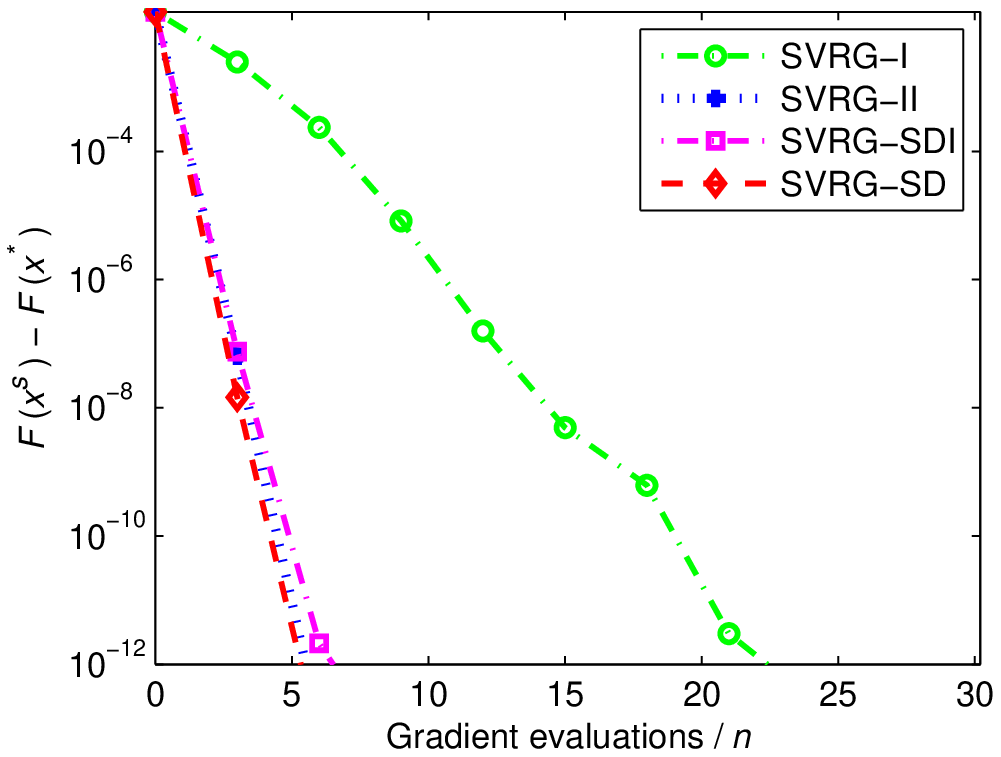}}\,
\subfigure[$\lambda\!=\!10^{-4}$]{\includegraphics[width=0.324\columnwidth]{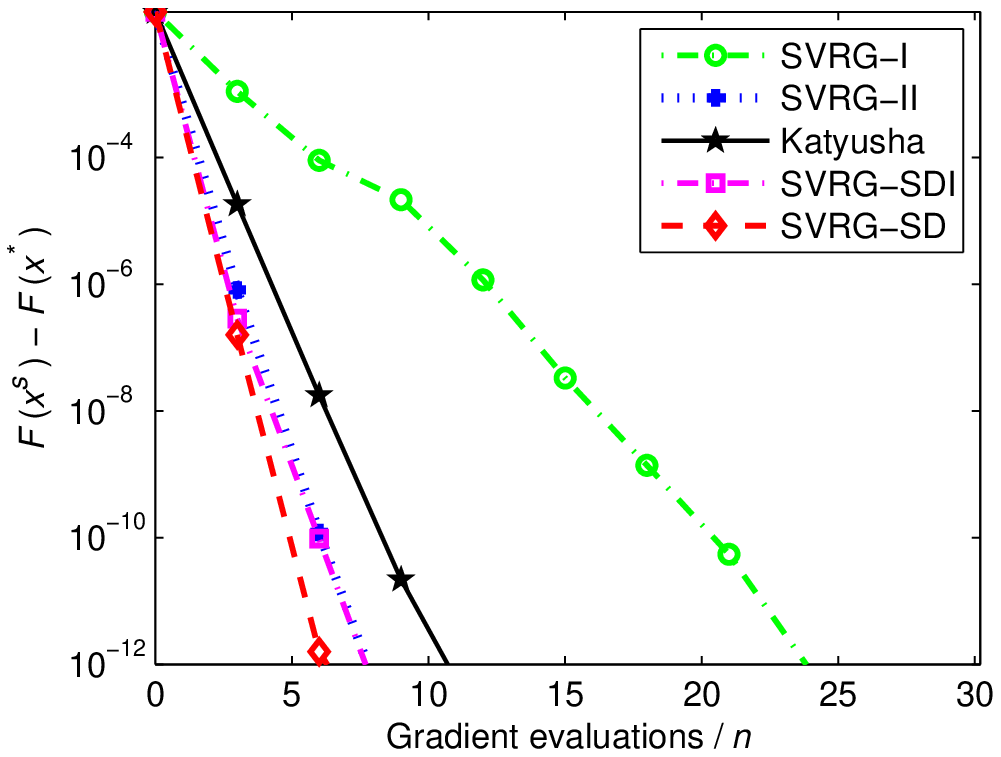}}\,
\subfigure[$\lambda\!=\!10^{-5}$]{\includegraphics[width=0.324\columnwidth]{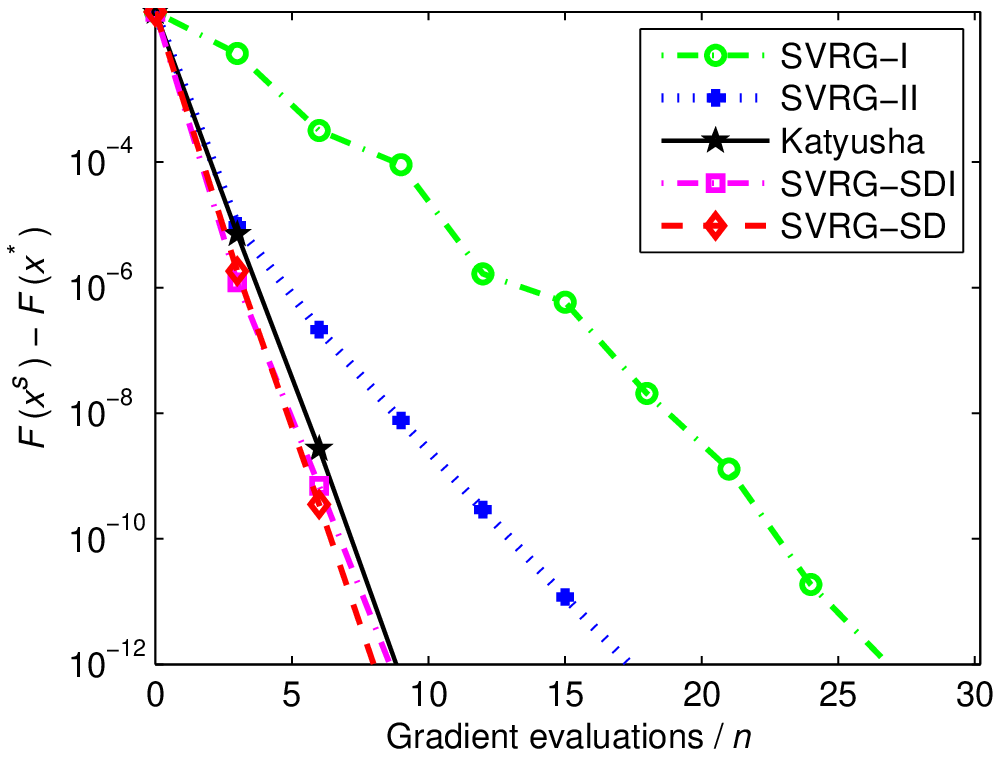}}
\subfigure[$\lambda\!=\!10^{-6}$]{\includegraphics[width=0.324\columnwidth]{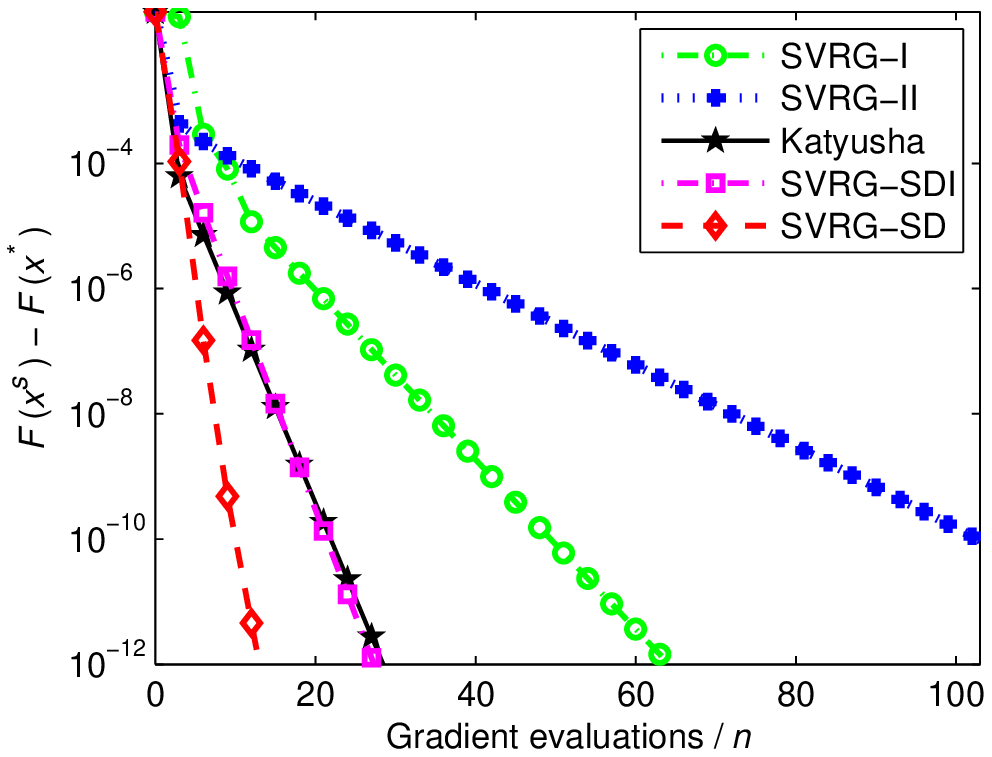}}\,
\subfigure[$\lambda\!=\!10^{-7}$]{\includegraphics[width=0.324\columnwidth]{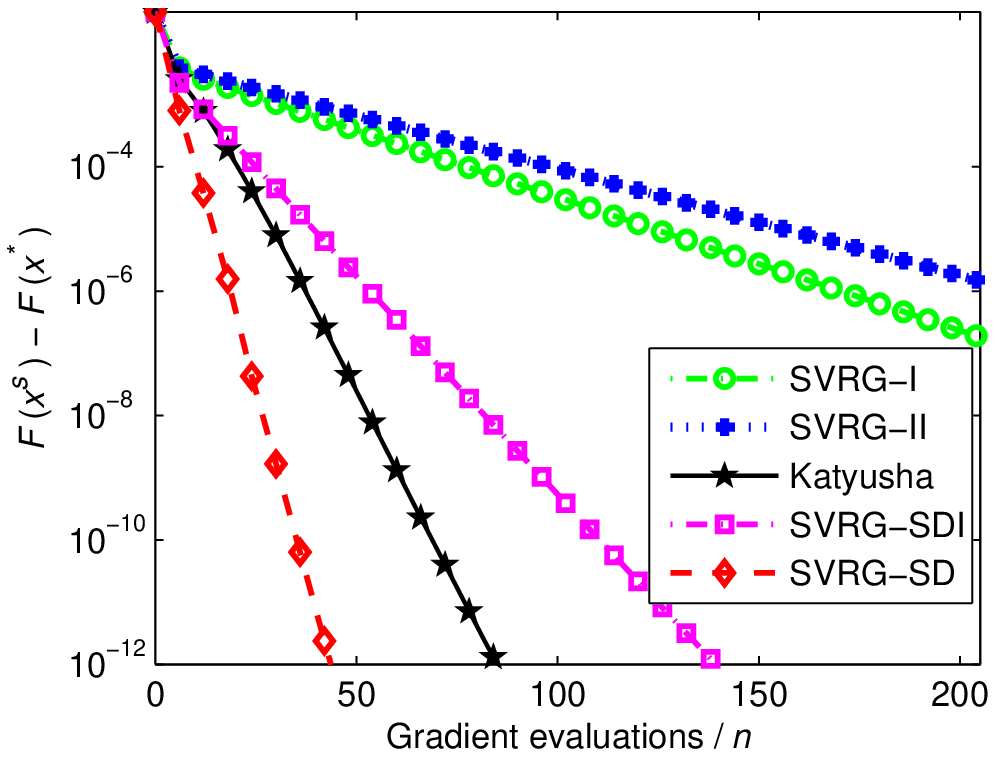}}\,
\subfigure[$\lambda\!=\!10^{-8}$]{\includegraphics[width=0.324\columnwidth]{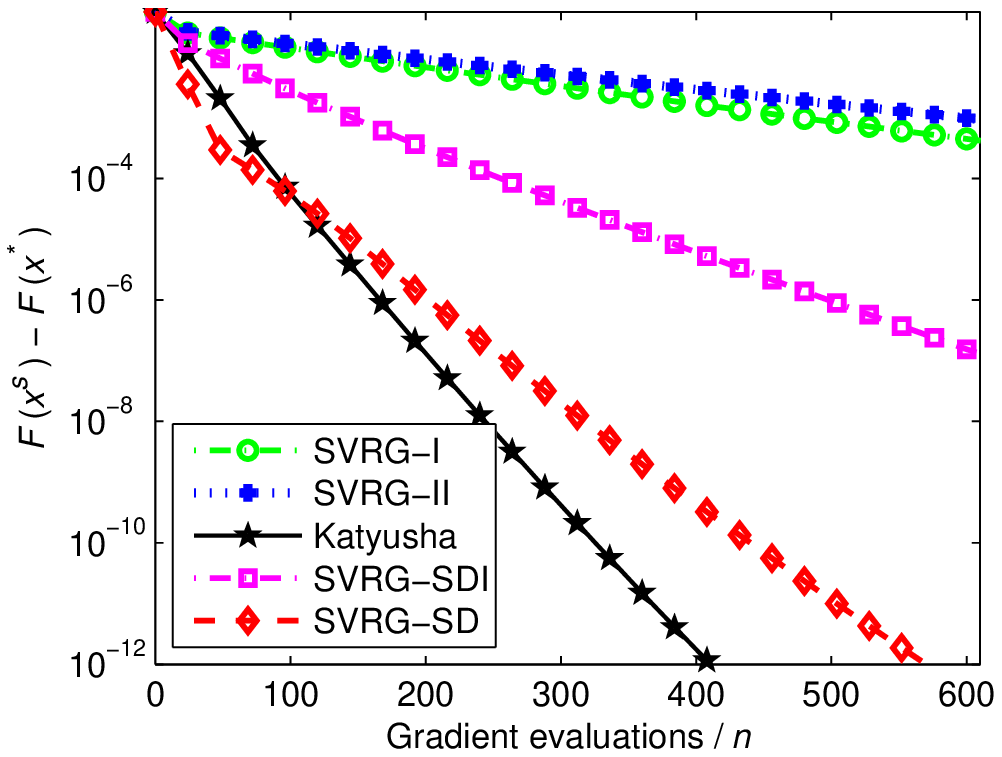}}
\subfigure[$\lambda\!=\!10^{-3}$]{\includegraphics[width=0.324\columnwidth]{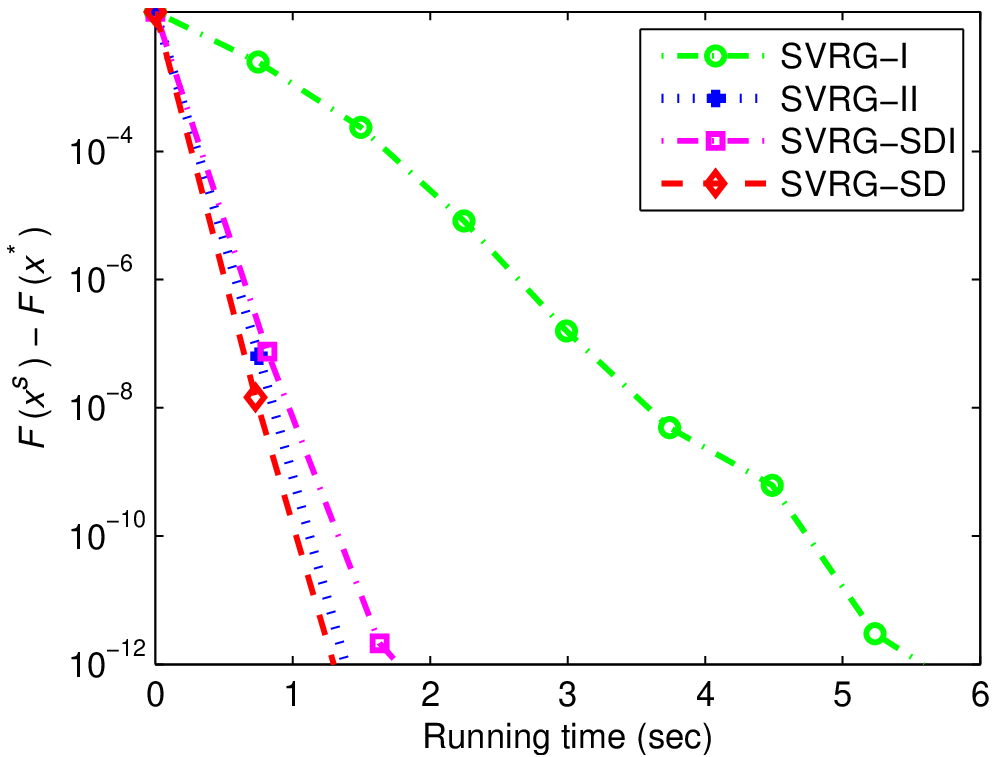}}\,
\subfigure[$\lambda\!=\!10^{-4}$]{\includegraphics[width=0.324\columnwidth]{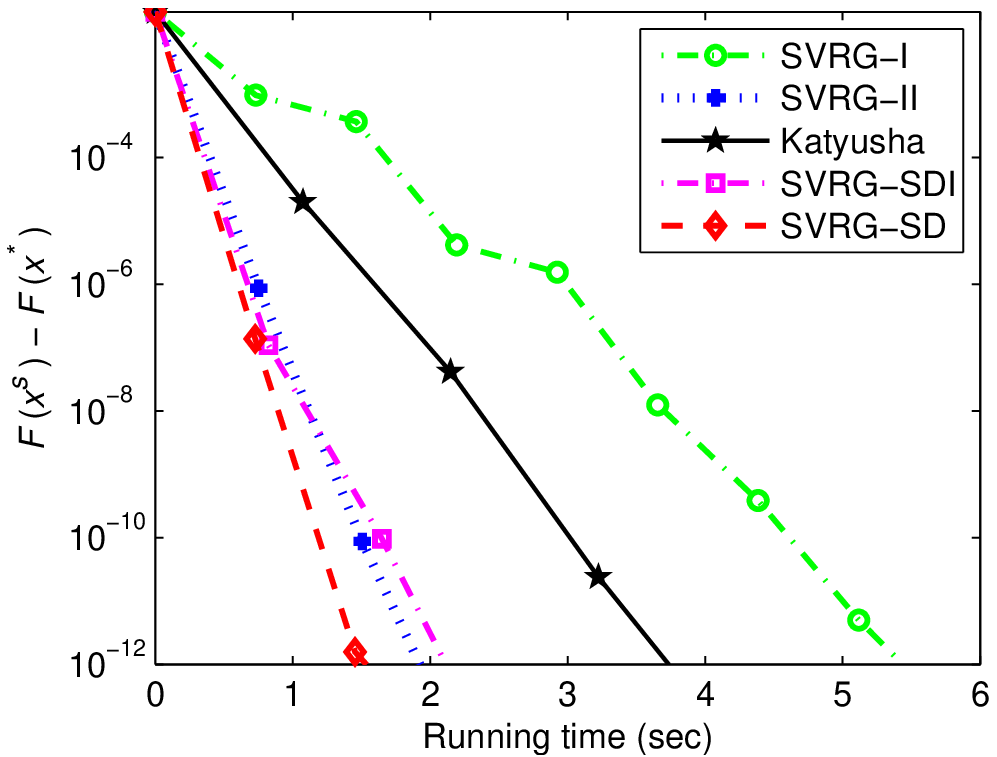}}\,
\subfigure[$\lambda\!=\!10^{-5}$]{\includegraphics[width=0.324\columnwidth]{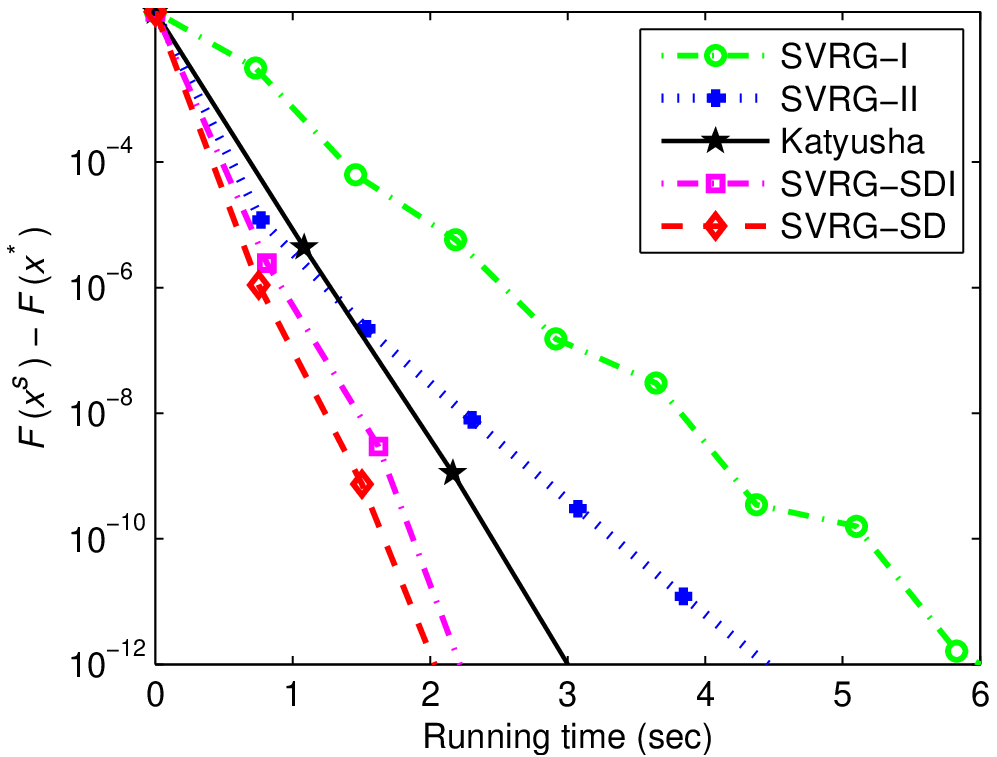}}
\subfigure[$\lambda\!=\!10^{-6}$]{\includegraphics[width=0.324\columnwidth]{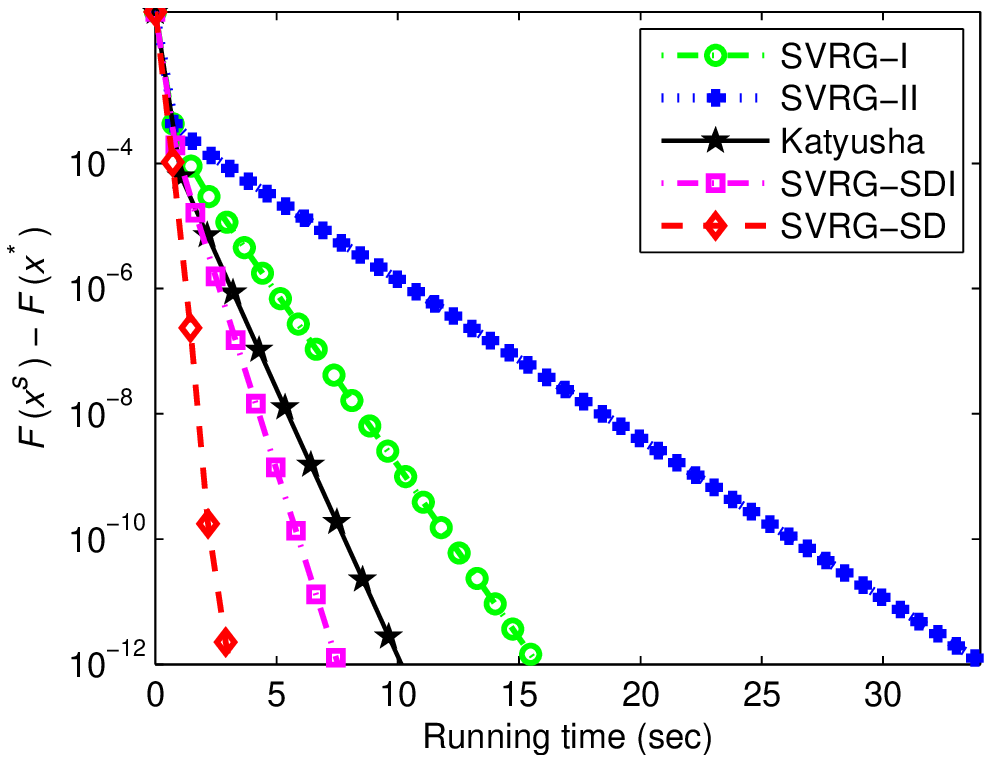}}\,
\subfigure[$\lambda\!=\!10^{-7}$]{\includegraphics[width=0.324\columnwidth]{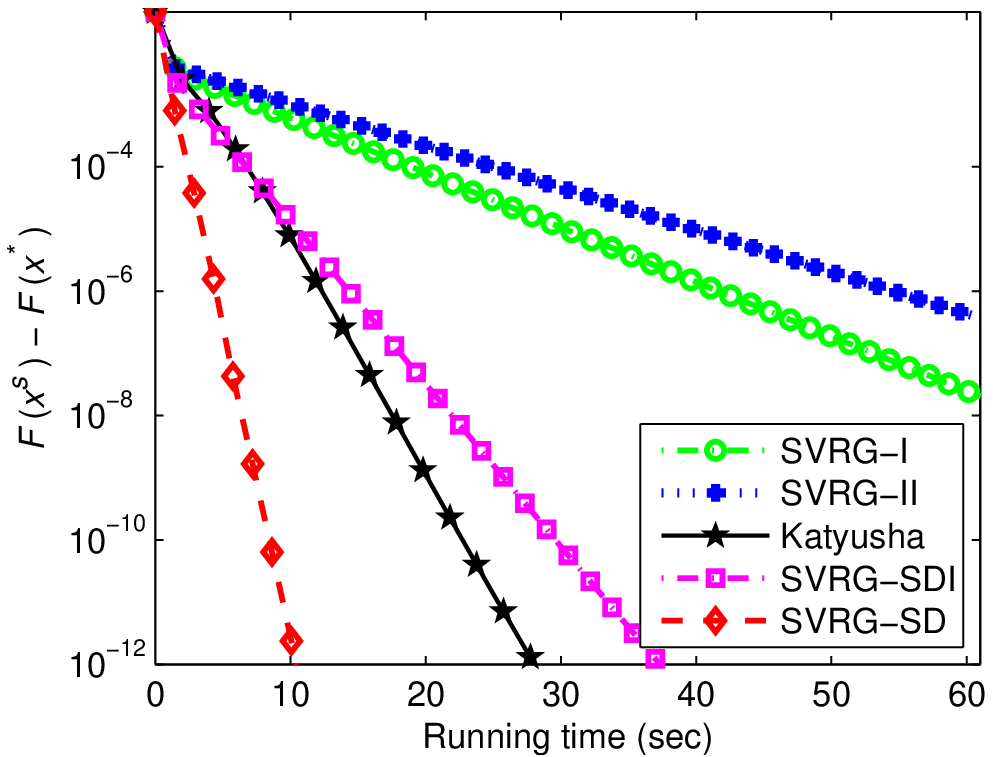}}\,
\subfigure[$\lambda\!=\!10^{-8}$]{\includegraphics[width=0.324\columnwidth]{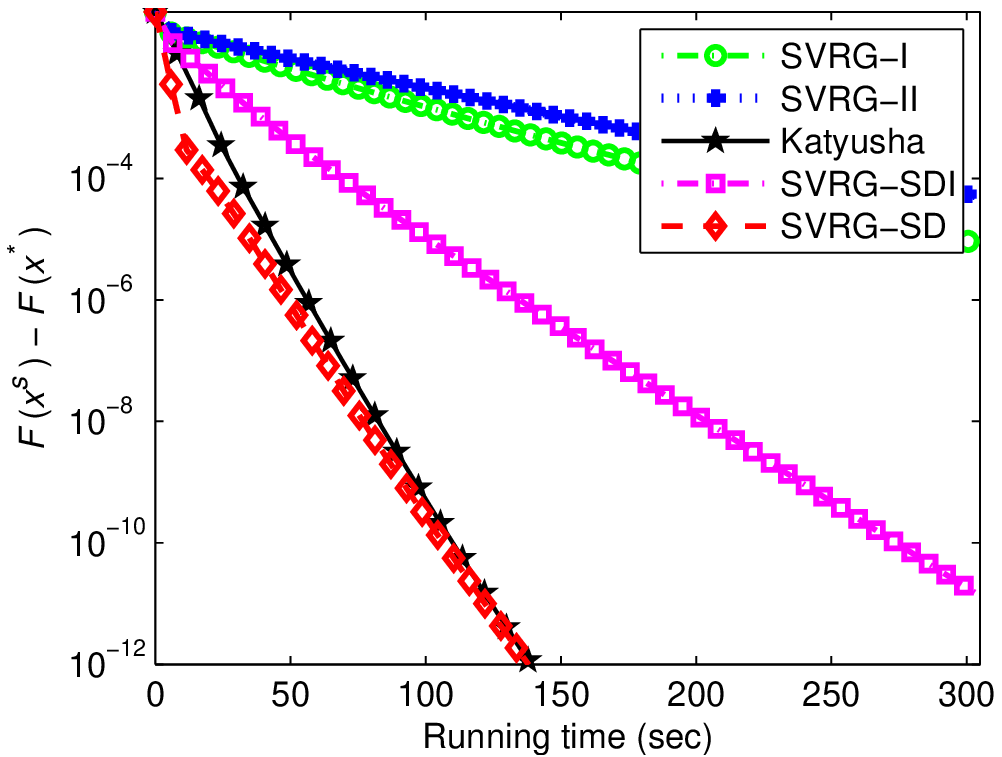}}
\caption{Comparison of SVRG-I, SVRG-II, Katyusha, and our SVRG-SD and SVRG-SDI methods for solving strongly convex ridge regression problems with different regularization parameters on the Covtype data set. The vertical axis represents the objective value minus the minimum, and the horizontal axis denotes the number of effective passes (a-f) or the running time (g-l).}
\label{fig_sim6}
\end{figure}

\small{
\bibliographystyle{plain}
\bibliography{nips2017}
}

\end{document}